\title{Beyond Perturbations: Learning Guarantees with Arbitrary Adversarial Test Examples}
\author{%
 Shafi Goldwasser\thanks{Author order is alphabetical.}%\Email{abc@sample.com}
 \\
 UC Berkeley  and MIT
 \AND
 Adam Tauman Kalai %\Email{noreply@microsoft.com}
 \\
 Microsoft Research
\AND
 Yael Tauman Kalai %\Email{xyz@sample.com}
 \\
 Microsoft Research and MIT
\AND
 Omar Montasser %\Email{xyz@sample.com}
 \\
 TTI Chicago
}
\newcommand{\bc}{\mathbf{c}}
\newcommand{\bw}{\mathbf{w}}
\newcommand{\bx}{\mathbf{x}}
\newcommand{\by}{\mathbf{y}}
\newcommand{\bz}{\mathbf{z}}
\newcommand{\tx}{{\tilde{x}}}
\newcommand{\tbx}{{\tilde{\mathbf{x}}}}
\newcommand{\btx}{{\tilde{\mathbf{x}}}}
\newcommand{\tmu}{{\tilde{\mu}}}
\newcommand{\barS}{\bar{S}}
\newcommand{\TV}{\mathsf{TV}}
\newcommand{\ERM}{\mathsf{ERM}}
\newcommand{\wwtp}{\mathsf{Rejectron}}
\newcommand{\wtp}{\mathsf{URejectron}}
\newcommand{\dis}{\mathsf{dis}}
\newcommand{\Dis}{\mathsf{Dis}}
\newcommand{\DIS}{\mathsf{DIS}}
\newcommand{\barf}{\bar{f}}
\newcommand\functions{C}
\newcommand\defeq{\coloneqq}
\newcommand\eps{\epsilon}
\newcommand\cA{\mathcal{A}}
\newcommand\cD{\mathcal{D}}
\newcommand\reals{\mathbb{R}}
\newcommand\err{\operatorname{err}}
\DeclareMathOperator*{\E}{\mathbb{E}}
\newcommand\nats{\mathbb{N}}
\newcommand\qnat{Q_{\text{nat}}}
\newcommand\qadv{Q_{\text{adv}}}
\newcommand\question{{\textcolor{gray}\vrectangleblack}}
\newcommand\rej{\operatorname{\question}}
\newtheorem{theorem}{Theorem}[section]
\newtheorem{definition}[theorem]{Definition}
\newtheorem{lemma}[theorem]{Lemma}
\newtheorem{corollary}[theorem]{Corollary}
\newtheorem{claim}[theorem]{Claim}
\begin{document}

\maketitle

\begin{abstract}
% abstract must be one paragraph
We present a transductive learning algorithm that takes as input training examples from a distribution $P$ and {\em arbitrary} (unlabeled) test examples, possibly chosen by an adversary. This is unlike prior work that assumes that test examples are small perturbations of $P$. Our algorithm outputs a \emph{selective classifier}, which abstains from predicting on some examples. By considering selective transductive learning, we give the first nontrivial guarantees for learning classes of bounded VC dimension with arbitrary train and test distributions---no prior guarantees were known even for simple classes of functions such as intervals on the line. In particular, for any function in a class $C$ of bounded VC dimension, we guarantee a low test error rate and a low rejection rate \emph{with respect to $P$}. Our algorithm is efficient given an Empirical Risk Minimizer (ERM) for $C$. Our guarantees hold even for test examples chosen by an unbounded white-box adversary. We also give guarantees for generalization, agnostic, and unsupervised settings.
\end{abstract}

\section{Introduction}
Consider binary classification where test examples are not from the training distribution. Specifically, consider learning a binary function $f: X \rightarrow \{0,1\}$ where training examples are assumed to be iid from a distribution $P$ over $X$, while the test examples are \textit{arbitrary}. This includes both the possibility that test examples are chosen by an adversary or that they are drawn from a distribution $Q\neq P$ (sometimes called ``covariate shift''). For a disturbing example of covariate shift, consider learning to classify abnormal lung scans. A system trained on scans prior to 2019 may miss abnormalities due to COVID-19 since there were none in the training data. As a troubling adversarial example, consider explicit content detectors which are trained to classify normal vs.\
explicit images. Adversarial spammers synthesize endless variations of explicit images that evade these detectors for purposes such as advertising and phishing \citep{Yuan2019StealthyPU}. %Of course algorithms that are robust to adversarial test examples are also robust to distributional shifts.

A recent line of work on adversarial learning has designed algorithms that are robust to  imperceptible perturbations. However, perturbations do not cover all types of test examples. In the explicit image detection example, \cite{Yuan2019StealthyPU} find adversaries using conspicuous image distortion techniques (e.g., overlaying a large colored rectangle on an image) rather than imperceptible perturbations. In the lung scan example, \cite{fang2020sensitivity} find noticeable signs of COVID in many scans.

In general, there are several reasons why learning with arbitrary test examples is actually impossible. First of all, one may not be able to predict the labels of test examples that are far from training examples, as illustrated by the examples in group (1) of Figure \ref{fig:example}. 
Secondly, as illustrated by group (2), given any classifier $h$, an adversary or test distribution $Q$ may concentrate on or near an error. High error rates are thus unavoidable since an adversary can simply repeat any single erroneous example they can find. This could also arise naturally, as in the COVID example, if $Q$ contains a  concentration of new examples near one another--individually they appear ``normal'' (but are suspicious as a group). This is true even under the standard \textit{realizable} assumption that the target function $f \in C$ is in a known class $C$ of bounded VC dimension $d=\mathrm{VC}(C)$. 

As we now argue, learning with arbitrary test examples requires \textit{selective classifiers} and \textit{transductive learning},  which have each been independently studied extensively. We refer to the combination as classification with \textit{redaction}, a term which refers to the removal/obscuring of certain information when documents are released. A \textit{selective classifier} (SC) is one which is allowed to abstain from predicting on some examples. In particular, it specifies both a classifier $h$ and a subset $S \subseteq X$ of examples to classify, and rejects the rest. Equivalently, one can think of a SC as $h|_S:X \rightarrow \{0,1,\question\}$ where $\question$ indicates $x \not\in S$, abstinence.
\[h|_S(x)\defeq \begin{cases}h(x)&\text{if }x \in S\\\question & \text{if } x \not\in S.\end{cases}\]
We say the learner \textit{classifies} $x$ if $x \in S$ and otherwise it rejects $x$. Following standard terminology, if $x\notin S$ (i.e., $h|_S(x)=\question$) we say the classifier \textit{rejects} $x$ (the term is not meant to indicate anything negative about the example $x$ but merely that its classification may be unreliable). We sat that $h|_S$ \textit{misclassifies} or \textit{errs} on $x$ if $h|_S(x)=1-f(x)$. There is a long literature on SCs, starting with the work of \cite{chow1957optimum} on character recognition. In standard classification, \textit{transductive learning} refers to the simple learning setting where the goal is to classify a given unlabeled test set that is presented together with the training examples \citep[see e.g.,][]{vapnik1998statistical}. 
We will also consider the generalization error of the learned classifier.

This raises the question: \textit{When are unlabeled test examples available in advance?} In some applications, test examples are classified all at once (or in batches). Otherwise, redaction can also be beneficial {\em in retrospect}. For instance, even if image classifications are necessary immediately, an offensive image detector may be run daily with rejections flagged for inspection; and images may later be blocked if they are deemed offensive. Similarly, if a group of unusual lung scans showing COVID were detected after a period of time, the recognition of the new disease could be valuable even in hindsight. Furthermore, in some applications, one cannot simply label a sample of test examples. For instance, in learning to classify messages on an online platform, test data may contain both public and private data while training data may consist only of public messages. Due to privacy concerns, labeling data from the actual test distribution may be prohibited. 

It is clear that a SC is necessary to guarantee few test misclassifications, e.g., if $P$ is concentrated on a single point $x$, rejection is necessary to guarantee few errors on arbitrary test points. However, no prior guarantees (even statistical guarantees) were known even for learning elementary classes such as intervals or halfspaces with arbitrary $P\neq Q$. This is because learning such classes is impossible without unlabeled examples. 

To illustrate how redaction (transductive SC) is useful, consider learning an interval $[a,b]$ on $X=\reals$ with arbitrary $P\neq Q$. This is illustrated below with (blue) dots indicating test examples:
\[\includegraphics[width=\textwidth]{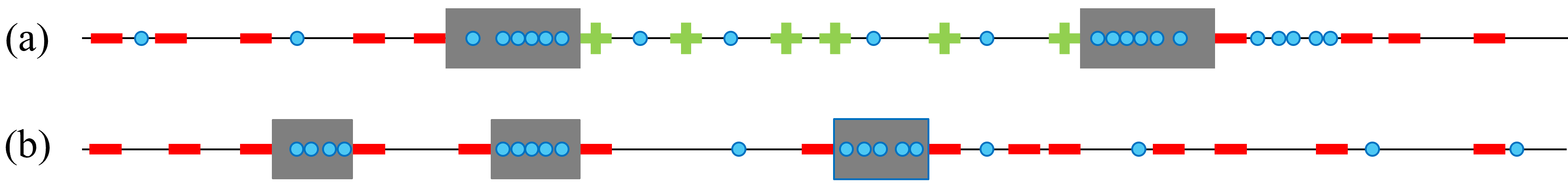}\]
With positive training examples as in (a), one can guarantee 0 test errors by rejecting the two (grey) regions adjacent to the positive examples. When there are no positive training examples,\footnote{Learning with an all-negative training set (trivial in standard learning) is a useful ``anomaly detection'' setting in adversarial learning, e.g., when one aims to classify illegal images without any illegal examples at train time or abnormal scans not present at train time.} as in (b), one can guarantee $\leq k$ test errors by rejecting any region with $>k$ test examples and no training examples; and predicting negative elsewhere. Of course, one can guarantee 0 errors by rejecting everywhere, but that would mean rejecting even future examples distributed like $P$. While our error objective will be an $\eps$ test error rate, our rejection objective will be more subtle since we cannot absolutely bound the test rejection rate. Indeed, as illustrated above, in some cases one should reject many test examples. %Classifying most of $P$ is meaningful in that, if the test examples are a mix of  ``genuine'' examples from $P$ and adversarially modified examples, then it means that almost all genuine examples would be classified. (\Cref{lem:convert} formalizes this.)

Note that our redaction model assumes that the target function $f$ remains the same at train and test times. This assumption holds in several (but not all) applications of interest. For instance, in explicit image detection, U.S.\ laws regarding what constitutes an illegal image are based solely on the image $x$ itself \citep{congress_1996}. Of course, if laws change between train and test time, then $f$ itself may change. \emph{Label shift} problems where $f$ changes from train to test is also important but not addressed here. Our focus is primarily the well-studied realizable setting, where $f \in C$, though we analyze an agnostic setting as well.

\paragraph{A note of caution.} Inequities may be caused by using training data that differs from the test distribution on which the classifier is used. For instance, in classifying a person's gender from a facial image, \cite{gendershades} have demonstrated that commercial classifiers are highly inaccurate on dark-skinned faces, likely because they were trained on light-skinned faces. In such cases, it is preferable to collect a more diverse training sample even if it comes at greater expense, or in some cases to abstain from using machine learning altogether. In such cases, $PQ$ learning should \textit{not} be used, as an unbalanced distribution of rejections can also be harmful.\footnote{We are grateful to an anonymous reviewer who pointed out that gender classification is an example of when \textit{not} to use $PQ$ learning.}

\subsection{Redaction model and guarantees}
Our goal is to learn a target function $f\in C$ of VC dimension $d$ with training distribution $P$ over $X$. In the redaction model, the learner first chooses $h\in C$ based on $n$ iid training examples $\bx \sim X^n$ and their labels $f(\bx)=\bigl(f(x_1),f(x_2),\ldots, f(x_n)\bigr)\in \{0,1\}^n$. (In other words, it trains a standard binary classifier.) Next, a ``white box'' \textit{adversary} selects $n$ arbitrary test examples $\tbx \in X^n$ based on all information including $\bx, f, h, P$ and the learning algorithm. 
Using the unlabeled test examples (and the labeled training examples), the learner finally outputs $S \subseteq X$. Errors are those test examples in $S$ that were misclassified, i.e., $h|_S(x)=1-f(x)$. 

Rather than jumping straight into the transductive setting, we first describe the simpler generalization setting. We define the $PQ$ model in which $\tbx \sim Q^n$ are drawn iid by \textit{nature}, for an arbitrary distribution $Q$. While it will be easier to quantify generalization error and rejections in this simpler model, the $PQ$ model does not permit a white-box adversary to choose test examples based on $h$. To measure performance here, define rejection and error rates for distribution $D$% or vector of examples $\bx = (x_1, \ldots, x_n)$
, respectively:
\begin{align}
\rej_D(S) &\defeq \Pr_{x\sim D}[x \not\in S] %& \rej_{\bx} &\defeq \frac{1}{n}\bigl|\{i\in[n]:  x_i\notin S\}\bigr|
\label{eq:def1}\\
\err_D(h|_S) &\defeq \Pr_{x\sim D}[h(x)\neq f(x) \wedge x \in S] %& \err_{\bx}&\defeq \frac{1}{n} |\{i\in[n]:  f(x_i)\neq h(x_i) ~\text{ and }~  x_i \in S\}|
\label{eq:def2}
\end{align}
We write $\rej_D$ and $\err_D$ when $h$ and $S$ are clear from context.
We extend the definition of PAC learning to $P\neq Q$ as follows:
%the case where $P$ is not assumed to be identical to $Q$.
\begin{definition}[PQ learning]\label{def:pq}
Learner $L$ $(\eps, \delta, n)$-PQ-learns $C$ if for any distributions $P, Q$ over $X$ and any $f \in C$, its output $h|_S=L(\bx, f(\bx), \tbx)$ satisfies
\[\Pr_{\bx \sim P^n, \tbx \sim Q^n}\left[\rej_P+\err_Q \leq \eps \right]\geq 1-\delta.\]
$L$ PQ-learns $C$ if $L$ runs in polynomial time and if there is a polynomial $p$ such that $L$ $(\eps, \delta, n)$-PQ-learns $C$ for every $\eps, \delta>0, n \geq p(1/\eps, 1/\delta)$.
\end{definition}
Now, at first it may seem strange that the definition
bounds $\rej_P$ rather than $\rej_Q$, but as mentioned $\rej_Q$ cannot be bound absolutely. Instead, it can be bound relative to $\rej_P$ and the \textit{total variation distance} (also called statistical distance) $|P-Q|_{\TV} \in [0,1]$, as follows:
\[\rej_Q \leq \rej_P + |P-Q|_{\TV}.\]
This new perspective, of bounding the rejection probability of~$P$, as opposed to~$Q$, facilitates the analysis. Of course when $P=Q$, $|P-Q|_\TV=0$ and $\rej_Q=\rej_P$, and when $P$ and $Q$ have disjoint supports (no overlap), then $|P-Q|_\TV=1$ and the above bound is vacuous. We also discuss tighter bounds relating $\rej_Q$ to $\rej_P$.

We provide two redactive learning algorithms: a supervised algorithm called $\wwtp$, and an unsupervised algorithm $\wtp$.
$\wwtp$ takes as input~$n$ labeled training data $(\bx,\by)\in X^n\times \{0,1\}^n$ and~$n$ test data $\tbx\in X^n$ (and an error parameter~$\epsilon$).  It can be implemented efficiently using any $\ERM_C$ oracle that outputs a function $c \in C$ of minimal error on any given set of labeled examples. It is formally presented in Figure \ref{fig:algs}. At a high level, it chooses $h=\ERM(\bx,\by)$ and chooses $S$ in an iterative manner.  It starts with $S=X$ and then iteratively chooses $c\in C$ that disagrees significantly with $h|_S$ on $\tbx$ but agrees with $h|_S$ on $\bx$; it then rejects all $x$'s such that $c(x)\neq h(x)$.  As we show in Lemma~\ref{lem:eff}, choosing $c$ can be done efficiently given oracle access to $\ERM_C$.

\Cref{wow:real-gen} shows that $\wwtp$ PQ-learns any class $C$ of bounded VC dimension $d$, specifically with $\eps = \tilde{O}(\sqrt{d/n})$. (The $\tilde{O}$ notation hides logarithmic factors including the dependence on the failure probability $\delta$.) This is worse than the standard $\eps=\tilde{O}(d/n)$ bound of supervised learning when $P=Q$, though Theorem~\ref{thm:lower-gen} shows this is necessary with an $\Omega(\sqrt{d/n})$ lower-bound for $P\neq Q$.

Our unsupervised learning algorithm $\wtp$, formally presented in Figure~\ref{fig:nalgs}, computes $S$ only from unlabeled training and test examples, and has similar guarantees (\Cref{cor:wtp-real-trans}). The algorithm tries to distinguish training and test examples and then rejects whatever is almost surely a test example.
More specifically, as above, it chooses $S$ in an iterative manner, starting with $S=X$.  It (iteratively) chooses {\em two} functions $c,c'\in C$ such that $c|_S$ and $c'|_S$ have high disagreement on $\tbx$ and low disagreement on~$\bx$, and rejects all $x$'s on which $c|_S,c'|_S$ disagree.  As we show in Lemma~\ref{lem:eff-wtp}, choosing $c$ and $c'$ can be done efficiently given a (stronger) $\ERM_{\DIS}$ oracle for the class $\DIS$ of disagreements between $c,c' \in C$.
We emphasize that $\wtp$ can also be used for multi-class learning as it does not use training labels, and can be paired with any classifier trained separately. This advantage of $\wtp$ over $\wwtp$ comes at the cost of requiring a stronger base classifier to be used for $\ERM$, and may lead to examples being unnecessarily rejected.

In Figure \ref{fig:example} we illustrate our algorithms for the class $C$ of halfspaces. A natural idea would be to train a halfspace to distinguish unlabeled training and test examples---intuitively, one can safely reject anything that is clearly distinguishable as test without increasing $\rej_P$. However, this on its own is insufficient.
%Further, if one cannot distinguish samples from $P$ from samples from $Q$ by $C$, then does a classifier from $C$ trained on $P$ have low error on $Q$? Not exactly
See for example group (2) of examples in Figure \ref{fig:example}, which cannot be distinguished from training data by a halfspace. This is precisely why having test examples is absolutely necessary.  Indeed, it allows us to use an ERM oracle to $C$ to PQ-learn $C$.

We also present:
\paragraph{Transductive analysis}
A similar analysis of $\wwtp$ in a transductive setting gives error and rejection bounds directly on the test examples. The bounds here are with respect to a stronger white-box adversary who need not even choose a test set $\tbx$ iid from a distribution.
% \fullversion{
% In order to bound the rejection rate, we introduce a new adversarial model in which the adversary observes ``real'' test examples $\bz\sim P^n$ and can modify any number of them to form an arbitrary test set $\tbx \in X^n$. For arbitrary $\bx, \bx' \in X^n$, define
% \[\rej_\bx = \frac{\bigl|\{i: h|_S(x_i)=\question\}\bigr|}{n}, ~~\err_\bx = \frac{1}{n}\sum_{i: x_i \in S}\bigl|h(x_i)-f(x_i)\bigr|, ~~\Delta(\bp, \tbx) \defeq \frac{|\{i: x_i \neq x_i'\}\bigr|}{n}
% \]
% Note that $\Delta(\bx, \tbx)$ serves as the analog of $|P-Q|_\TV$. %and one has by definition,
% %\[\rej_\tbx \leq \rej_\bx + \Delta(\bx, \bx').\]
% Analogous to the generalization bounds, we bound $\rej_\bz$, which implies:
% \begin{theorem}[Informal transductive bound]\label{thm:real-trans-informal}
% For any class $C$ of VC dimension $d$, dist.~$P$ over $X$, $\delta>0$, $n \geq 1$, and any $f \in C$, for $h|_S$ output by $\mathsf{SC}(\bx, f(\bx), \tbx)$:
% \[\Pr_{\bx,\bz\sim P^n}\left[\forall \tbx \in X^n:~ \err_{\tbx}\leq \eps \text{ and } \rej_\tbx \leq \eps + \Delta(\bp, \tbx) \right]\geq 1-\delta.\]
% for $\eps=\tilde{O}\left(\sqrt{d/n}\right)$. Further, $\mathsf{SC}$ can be implemented using $O(\sqrt{n})$ $\ERM$ calls.
% \end{theorem}
% Again, since $h$ is a deterministic function of $(\bx, f(x))$, this shows that $\mathsf{SC}$ is robust to a white-box adversary who chooses $\tbx$ with knowledge of $P, h, f$, and $\bx$. }
Such an adversary chooses the test set with knowledge of $P, f, h$ and $\bx$. In particular,  first $h$ is chosen based on $\bx$ and $\by$; then the adversary chooses the test set $\tbx$ based on all available information; and finally, $S$ is chosen. We introduce a novel notion of \textit{false rejection}, where we reject a test example that was in fact chosen from $P$ and not modified by an adversary. \Cref{wow:real-trans} gives bounds that are similar in spirit to \Cref{wow:real-gen} but for the harsher transductive setting.

\paragraph{Agnostic bounds}
Thus far, we have considered the realizable setting where the target $f \in C$. In agnostic learning (\cite{Kearns92towardefficient}), there is an arbitrary distribution $\mu$ over $X \times \{0,1\}$ and the goal is to learn a classifier that is nearly as accurate as the best classifier in $C$. In our setting, we assume that there is a known $\eta\geq 0$ such that the train and test distributions $\mu$ and $\tilde{\mu}$ over $X \times \{0,1\}$ satisfy that  there is some function $f \in C$ that has error at most $\eta$ with respect to both $\mu$ and $\tilde{\mu}$. Unfortunately, we show that in such a setting one cannot guarantee less than $\Omega(\sqrt{\eta})$ errors and rejections, but we show that $\wwtp$ nearly achieves such guarantees.

\paragraph{Experiments}
As a proof of concept, we perform simple controlled experiments on the task of handwritten letter classification using lower-case English letters from the EMNIST dataset (\cite{cohen2017emnist}). In one setup, to mimic a spamming adversary, after a classifier $h$ is trained, test examples are identified on which $h$ errs and are repeated many times in the test set. Existing SC algorithms (no matter how robust) will fail on such an example since they all choose $S$ without using unlabeled test examples---as long as an adversary can find even a single erroneous example, it can simply repeat it. In the second setup, we consider a natural test distribution which consists of a mix of lower- and upper-case letters, while the training set was only lower-case letters.
The simplest version of $\wtp$ achieves high accuracy while rejecting mostly adversarial or capital letters. %We compare to the state-of-the-art SC algorithm SelectiveNet \citep{GeifmanE19}. SelectiveNet performs exceptionally on the second setup, achieving low error and abstaining on almost exclusively upper-case letters. However, it of course errs on all repetitions of adversarial examples, as will all existing SC algorithms (no matter how robust) since they all choose $S$ without using unlabeled test examples. As long as an adversary can find even a single erroneous example, it can simply repeat it.

\begin{figure}[t]
    \centering
    \includegraphics[width=\textwidth]{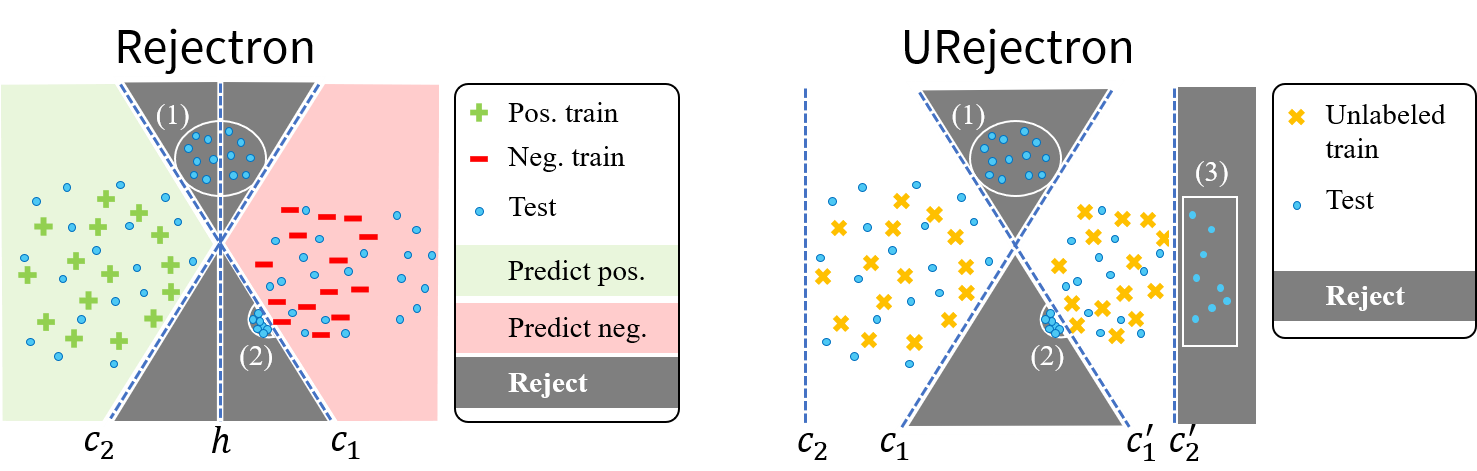}
    \caption{Our algorithm (and unsupervised variant)  for learning $C$=halfspaces.
    $\wwtp$ (left) first trains $h$ on labeled training data, then finds other candidate classifiers $c_1, c_2$, such that $h$ and $c_i$ have high disagreement on $\tbx$ and low disagreement on $\bx$, and rejects examples where $h$ and $c_i$ disagree.
     $\wtp$ (right) aims to distinguish \emph{unlabeled train} and test examples using pairs of classifiers $c_i, c_i'$ that agree on training data but disagree on many tests.
    Both reject:
    (1) clearly unpredictable examples which are very far from train and (2) a suspiciously dense cluster of tests which might all be positive despite being close to negatives.  $\wtp$ also rejects (3).
    }
    \label{fig:example}
\end{figure}
\paragraph{Organization} We next review related work in \Cref{sec:related}.  We present the learning setup in \Cref{sec:pqpac}. Our algorithm and guarantees are summarized in \Cref{sec:main}, followed by experiments (\Cref{sec:experiments}). Further discussion and future work are deferred to \Cref{sec:conclude}.

\section{Related work}\label{sec:related}

The redaction model combines SC and transductive learning, which have each been extensively studied, separately. We first discuss prior work on these topics, which (with the notable exception of online SC) has generally been considered when test examples are from the same distribution as training examples.

\paragraph{Selective classification} Selective classification go by various names including ``classification with a reject option'' and ``reliable learning.'' To the best of our knowledge, prior work has not considered SC using unlabeled samples from $Q\neq P$. Early learning theory work by \cite{RS88b} required a guarantee of 0 test errors and few rejections. However, \cite{kivinen1990reliable} showed that, for this definition, even learning rectangles under uniform distributions $P=Q$ requires exponential number of examples (as cited by \cite{hopkins2019power} which like much other work therefore makes further assumptions on $P$ and $Q$).
Most of this work assumes the same training and test distributions, without adversarial modification. \cite{kanade2009reliable} give a SC reduction to an agnostic learner (similar in spirit to our reduction to $\ERM$) but again for the case of $P=Q$. 

A notable exception is the work in \textit{online} SC, where an \textit{arbitrary sequence} of examples is presented one-by-one with immediate error feedback. This work includes the ``knows-what-it-knows'' algorithm \citep{li2011knows}, and  \cite{sayedi2010trading} exhibit an interesting trade-off between the number of mistakes and the number of rejections in such settings. However, basic classes such as intervals on the line are impossible to learn in these harsh online formulations. Interestingly, our division into labeled train and unlabeled test seems to make the problem easier than in the harsh online model.

\paragraph{Transductive (and semi-supervised) learning.} In transductive learning, the classifier is given test examples to classify all at once or in batches, rather than individually \citep[e.g.,][]{vapnik1998statistical}. Performance is measured with respect to the test examples. It is related to \textit{semi-supervised learning}, where unlabeled examples are given but performance is measured with respect to future examples from the same distribution. Here, since the assumption is that training and test examples are iid, it is generally the case that the unlabeled examples greatly outnumber the training examples, since otherwise they would provide limited additional value.

We now discuss related work which considers $Q \neq P$, but where classifiers must predict everywhere without the possibility of outputting $\question$.

\paragraph{Robustness to Adversarial Examples} There is ongoing effort to devise methods for learning predictors that are robust to adversarial examples \citep{szegedy2013intriguing,biggio2013evasion,goodfellowICLR15} at test time. Such work typically assumes that the adversarial examples are perturbations of honest examples chosen from~$P$.  %Namely, they assume that there is an adversary $\mathcal{A}$ that perturbs an honest examples $x\sim P$, where $\mathcal{A}(x)\subseteq X$ represents the set of possible adversarial perturbations. For example, $\mathcal{A}$ could represent adversarial examples of distance at most $r$ under the $\ell_{\infty}$ metric which is considered in many applications.
The main objective is to learn a classifier that has high robust accuracy, meaning that with high probability, the classifier will answer correctly even if the test point was an adversarially perturbed example. Empirical work has mainly focused on training deep learning based classifiers to be more robust \citep[e.g.,][]{DBLP:conf/iclr/MadryMSTV18, wong2018provable, zhang2019theoretically}. \cite{kang2019testing} consider the
fact that perturbations may not be known in advance, and some work \citep[e.g.,][]{pang2018towards} addresses the problem of identifying adversarial examples.  We emphasize that as opposed to this line of work, we consider {\em arbitrary} test examples and use SC.

% Recent theoretical work has studied the question of robust learnability, namely what learning rules should be used for robust learning and how much data is needed in order to robustly generalize \citep[e.g.,][]{schmidt2018adversarially, bubeck2019adversarial,cullina2018pac,khim2018adversarial,yin2019rademacher,pmlr-v99-montasser19a}.
% It is easy to see that if the adversary is not restricted by any means and the classifier is forced to make a $\{0,1\}$ classification on each test example, then the classifier can do arbitrarily badly. In this work, we consider a setting where the adversary is not restricted, i.e., we \emph{do not} make assumptions on the model of adversarial perturbations where $\mathcal{A}(x)=X$ (recent empirical work motivates studying such setting \citep{kang2019testing}). We instead allow the classifier to reject examples by outputting $\question$. This would be relevant in applications where it is not realistic to model adversarial examples as perturbations w.r.t.~some metric, and in applications where we would like to detect adversarial examples, and not necessarily make correct predictions on them, which is still a challenging task in practice \citep{carlini2017adversarial}. Our algorithms also require a sufficiently large set of unlabeled test examples. The use of unlabeled data for improving robustness has also been empirically explored recently \citep[e.g.,][]{carmon2019unlabeled,stanforth2019labels,zhai2019adversarially}.

Detecting adversarial examples has been studied in practice, but \cite{carlini2017adversarial} study ten proposed heuristics and are able to bypass all of them. Our algorithms also require a sufficiently large set of unlabeled test examples. The use of unlabeled data for improving robustness has also been empirically explored recently \citep[e.g.,][]{carmon2019unlabeled,stanforth2019labels,zhai2019adversarially}.

In work on real-world adversarial images, \cite{Yuan2019StealthyPU} find adversaries using highly visible transformations rather than imperceptible perturbations. They categorize seven major types of such transformations and write:
\begin{quote}
``Compared with the adversarial examples studied by the ongoing adversarial learning, such adversarial explicit  content does  not  need  to  be  optimized  in  a  sense that  the  perturbation  introduced  to  an  image  remains  less perceivable  to  humans.... today’s cybercriminals likely still rely on a set of predetermined obfuscation  techniques... not gradient descent.'' 
\end{quote}

\paragraph{Covariate Shift} The literature on learning with covariate shift is too large to survey here, see, e.g., the book by \cite{quionero2009dataset} and the references therein. To achieve guarantees, it is often assumed that the support of $Q$ is contained in the support of $P$. Like our work, many of these approaches use unlabeled data from $Q$ \citep[e.g.,][]{huang2007correcting, hardCS}. \cite{hardCS} show that  learning with covariate-shift is intractable, in the worst case, without such assumptions. In this work we overcome this negative result, and obtain guarantees for arbitrary~$Q$, using SC. %\cite{liu2014robust} design a robust covariate-shift algorithm that minimizes worst-case errors.
In summary, prior work on covariate shift that guarantees low test/target error requires strong assumptions regarding the distributions. This motivates our model of covariate shift with rejections.

% \fullversion{
% \section{Mathematical Preliminaries}\label{sec:preliminaries}

% Let  $X$ be a finite or countably-infinite set\footnote{This assumption is merely for simplicity to avoid measure-theoretic issues.} of unlabeled examples, and $Y=\{0,1\}$. Let $Y^X$ denote the set of functions from $X$ to $Y$. The generalization error and rejection rates have already been defined in \cref{eq:gen_err,eq:gen_rej}. We use boldface to denote vectors. For any $n\in \nats$, we define the empirical error of $h|_S$ over $n$ test examples $\bz=(z_1,z_2,\ldots, z_n)\in X^n$ by
% \begin{equation}\label{eq:err}
% \err_{\bz}(h|_S, f)=\frac{1}{n}\sum_{i\in [n]: z_i \in S} |\{i\in[n]:  f(z_i)\neq h(z_i)\}|.
% \end{equation}
% % and in particular
% % \[\err_{\bz}(h,f)=\frac{1}{n} \bigl|\{i\in[n]:  f(z_i)\neq h(z_i)\}\bigr|.
% % \]
% We define the empirical rejection rate of $h|_S$ over $\bz=(z_1,\ldots,z_n)\in X^n$ by
% \[
% \rej_{\bz}(S) =  \frac{1}{n}
% \bigl|\{i\in[n]:  z_i\notin S\}\bigr|.
% \]
% Let $\functions\subseteq Y^X$ denote a family of classifiers $c: X\rightarrow \{0,1\}$ of VC dimension $d$. For any classifier $g \in Y^X$, and for any $\bx=(x_1,\ldots,x_n)\in X^n$, we denote by $c({\bf x})=(g(x_1),\ldots,g(x_n)).$ }

\section{Preliminaries and notation}
Henceforth, we assume a fixed class $C$ of $c:X\rightarrow Y$ from domain $X$ to $Y=\{0,1\}$,\footnote{For simplicity, the theoretical model is defined for binary classification, though our experiments illustrate a multi-class application. To avoid measure-theoretic issues, we assume $X$ is countably infinite or finite.} and let $d$ be the VC dimension of $C$.
Let $\log(x)=\log_2(x)$ denote the base-2 logarithm and $\ln(x)$ the natural logarithm. The set of functions from $X$ to $Y$ is denoted by $Y^X$. Let the set of subsets of $X$ be denoted by $2^X$. Finally, $[n]$ denotes $\{1,2,\ldots,n\}$ for any natural number $n \in \nats$.

\section{Learning with redaction}\label{sec:pqpac}
We now describe the two settings for SC. We use the same algorithm in both settings, so it can be viewed as two justifications for the same algorithm. The PQ model provides guarantees with respect to future examples from the test distribution, while the transductive model provides guarantees with respect to arbitrary test examples chosen by an all-powerful adversary. Interestingly, the transductive analysis is somewhat simpler and is used in the PQ analysis. 

\subsection{PQ learning}
In the \emph{PQ} setting, an SC learner $h|_S = L(\bx, f(\bx), \tbx)$ is given $n$ labeled examples $\bx = (x_1, \ldots, x_n)$ drawn iid $\bx \sim P^n$, labels $f(\bx)=(f(x_1),\ldots, f(x_n))$ for some unknown $f \in C$, and $n$ unlabeled examples $\tbx \sim Q^n$. $L$ outputs $h: X \rightarrow Y$ and $S \subseteq X$. The adversary (or nature) chooses $Q$ based only on $f, P$ and knowledge of the learning algorithm $L$.
The definition of PQ learning is given in \Cref{def:pq}. 
Performance is measured in terms of $\err_Q$ on future examples from $Q$ and $\rej_P$ (rather than the more obvious $\rej_Q)$. Rejection rates on $P$ (and $Q$) can be estimated from held out data, if so desired. The quantities $\rej_P, \rej_Q$ can be related and a small $\rej_P$ implies few rejections on future examples from $Q$ wherever it ``overlaps'' with $P$ by which we mean $Q(x) \leq \Lambda \cdot P(x)$ for some constant $\Lambda$.
\begin{lemma}\label{lem:convert}
For any $S\subseteq X$ and distributions $P,Q$ over $X$:
\begin{equation}\label{eq:tv_is_good}
\rej_Q(S) \leq \rej_P(S) + |P-Q|_\TV.
\end{equation}
Further, for any $\Lambda \geq 0,$
\begin{equation}\label{eq:where_errors_restated}
\Pr_{x \sim Q}\bigl[x \not\in S ~\text{ and }~ Q(x) \leq \Lambda P(x)\bigr] \leq \Lambda \rej_P(S).\end{equation}
\end{lemma}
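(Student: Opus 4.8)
Both inequalities are elementary; the work is just unwinding the definitions. Throughout, write $\bar S \defeq X \setminus S$, so that by \eqref{eq:def1} we have $\rej_D(S) = \Pr_{x \sim D}[x \in \bar S] = D(\bar S)$ for any distribution $D$.

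For \eqref{eq:tv_is_good}, the plan is to apply the variational characterization of total variation distance. Recall that $|P-Q|_\TV = \sup_{A \subseteq X} \bigl(Q(A) - P(A)\bigr)$ (equivalently $\tfrac12 \sum_{x} |P(x)-Q(x)|$; the two agree). Taking $A = \bar S$ gives $Q(\bar S) - P(\bar S) \leq |P-Q|_\TV$, i.e. $\rej_Q(S) \leq \rej_P(S) + |P-Q|_\TV$. If the paper prefers to avoid quoting the variational formula, one can instead argue directly: $Q(\bar S) - P(\bar S) = \sum_{x \in \bar S}\bigl(Q(x)-P(x)\bigr) \leq \sum_{x : Q(x) > P(x)} \bigl(Q(x)-P(x)\bigr) = |P-Q|_\TV$, where the last equality is the standard fact that the positive part of $Q-P$ has mass exactly $|P-Q|_\TV$.

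For \eqref{eq:where_errors_restated}, let $A \defeq \{x \in \bar S : Q(x) \leq \Lambda\, P(x)\}$. Then
\[
\Pr_{x \sim Q}\bigl[x \not\in S \text{ and } Q(x) \leq \Lambda P(x)\bigr] \;=\; \sum_{x \in A} Q(x) \;\leq\; \sum_{x \in A} \Lambda\, P(x) \;=\; \Lambda\, P(A) \;\leq\; \Lambda\, P(\bar S) \;=\; \Lambda\, \rej_P(S),
\]
where the first inequality uses the pointwise bound $Q(x)\le \Lambda P(x)$ valid for $x \in A$, and the second uses $A \subseteq \bar S$ together with $\Lambda \geq 0$.

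There is no real obstacle here. The only point requiring a moment's care is fixing the convention for $|P-Q|_\TV$ so that $Q(\bar S) - P(\bar S) \leq |P-Q|_\TV$ holds as stated; with the paper's normalization $|P-Q|_\TV \in [0,1]$ this is exactly the (signed) one-sided bound above, so it is immediate. One should also note that the sums over $x$ are well-defined because $X$ is assumed countable, as stated in the preliminaries.
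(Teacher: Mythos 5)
Your proof of \eqref{eq:where_errors_restated} is essentially identical to the paper's: restrict the sum to $\bar S \cap \{x : Q(x) \leq \Lambda P(x)\}$, bound $Q(x)$ by $\Lambda P(x)$ pointwise, then enlarge the sum to all of $\bar S$.

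For \eqref{eq:tv_is_good} you take a genuinely different route. The paper argues via the maximal coupling of $P$ and $Q$: sample $\tilde x \sim Q$ by drawing $x \sim P$ and then changing it with probability $|P-Q|_\TV$, so $\Pr[\tilde x \not\in S] \leq \Pr[x \not\in S] + \Pr[x \neq \tilde x]$. You instead invoke the variational (set-difference) characterization $|P-Q|_\TV = \sup_{A\subseteq X}\bigl(Q(A)-P(A)\bigr)$ and take $A = \bar S$, or equivalently note directly that $\sum_{x\in\bar S}\bigl(Q(x)-P(x)\bigr)$ is at most the total positive mass of $Q-P$. Both arguments are correct and standard; yours is the more self-contained calculation (it never needs the existence of a maximal coupling, only the definition of TV distance as a supremum over events), while the paper's phrasing arguably gives a crisper probabilistic intuition for why rejections under $Q$ can exceed those under $P$ by at most $|P-Q|_\TV$. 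Your note about fixing the normalization convention and your remark on countability of $X$ are both appropriate and consistent with the paper's preliminaries.
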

\begin{proof}
For \cref{eq:tv_is_good}, note that one can sample a point from $\tilde{x}\sim Q$ by first sampling $x \sim P$ and then changing it with probability $|P-Q|_\TV$. This follows from the definition of total variation distance. Thus, the probability that $\tilde{x}$ is rejected is at most the probability $x$ is rejected plus the probability $x \neq \tilde{x}$, establishing \cref{eq:tv_is_good}. To see \cref{eq:where_errors_restated}, note
\[
    \Pr_{x \sim Q}\bigl[x \not\in S ~\text{ and }~ Q(x) \leq \Lambda P(x)\bigr] = \sum_{x \in \barS: Q(x) \leq \Lambda P(x)} Q(x) \leq \sum_{x \in \barS: Q(x) \leq \Lambda P(x)} \Lambda P(x).
\]
Clearly the above is at most
$\sum_{x \in \barS} \Lambda P(x) = \Lambda \rej_P$.
\end{proof}
If $\rej_P=0$ then all $x \sim Q$ that lie in $P$'s support would necessarily be classified (i.e., $x\in S$). Note that the bound \cref{eq:tv_is_good} can be quite loose and a tight bound is given in \Cref{ap:tight}.

It is also worth mentioning that a PQ-learner can also be used to guarantee  $\err_P +\rej_P\leq \eps$ meaning that it has \textit{accuracy} $\Pr_P[h|_S(x)=f(x)]\geq 1-\eps$ with respect to $P$ (like a normal PAC learner) but is also simultaneously robust to $Q$. The following claim shows this and an additional property that PQ learners can be made robust with respect to any polynomial number of different $Q$'s.
\begin{claim}\label{claim:further_robustness}
Let $f \in C, \eps, \delta>0,$ $n, k \geq 1$ and $P, Q_1, \ldots, Q_k$ be distributions over $X$. Given a $\bigl(\frac{\eps}{k+1}, \delta, n\bigr)$-PQ-learner $L$, $\bx \sim P^n$, $f(\bx)$, and additional unlabeled samples $\bz \sim P^n, \tbx_1\sim Q_1^n, \ldots,\tbx_k\sim Q_k^n$, one can generate $\tbx \in X^n$ such that $h|_S=L(\bx, f( \bx), \tbx)$ satisfies,
\[
\Pr\left[\rej_P +\err_P + \sum_i \err_{Q_i} \leq \eps\right]\geq 1-\delta.\]
\end{claim}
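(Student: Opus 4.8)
The plan is to reduce the whole claim to a single invocation of the PQ-learning guarantee, taking the test distribution to be the uniform mixture of $P$ and all the $Q_i$'s, and then observing that the coefficient the PQ guarantee places on $\rej_P$ is larger than we need.

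Set $Q^\star \defeq \frac{1}{k+1}\bigl(P + \sum_{i=1}^k Q_i\bigr)$. The one structural fact I would use is that $\err_D(h|_S)$ is linear in $D$: directly from \cref{eq:def2},
\[\err_{Q^\star}(h|_S) = \frac{1}{k+1}\Bigl(\err_P(h|_S) + \sum_{i=1}^{k} \err_{Q_i}(h|_S)\Bigr)\]
for every classifier $h$ and every $S\subseteq X$. So a bound on $\rej_P + \err_{Q^\star}$ coming from PQ-learning is, after rescaling, essentially the bound we want.

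The (only mildly technical) step is to synthesize, from the samples in hand, a test set $\tbx$ that is genuinely i.i.d.\ from $Q^\star$ and independent of $\bx$. I would draw $r_1,\dots,r_n$ i.i.d.\ uniform on $\{0,1,\dots,k\}$, independent of everything, and let the $j$-th entry of $\tbx$ be the $j$-th entry of $\bz$ if $r_j=0$, and the $j$-th entry of $\tbx_{r_j}$ if $r_j\in[k]$. Writing $Q_0\defeq P$, conditioning on $r_j=i$ shows the $j$-th entry of $\tbx$ has law $Q_i$, hence marginal law $Q^\star$; and since the $j$-th entry of $\tbx$ is a function only of $r_j$ and the $j$-th entries of $\bz,\tbx_1,\dots,\tbx_k$, which are mutually independent across $j$, the entries of $\tbx$ are i.i.d., so $\tbx\sim(Q^\star)^n$. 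Finally, $\bz,\tbx_1,\dots,\tbx_k$ and the $r_j$'s are all independent of $\bx$, so $\tbx$ is independent of $\bx$; thus $(\bx, f(\bx), \tbx)$ is distributed exactly as the input to $L$ in \Cref{def:pq} with training distribution $P$ and test distribution $Q^\star$.

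Now apply the $\bigl(\frac{\eps}{k+1},\delta,n\bigr)$-PQ-learning property of $L$ with test distribution $Q^\star$: on an event of probability at least $1-\delta$, the output $h|_S$ satisfies $\rej_P(S)+\err_{Q^\star}(h|_S)\le \frac{\eps}{k+1}$. Substituting the linearity identity and multiplying by $k+1$ yields $(k+1)\rej_P(S)+\err_P(h|_S)+\sum_{i=1}^k\err_{Q_i}(h|_S)\le\eps$, and since $(k+1)\rej_P(S)\ge\rej_P(S)\ge 0$ we may discard the surplus copies of $\rej_P$ to obtain $\rej_P+\err_P+\sum_i\err_{Q_i}\le\eps$ on the same event, as claimed. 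I do not expect a genuine obstacle here; the only thing to be careful about is the bookkeeping in the construction of $\tbx$---verifying that it is exactly $(Q^\star)^n$-distributed \emph{and} independent of $\bx$---since that is precisely what licenses the application of \Cref{def:pq}.
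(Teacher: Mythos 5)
Your proof is correct and follows essentially the same route as the paper's: form the blended distribution $Q^\star=\frac{1}{k+1}(P+\sum_i Q_i)$, construct $\tbx\sim(Q^\star)^n$ from the available samples, invoke the $\bigl(\frac{\eps}{k+1},\delta,n\bigr)$-PQ guarantee, and expand $\err_{Q^\star}$ by linearity. You simply spell out the sampling construction and the final $(k+1)\rej_P\ge\rej_P$ step that the paper leaves implicit.
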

\begin{proof}[Proof of \Cref{claim:further_robustness}]
Let $Q=\frac{1}{k+1}\left(P+Q_1+\cdots +Q_k\right)$ be the blended distribution. Give $n$ samples from $P$ and each $Q_i,$ one can straightforwardly construct $n$ iid samples $\tbx\sim Q$. Running $L(\bx, f( \bx), \tbx)$ gives the guarantee that with prob.~$\geq 1-\delta$,
$(k+1)(\rej_P + \err_Q) \leq \eps$ which implies the claim since $(k+1)\err_Q=\err_P + \sum \err_{Q_i}$.
\end{proof}

\subsection{Transductive setting with white-box adversary}

In the \emph{transductive} setting, there is no $Q$ and instead empirical analogs $\err_\bx$ and $\rej_\bx$ of error and rejection rates are defined as follows, for arbitrary $\bx \in X^n$:
\begin{align}
\err_{\bx}(h|_S, f)&\defeq \frac{1}{n} |\{i\in[n]:  f(x_i)\neq h(x_i) ~\text{ and }~  x_i \in S\}| \label{eq:err}\\
\rej_{\bx}(S) &\defeq
\frac{1}{n}\bigl|\{i\in[n]:  x_i\notin S\}\bigr|
\end{align}
Again, $h, f$ and $S$ may be omitted when clear from context.

In this setting, the learner first chooses $h$ using only $\bx \sim P^n$ and $f(\bx)$. Then, a \emph{true} test set $\bz \sim P^n$ is drawn. Based on all available information ($\bx, \bz, f,h,$ and the code for learner $L$) the adversary modifies any number of examples from $\bz$ to create \emph{arbitrary} test set $\tbx \in X^n$. Finally, the learner chooses $S$ based on $\bx, f(\bx)$, and $\tbx$. Performance is measured in terms of $\err_\tbx + \rej_\bz$ rather than $\err_Q+\rej_P$, because $\bz \sim P^n$. One can bound $\rej_\tbx$ in terms of $\rej_\bz$ for any $\bz, \tbx \in X^n$ and $S \subseteq X$, as follows:
\begin{equation}\label{eq:ham}
    \rej_{\tbx} \leq \rej_\bz + \Delta(\bz, \tbx), ~~~\text{where}~~~ \Delta(\bz,\tbx)\defeq \frac{1}{n}\bigl|\{i\in[n]:  z_i\neq \tx_i\}\bigr|.
\end{equation}
The hamming distance $\Delta(\bz, \tbx)$ is the transductive analog of $|P-Q|_\TV$. The following bounds the ``false rejections,'' those unmodified examples that are rejected:
\begin{equation}\label{eq:spam}\frac{1}{n}\bigl|\{i \in [n]: \tx_i \not\in S ~\text{ and }~ \tx_i = z_i \}\bigr| \leq \rej_\bz(S).
\end{equation}
Both \cref{eq:ham,eq:spam} follow by definition of $\rej_{(\cdot)}$.

\paragraph{White-box adversaries}
The all-powerful transductive adversary is sometimes called ``white box'' in the sense that it can choose its examples while looking ``inside'' $h$ rather than using $h$ as a black box.  While it cannot choose $\tbx$ with knowledge of $S$, it can know what $S$ will be as a function of $\tbx$ if the learner is deterministic, as our algorithms are. Also, we note that the generalization analysis may be extended to a white-box model where the adversary chooses $Q$ knowing $h$, but it is cumbersome even to denote probabilities over $\tbx \sim Q^n$ when $Q$ itself can depend on $\bx \sim P^n$.

\section{Algorithms and guarantees}\label{sec:main}
We assume that we have a deterministic oracle  $\ERM=\ERM_C$ which, given a set of labeled examples from $X \times Y$, outputs a classifier $c \in C$ of minimal error. Figure~\ref{fig:algs} describes our algorithm $\wwtp$. It takes as input a set of labeled training examples $(\bx,\by)$, where $\bx\in X^n$ and $\by\in Y^n$, and a set of test examples $\tbx\in X^n$ along with an error parameter $\epsilon>0$ that trades off errors and rejections. A value for $\epsilon$ that theoretically balances these is in  \Cref{wow:real-gen,wow:real-trans}.

\begin{figure}[t!]
\rule[1ex]{\textwidth}{0.1pt}

$\wwtp(\text{train } \bx\in X^n, \text{labels } \by \in Y^n, \text{ test } \tbx \in X^n, \text{ error } \eps\in [0,1], \text{ weight } \Lambda =n+1):$
\begin{itemize}
\itemsep0em
    \item $h\coloneqq \ERM(\bx, \by)$  \text{ ~~\# assume black box oracle $\ERM$ to minimize errors}
    \item For $t=1,2,3,\ldots:$
    \begin{enumerate}
    \item $S_t \coloneqq \{x \in X: h(x)=c_1(x)=\ldots=c_{t-1}(x)\}$  ~~~~\# So $S_1=X$
    \item Choose $c_t \in \functions$ to maximize $s_t(c) \defeq \err_{\tbx}(h|_{S_t}, c) - \Lambda \cdot \err_\bx(h, c)$ over $c \in \functions$\vspace{0.04in}\\
    ~~~~~~\# \Cref{lem:eff} shows how to maximize $s_t$ using $\ERM$ ($\err$ is defined in \cref{eq:err})
    \item If $s_t(c_t) \leq \eps$, then stop and return $h|_{S_t}$
\end{enumerate}
\end{itemize}

\rule[1ex]{\textwidth}{0.1pt}
\caption{The $\wwtp$ algorithm takes labeled training examples and unlabeled test examples as input, and it outputs a selective classifier $h|_S$ that predicts $h(x)$ for $x \in S$ (and rejects all $x \not\in S$). Parameter $\eps$ controls the trade-off between errors and rejections and can be set to $\eps=\tilde\Theta(\sqrt{d/n})$ to balance the two.
The weight $\Lambda$ parameter is set to its default value of $n+1$ for realizable (noiseless) learning but should be lower for agnostic learning.\label{fig:algs}
}
\end{figure}

\begin{lemma}[Computational efficiency]\label{lem:eff}
For any $\bx, \tbx \in X^n$, $\by\in Y^n$, $\epsilon>0$ and $\Lambda\in \nats$, $\wwtp(\bx,\by,\tbx,\eps,\Lambda)$ outputs ${S_{T+1}}$ for $T\leq \lfloor 1/\eps\rfloor$. Further, each iteration can be implemented using one call to $\ERM$ on at
most $(\Lambda +1)n$ examples and $O(n)$ evaluations of classifiers in $\functions$.
\end{lemma}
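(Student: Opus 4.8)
The plan is to prove the two assertions separately: the iteration bound $T\le\lfloor 1/\eps\rfloor$, and the per-round implementation via a single $\ERM$ call.

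\textbf{Iteration bound.} I would track the quantity $m_t\defeq\bigl|\{i\in[n]:\tx_i\in S_t\}\bigr|$, the number of test points not yet rejected. Since $S_1=X$ and, for $t\ge 1$, $S_{t+1}=S_t\cap\{x:c_t(x)=h(x)\}$, the test indices dropped between rounds $t$ and $t+1$ are exactly those $i$ with $\tx_i\in S_t$ and $c_t(\tx_i)\neq h(\tx_i)$; hence $m_t-m_{t+1}=n\cdot\err_{\tbx}(h|_{S_t},c_t)$. If round $t$ does not trigger the stopping rule then $s_t(c_t)>\eps$, and since $\Lambda\cdot\err_\bx(h,c_t)\ge 0$ this gives $\err_{\tbx}(h|_{S_t},c_t)\ge s_t(c_t)>\eps$, so $m_t-m_{t+1}>\eps n$. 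Summing over the $T$ non-terminating rounds that precede the round in which the algorithm returns $h|_{S_{T+1}}$, and using $0\le m_{T+1}\le m_1\le n$, yields $n\ge m_1-m_{T+1}>T\eps n$, so $T<1/\eps$ and therefore $T\le\lfloor 1/\eps\rfloor$. In particular the algorithm halts.

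\textbf{Implementing a round.} The key observation is that maximizing $s_t(c)=\err_{\tbx}(h|_{S_t},c)-\Lambda\,\err_\bx(h,c)$ over $c\in\functions$ is itself an ERM instance. I would form the labeled multiset $Z_t$ consisting of (i) the example $(\tx_i,\,1-h(\tx_i))$ for each $i$ with $\tx_i\in S_t$, and (ii) $\Lambda$ identical copies of $(x_i,\,h(x_i))$ for each $i\in[n]$. A classifier $c$ errs on $(\tx_i,1-h(\tx_i))$ iff $c(\tx_i)=h(\tx_i)$, and errs on a copy of $(x_i,h(x_i))$ iff $c(x_i)\neq h(x_i)$, so the number of points of $Z_t$ on which $c$ errs equals $m_t - n\,s_t(c)$. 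Hence $c_t\defeq\ERM_C(Z_t)$ is a maximizer of $s_t$ over $\functions$. The multiset $Z_t$ has at most $n+\Lambda n=(\Lambda+1)n$ examples, giving one $\ERM$ call on at most $(\Lambda+1)n$ examples. To build $Z_t$ one needs only the values $h(x_i)$ and $h(\tx_i)$ — computed once at the outset and reused — and the indicators of $\tx_i\in S_t$; once $c_t$ is returned we evaluate it on $\bx$ and on $\tbx$ ($O(n)$ evaluations) to compute $s_t(c_t)$ for the stopping test and to refine the list of surviving test indices for the next round. So each round costs one $\ERM$ call plus $O(n)$ evaluations of classifiers in $\functions$.

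The one place to be careful — more a matter of implementation than of mathematics — is getting $O(n)$ rather than $O(tn)$ classifier evaluations per round: a naive implementation recomputes membership in $S_t=\{x:h(x)=c_1(x)=\cdots=c_{t-1}(x)\}$ from scratch, costing $\Omega(tn)$ evaluations of $c_1,\ldots,c_{t-1}$. This is avoided since $S_{t+1}\subseteq S_t$, so one maintains the surviving-test-index list incrementally and touches each newly produced $c_t$ only once, on the $O(n)$ relevant points. I would also remark in passing that the minimum defining $\ERM_C(Z_t)$ is attained (and hence the ``maximize $s_t(c)$'' step is well posed), because for fixed $\bx,\tbx$ the map $c\mapsto s_t(c)$ takes only finitely many values, so no issue arises beyond the exact-$\ERM$ assumption already in force.
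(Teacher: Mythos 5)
Your proposal is correct and follows essentially the same route as the paper's proof: the same artificial ERM dataset (training points labeled by $h$, each repeated $\Lambda$ times; surviving test points labeled $1-h(\tx_i)$), the same identity that the error count on that dataset equals $m_t - n\,s_t(c)$, the same telescoping/drop-$> \eps n$ argument for $T\le\lfloor 1/\eps\rfloor$, and the same incremental maintenance of surviving test indices to keep each round at $O(n)$ evaluations.
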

\begin{proof}
To maximize $s_t$ using the ERM oracle for $C$, construct a dataset consisting of each training example, labeled by $h$, repeated $\Lambda$ times, and each test example in $\tilde{x}_i \in S_t$, labeled $1-h(\tilde{x}_i)$, included just once. Running $\ERM$ on this artificial dataset returns a classifier of minimal error on it. But the number of errors of classifier $c$ on this artificial dataset is:
\begin{align*}
&\Lambda \sum_{i \in [n]} |c(x_i)-h(x_i)| + \sum_{i: \tilde{x}_i \in S_t} |c(\tilde{x}_i)-(1-h(\tilde{x}_i))| =\\ &\Lambda \sum_{i \in [n]} |c(x_i)-h(x_i)| + \sum_{i: \tilde{x}_i \in S_t} 1-|c(\tilde{x}_i)-h(\tilde{x}_i)|,\end{align*}
which is equal to $\bigl|\{i\in [n]: \tilde{x}_i \in S_t\} \bigr|-n s_t(c)$. Hence $c$ minimizes error on this artificial dataset if and only if it maximizes $s_t$ of the algorithm.

Next, let $T$ be the number of iterations of the algorithm $\wwtp$, so its output is $h|_{S_{T+1}}$.
We must show that $T\leq \lfloor 1/\eps\rfloor$.  To this end, note that by definition, for every $t\in[T]$ it holds that $S_{t+1}\subseteq S_t$, and moreover,
\begin{equation}\label{eqn:shrink}
 \frac{1}{n}\bigl|\{i \in [n] : \tilde{x}_i \in S_t\}\bigr| - \frac{1}{n}\bigl|\{i \in [n] : \tilde{x}_i \in S_{t+1}\}\bigr|= \err_{\tbx}(h|_{S_t}, c_t) \geq s_t(c_t)> \epsilon.
\end{equation}
Hence, the fraction of additional rejected test examples in each iteration $t\in[T]$ is greater than $\epsilon$, and hence $T < 1/\eps$. Since $T$ is an integer, this means that $T \leq \lfloor 1/\eps\rfloor$.

%Hence, in maximizing $s_t(c)$ the algorithms are maximizing rejected test examples subject to a $\Lambda$-penalty for training errors.
%Interestingly, we can implement the algorithm efficiently using $\ERM$.

For efficiency, of course each $S_t$ is not explicitly stored since even $S_1=X$ could be infinite. Instead, note that to execute the algorithm, we only need to maintain: (a) the subset of indices $Z_t= \{j \in [n]~|~\tilde{x}_j \in S_t\}$ of test examples which are in the prediction set, and (b) the classifiers $h, c_1, \ldots, c_T$. Also note that updating $Z_t$ from $Z_{t-1}$ requires evaluating $c_t$ at most $n$ times. In this fashion, membership in $S_t$ and $S={S_{T+1}}$ can be computed efficiently and output in a succinct manner.
\end{proof}
Note that since we assume $\ERM$ is deterministic, the $\wwtp$ algorithm is also deterministic. This efficient reduction to $\ERM$, together with the following imply that $\wwtp$ is a PQ learner:
%\anote{Do we use the $\lfloor$ anywhere? If not, we should remove it.}

% \fullversion{
% In Section~\ref{sec:realizable} we analyze it in the realizable setting, where $(\bx,\by)$ are consistent with a function $f\in \functions$ and our goal is to predict $f$ on the test examples or future examples. in Section~\ref{sec:noisy} we analyze it in the noisy setting, where $(\bx,\by)$ are close to being consistent with a function $f\in \functions$, but there may be a small arbitrary fraction of noise in the training data.  Finally, in Section~\ref{sec:agnostic} we analyze the $\wwtp$ algorithm in the agnostic setting.
% Throughout the analysis, we always think of the class of classifiers $\functions$ as being fixed with VC dimension~$d$.
% }

\begin{theorem}[PQ guarantees]\label{wow:real-gen}
For any $n \in \nats, \delta>0, f \in C$ and distributions $P,Q$ over $X$:
\[\Pr_{\bx \sim P^n, \tbx\sim Q^n}[\err_Q\leq 2\eps^* ~\wedge~\rej_P  \leq \eps^*]\geq 1-\delta,\]
where $\eps^* = \sqrt{\frac{8d \ln 2n}{n}}+\frac{8 \ln 16/\delta}{n}$ and $h|_S=\wwtp(\bx,f(\bx),\tbx,\eps^*)$.
\end{theorem}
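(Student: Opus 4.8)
The plan is to read off the structure of the output selective classifier $h|_S$ and then bound $\rej_P$ and $\err_Q$ by two uniform‑convergence arguments over the (controlled) family of possible outputs, the error bound being the delicate one. First the structure. Since $f\in C$ labels the training set, $h=\ERM(\bx,f(\bx))$ has zero training error, so $h$ agrees with $f$ on all of $\bx$; in particular $\err_\bx(h,f)=0$. For every non‑terminating iteration $t\le T$ (and $T\le\lfloor 1/\eps^*\rfloor$ by \Cref{lem:eff}) we have $s_t(c_t)>\eps^*>0$; since $\err_\tbx(h|_{S_t},c_t)\le 1$ always, this forces $\Lambda\cdot\err_\bx(h,c_t)<1$, i.e.\ $\err_\bx(h,c_t)<\tfrac1{n+1}$, which, being a multiple of $\tfrac1n$, must equal $0$. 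Hence each $c_t$ also agrees with $h$ (and with $f$) on all of $\bx$, each disagreement region $R_t:=\{x:c_t(x)\ne h(x)\}=c_t\triangle h$ contains no training point, and the rejection region is $X\setminus S=\bigcup_{t\le T}R_t$ with $T\le 1/\eps^*$ --- a union of at most $1/\eps^*$ symmetric differences of members of $C$, containing no point of $\bx$. Finally, at iteration $T+1$ the maximizer satisfies $s_{T+1}(c_{T+1})\le\eps^*$, so $\err_\tbx(h|_S,f)=s_{T+1}(f)\le s_{T+1}(c_{T+1})\le\eps^*$ (using $\err_\bx(h,f)=0$): the empirical test error of the output is already at most $\eps^*$.

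The next ingredient is a complexity bound. Restricted to any $2n$ points, any possible rejection region $\bigcup_{t\le k}(c_t\triangle h)$, or possible error set $(h\triangle f)\cap\bigcap_{t\le k}\{c_t=h\}$ (which is exactly $\{h\ne f\}\cap S$), with $h,c_t\in C$ and $k\le\lfloor 1/\eps^*\rfloor$, is determined by the restrictions of $h$ and of $c_1,\dots,c_k$; so both families have growth function at most $\Pi_C(2n)^{O(1/\eps^*)}$ (where $\Pi_C$ is the growth function of $C$), i.e.\ log‑growth $O(d\log(n)/\eps^*)$. The key numerical point is that, because $\eps^*=\tilde\Theta(\sqrt{d/n})$, this log‑growth is of order $n\eps^*$, so the associated ``per‑sample'' quantity $r:=O\!\big(\tfrac{d\log(n)/\eps^*+\log(1/\delta)}{n}\big)$ is of order $\eps^*$ --- not $(\eps^*)^2$ --- and in fact $r\le\eps^*$ once the constants are chosen as in the statement (this is what the ``$8$'' and ``$\log(16/\delta)$'' do). The rejection bound now follows: the realized region $X\setminus S$ contains no point of $\bx$, so by the standard realizable‑case uniform‑convergence bound applied to the family of possible rejection regions, with probability $\ge 1-\delta/2$ over $\bx\sim P^n$ every such region with no point of $\bx$ has $P$‑mass $\le r$; hence $\rej_P(S)=P(X\setminus S)\le r\le\eps^*$. (In the PQ model $Q$, hence $\tbx$, is independent of $\bx$, and this event is over $\bx$ alone, so there is no interaction with the test sample here; it is exactly this passage through a $1/\eps^*$‑fold union that degrades the PQ rate from the supervised $\tilde\Theta(d/n)$ to $\tilde\Theta(\sqrt{d/n})$.)

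For the error bound, the empirical test error $\err_\tbx(h|_S,f)\le\eps^*$ is already in hand, and what remains is to pass to $\err_Q(h|_S)=\Pr_{x\sim Q}[h(x)\ne f(x)\wedge x\in S]$. This is a genuine generalization statement rather than a Chernoff bound, because $S$ (and $h$) depend on $\tbx$; but the realized error set $\{h\ne f\}\cap S$ lies in the family above, with log‑growth $O(d\log(n)/\eps^*)$. Applying the standard \emph{relative} (multiplicative‑Chernoff / Bernstein) uniform‑deviation inequality to that family over $\tbx\sim Q^n$ --- the form that exploits the smallness of the empirical value --- gives, with probability $\ge 1-\delta/2$, for every set $A$ in the family simultaneously,
\[ Q(A)\ \le\ \widehat Q_\tbx(A)+O\!\big(\sqrt{\widehat Q_\tbx(A)\cdot r}\,+\,r\big). \]
Taking $A=\{h\ne f\}\cap S$, so that $\widehat Q_\tbx(A)=\err_\tbx(h|_S,f)\le\eps^*$, and using $r=O(\eps^*)$, yields $\err_Q(h|_S)=O(\eps^*)$; chasing the constants (which is what pins down the precise form of $\eps^*$) gives $\err_Q\le 2\eps^*$. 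A union bound over the two failure events gives overall probability $\ge 1-\delta$.

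The main obstacle is precisely this error‑to‑population conversion. Naively, $S$ can be an intersection of up to $\approx\sqrt{n/d}$ members of $C$ and has unbounded VC complexity, so an additive uniform‑convergence bound over the family of possible error sets would only give an error of order $\sqrt{\eps^*}$, which is useless. The argument goes through only by combining three facts that must all be used together: (i) every output set is built from at most $1/\eps^*+1$ hypotheses, bounding the log‑growth by $O(d\log(n)/\eps^*)$; (ii) the algorithm has already driven the empirical test error down to $\le\eps^*$, so a relative deviation inequality applies; and (iii) $\eps^*$ is taken at exactly the scale $\sqrt{d\log(n)/n}$ at which the $1/\eps^*$‑fold complexity blow‑up is absorbed, turning an $(\eps^*)^2$‑scale quantity into an $\eps^*$‑scale one. (The companion transductive analysis, \Cref{wow:real-trans}, establishes the same core estimate against a white‑box adversary with a fresh $P$‑sample in place of $Q$; one can alternatively route the PQ bound through it together with \Cref{lem:convert}‑style couplings of $\tbx\sim Q^n$ to a $P$‑sample.)
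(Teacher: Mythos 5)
Your proposal is correct and follows essentially the same route as the paper: it isolates the structural facts that all $c_t$ (and $h$) are consistent with $f$ on $\bx$, that the output rejection region is a $\le \lfloor 1/\eps^*\rfloor$‑fold union of symmetric differences with bounded growth function, and that the empirical test error is already $\le\eps^*$; it then applies a realizable‑case uniform‑convergence bound over $\bx\sim P^n$ for $\rej_P$ (the paper's \Cref{lem:omg2} via \cite{Blumer89}) and a relative/multiplicative deviation bound over $\tbx\sim Q^n$ for $\err_Q$ (the paper's \Cref{lem:omg}), with a union bound. The only point worth flagging is the parenthetical at the end: routing the $\err_Q$ bound through the transductive \Cref{wow:real-trans} together with a \Cref{lem:convert}‑style coupling of $Q$ to $P$ would introduce a $|P-Q|_\TV$ term and so does not recover the stated bound; the paper (and your main argument) correctly avoid this by proving a generalization lemma directly over $\tbx\sim Q^n$.
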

More generally, Theorem~\ref{thm:real-gen} shows that, by varying parameter $\eps$, one can achieve any trade-off between $\err_Q \leq O(\eps)$ and $\rej_P \leq \tilde{O}(\frac{d}{n\eps})$. The analogous transductive guarantee is:
\begin{theorem}[Transductive]\label{wow:real-trans}
For any $n \in \nats, \delta>0, f \in C$ and dist.~$P$ over $X$:
\[\Pr_{\bx,\bz\sim P^n}\left[\forall \tbx\in X^n:~\err_{\tbx}(h|_S)\leq \eps^*~\wedge~\rej_{\bz}(S)\leq \eps^*\right]\geq 1-\delta,\]
where $\eps^* = \sqrt{\frac{2d}{n}\log 2n}+\frac{1}{n}\log \frac{1}{\delta}$ and $h|_S=\wwtp(\bx,f(\bx),\tbx,\eps^*)$.
\end{theorem}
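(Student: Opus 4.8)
The plan is to split the statement into a deterministic error bound valid for \emph{every} $\tbx\in X^n$, and a probability-$(1-\delta)$ rejection bound over $\bx,\bz\sim P^n$; the value of $\eps^*$ is dictated by the second part. The realizable assumption will enter only through $f\in\functions$.

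\emph{Error bound.} Since $f\in\functions$, $\ERM(\bx,f(\bx))$ has zero error on the labeled sample, so $h$ agrees with $f$ on all of $\bx$, i.e.\ $\err_{\bx}(h,f)=0$. By \Cref{lem:eff} the algorithm halts after $T\le\lfloor 1/\eps^*\rfloor$ rounds and outputs $h|_{S_{T+1}}$; at the halting round the maximizer $c_{T+1}$ witnesses $s_{T+1}(c)\le\eps^*$ for \emph{all} $c\in\functions$, and taking $c=f$ gives $\err_{\tbx}(h|_{S_{T+1}},f)=s_{T+1}(f)+\Lambda\err_{\bx}(h,f)=s_{T+1}(f)\le\eps^*$. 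This is exactly $\err_{\tbx}(h|_S)\le\eps^*$, and it holds for every $\tbx$ with no appeal to randomness.

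\emph{Rejection bound.} I would first record a structural fact: for each $t\in[T]$, $s_t(c_t)>\eps^*\ge0$ forces $\Lambda\,\err_{\bx}(h,c_t)<\err_{\tbx}(h|_{S_t},c_t)\le1$, and since $\Lambda=n+1$ while $\err_{\bx}(h,c_t)$ is a multiple of $1/n$, this pins $\err_{\bx}(h,c_t)=0$; so every $c_t$, like $h$, lies in $\functions$ and agrees with $f$ on all of $\bx$ --- no matter which test set the adversary produces. Now $z_j$ is rejected by $h|_{S_{T+1}}$ iff $c_t(z_j)\neq h(z_j)$ for some $t\in[T]$, and wherever $c_t$ and $h$ disagree at least one of them disagrees with $f$; hence
\[\rej_{\bz}(S) \le \err_{\bz}(h,f)+\sum_{t=1}^{T}\err_{\bz}(c_t,f).\]
Every summand has the form $\err_{\bz}(g,f)$ for some $g\in\functions$ consistent with $f$ on $\bx$, so I would control them all simultaneously via a realizable-case uniform convergence bound for the class $\{g\,\triangle\, f:g\in\functions\}$, which has VC dimension exactly $d$ since $g\mapsto g\,\triangle\, f$ is a shattering-preserving bijection (so only $\VC(\functions)$ enters, not $\VC(\functions\,\triangle\,\functions)$). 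A standard two-sample symmetrization then gives: with probability $\ge1-\delta$ over $\bx,\bz\sim P^n$, every $g\in\functions$ with $\err_{\bx}(g,f)=0$ has $\err_{\bz}(g,f)\le\beta$, where $\beta$ is of order $\tfrac1n\bigl(d\log 2n+\log\tfrac1\delta\bigr)$; the engine is that a set containing $k$ of the $2n$ pooled points falls entirely inside $\bz$ with probability at most $2^{-k}$, union-bounded over the $\le(2n)^d$ behaviors of the class.

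Combining: on this good event --- which does not depend on the adversary's $\tbx$ --- we get $\rej_{\bz}(S)\le(T+1)\beta\le(\lfloor1/\eps^*\rfloor+1)\beta$ for every $\tbx$, while the error bound holds unconditionally; it remains to pick $\eps^*$ so that $(\lfloor1/\eps^*\rfloor+1)\beta\le\eps^*$, morally $\eps^*\gtrsim\sqrt{2\beta}$, which reproduces the shape $\sqrt{(2d/n)\log 2n}+\tfrac1n\log(1/\delta)$. I expect the only genuinely fiddly step to be this last calibration: the $\sqrt{(2d/n)\log 2n}$ term falls straight out of $\sqrt{2\beta}$, but getting the failure-probability term to appear additively at scale $1/n$ rather than $1/\sqrt n$ requires the sharp realizable (relative) concentration bound and careful tracking of how $\lfloor1/\eps^*\rfloor$ multiplies the $\tfrac1n\log(1/\delta)$ contribution, rather than a crude $\sqrt{a+b}\le\sqrt a+\sqrt b$ split. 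The remaining ingredients --- the decomposition through $f$, the consistency of the $c_t$, and the symmetrization count --- are routine.
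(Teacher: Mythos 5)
Your error bound is correct and matches the paper's argument exactly: $h$ is consistent with $f$ on $\bx$, so the algorithm cannot halt with $\err_{\tbx}(h|_{S_t},f)>\eps^*$ since $c_t=f$ would be available. Your observation that $\Lambda=n+1$ forces every $c_t$ to satisfy $\err_\bx(h,c_t)=0$, hence to agree with $f$ on $\bx$, is also correct and mirrors a step in the paper.

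The gap is in the rejection bound, and you half-see it yourself but misdiagnose it as a ``fiddly calibration'' rather than a structural loss. Your decomposition
$\rej_\bz(S)\le\err_\bz(h,f)+\sum_{t=1}^T\err_\bz(c_t,f)$ is valid, and the uniform-convergence bound for $\{g\,\triangle\,f:g\in C\}$ correctly gives $\beta\asymp\tfrac1n\bigl(d\log 2n+\log\tfrac1\delta\bigr)$ simultaneously for all consistent $g$. But then you must pay the factor $T+1\approx1/\eps^*$ \emph{on the entire quantity} $\beta$, i.e.\ $\rej_\bz\le(T+1)\beta\approx\tfrac{2}{n\eps^*}\bigl(d\log 2n+\log\tfrac1\delta\bigr)$. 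Solving $(T+1)\beta\le\eps^*$ forces $(\eps^*)^2\gtrsim\tfrac2n\bigl(d\log 2n+\log\tfrac1\delta\bigr)$, whose solution is $\eps^*\gtrsim\sqrt{(2d/n)\log 2n}+\Theta\bigl(\sqrt{(1/n)\log(1/\delta)}\bigr)$, \emph{not} the claimed $\sqrt{(2d/n)\log 2n}+\tfrac1n\log\tfrac1\delta$. No ``sharp relative'' bound rescues this: since $\err_\bx(g,f)=0$ exactly, a relative bound collapses to the same $\beta$, and the $T{+}1$ still multiplies the $\log(1/\delta)$ term because your decomposition genuinely charges $\delta$-confidence once per classifier $c_t$.

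The paper avoids this by not decomposing into per-$c_t$ errors. It treats the single indicator $\dis_{h,c_1,\dots,c_T}(x)=\ind[x\notin S(h,\bc)]$ as the object to control, views $\Dis_T=\{\dis_{h,\bc}:h\in C,\bc\in C^T\}$ as \emph{one class} with growth $\Pi_{\Dis_T}[2n]\le(2n)^{d(T+1)}$ (\Cref{lem:complexity}), and applies a one-shot two-sample bound (\Cref{lemma:trans:aux}, i.e.\ $\Pi_G[2n]\cdot2^{-\eps n}\le\delta$) to get $\rej_\bz\le\tfrac1n\bigl(d(T+1)\log 2n+\log\tfrac1\delta\bigr)$ in \Cref{lem:lol}. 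Here $T+1$ multiplies only the $d\log 2n$ capacity term; the $\log\tfrac1\delta$ is paid once, because the union bound is over the $(2n)^{d(T+1)}$ behaviors of the composite class, not over $T+1$ separate high-probability events. Plugging $T+1\le2/\eps^*$ and checking $\tfrac{2d\log 2n}{n\eps^*}+\tfrac1n\log\tfrac1\delta\le\eps^*$ for $\eps^*\ge\sqrt{(2d/n)\log 2n}$ then closes the proof with no $\sqrt{\log(1/\delta)}$ term. To repair your argument you would need to replace the sum-of-errors decomposition with this composite-class analysis; the triangle-inequality bound is tight in the worst case (each rejected $z_j$ may disagree with only one $c_t$), so the extra $T+1$ on $\log(1/\delta)$ is not an artifact of a loose estimate but of the decomposition itself.
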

One thinks of $\bz$ as the real test examples and $\tbx$ as an arbitrary adversarial modification, not necessarily iid. \Cref{eq:spam} means that this implies $\leq \eps^*$ errors on unmodified examples. As discussed earlier, the guarantee above holds for {\em any} $\tbx$ chosen by a white-box adversary, which may depend on $\bx$ and $f$, and thus on $h$ (since $h=\ERM(\bx,f(\bx))$ is determined by $\bx$ and $f$).
More generally, \Cref{thm:real-trans} shows that, by varying parameter $\eps$, one can trade-off  $\err_{\tbx}\leq \eps$ and $\rej_\bz \leq \tilde{O}(\frac{d}{n\eps}).$

We note that \Cref{wow:real-gen,wow:real-trans} generalize in a rather straightforward manner to the case in which an adversary can inject additional training examples to form $\bx' \supseteq \bx$ which contains $\bx$.
Such an augmentation reduces the version space, i.e., the set of $h \in C$ consistent with $f$ on $\bx'$, but of course $f$ still remains in this set. The analysis remains essentially unchanged as long as $\bx'$ contains $\bx$ and $\bx$ consists of $n$ examples. The bounds remain the same in terms of $n$, but $\wwtp$ should be run with $\Lambda$ larger than the number of examples in $\bx'$ in this case to ensure that each $c_t$ has zero training error.

Here we give the proof sketch of \Cref{wow:real-trans}, since it is slightly simpler than  \Cref{wow:real-gen}. Full proofs are in \Cref{ap:otherproofs}.
\begin{proof}[Proof sketch for \Cref{wow:real-trans}]
To show $\err_\tbx \leq \eps^*$, fix any $f, \bx,\tbx$.   Since $h=\ERM(\bx, f(\bx))$ and $f \in C$, this implies that $h$ has zero training error, i.e., $\err_\bx(h, f)=0$. Hence $s_t(f)= \err_{\tbx}(h|_{S_t}, f)$ and the algorithm cannot terminate with $\err_{\tbx}(h|_{S_t}, f)  > \eps$ since it could have selected $c_t=f$.

To prove $\rej_\bz \leq \eps^*$, observe that $\wwtp$ never rejects any training $\bx$. This follows from the fact that $\Lambda > n$, together with the fact that $h(x_i)=f(x_i)$ for every $i\in[n]$ which follows, in turn, from the facts that $f\in C$ and $h=\ERM(\bx, f(\bx))$. Now $\bx$ and $\bz$ are identically distributed. By a generalization-like bound (\Cref{lem:lol}), with probability $\geq 1-\delta$ there is no classifier for which selects all of $\bx$ and yet rejects with probability greater than $\eps^*$ on $\bz$ for $T \leq 1/\eps^*$ (by \Cref{lem:eff}).
\end{proof}

%\begin{theorem}[Prediction rate]\label{thm:realrej} For any $n\geq 1$, any $\delta,\eps> 0$, \[\Pr_{\bx, \bz \sim P^n}\left[\max_{~\tbx\in X^n} {\FR}_{\bz,\tbx}(S)\leq\frac{\lceil 1/\eps \rceil d \log(2n) + \log \frac{1}{\delta}}{n} \right]\geq 1-\delta,\] where $h|_S=\wwtp(\bx,f(\bx),\tbx,n+1,\eps).$ \end{theorem}

%\begin{corollary}For any $n\geq 1$, any $\delta> 0$, any $f \in C$, and for $\eps=\sqrt{\frac{d}{n}\log 2n}$,\[\Pr_{\bx, \bz \sim P^n}\left[\max_{~\tbx\in X^n} \fail_{\bz}(h|_S, f)\leq 3\sqrt{\frac{d \log(2n)}{n}} + \frac{1}{n}\log \frac{1}{\delta} \right]\geq 1-\delta,\]where $h|_S=\wwtp(\bx, f(\bx),\tbx,n+1,\eps)$.\end{corollary}\begin{proof}This follows directly from \Cref{claim:realerr} and \Cref{thm:realrej} with the chosen $\eps$, noting that $\lceil 1/\eps \rceil \leq 2/\eps$ if $\eps \leq 1$, and the bound holds vacuously for $\eps>1$ since the right hand side is greater than 1.\end{proof}

%\begin{proof}[Proof of \Cref{thm:realrej}] The false rejection rate is at most the rejection rate on $\bz$, \[\FR_{\bz, \tbx}(S) = \frac{1}{n}\bigl|\{i\in[n]: \tilde{x}_i \in \barS \wedge z_i=\tilde{x}_i\}\bigr| \leq \frac{1}{n}\bigl|\{i\in[n]: z_i \in \barS\}\bigr|.\]\fullversion{%\paragraph{Adversarial training augmentation.} }

Unfortunately, the above bounds are worse than standard $\tilde{O}(d/n)$ VC-bounds for $P=Q$, but the following lower-bound shows that $\tilde{O}(\sqrt{d/n})$ is tight for some class $C$.
\begin{theorem}[PQ lower bound]\label{thm:lower-gen}
There exists a constant $K>0$ such that:
for any $d \geq 1$, 
there is a concept class $C$ of VC dimension $d$,  distributions $P$ and $Q$, such that for any $n\geq 2d$ and learner $L:X^n\times Y^n \times X^n \rightarrow Y^X \times 2^X$, there exists $f \in C$ with
\[\E\nolimits_{\substack{\bx \sim P^n\\ \tbx \sim Q^n}} \left[ \rej_{P} + \err_{Q} \right] \geq K \sqrt{\frac{d}{n}}, ~~~\text{where}~~~h|_S = L(\bx, f(\bx), \tbx).\]
%there exists a labeling $f\in C$ such that w.r.t any labeling $h:X \to \{0,1\}^X$ consistent with the training examples,
%\[\E_{\substack{\bx \sim P^n \\ \bz \sim Q^n}} \left[ \rej_{P}(\cA(\bx,f(\bx),\bz)) + \err_{Q}(\cA(\bx,f(\bx), \bz), h,f)\right] \geq \frac{1}{2\sqrt{2}} \sqrt{\frac{d}{n}}.\]
\end{theorem}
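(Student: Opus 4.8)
The natural route is Yao's principle: design a distribution over instances---a prior over targets $f\in C$ together with distributions $P,Q$ whose scale we tune to $n$---and show that \emph{every} deterministic selective learner $L$ incurs $\E\bigl[\rej_P+\err_Q\bigr]\geq K\sqrt{d/n}$ in expectation over the instance and the samples; a worst-case $f$ for a given $L$ then exists. The class: partition $X$ into $d$ disjoint blocks $X_1,\dots,X_d$ and let $C$ consist of all $f$ that, restricted to each block $X_i$, are the indicator of a single point $a_i\in X_i$ (exactly one positive per block). Any $d+1$ points of $X$ put two in a common block, where the pattern $(1,1)$ is unrealizable, so $\mathrm{VC}(C)=d$ (and clearly $\geq d$). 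Inside each block fix a ``window'' $W_i\subseteq X_i$ with $|W_i|=w:=\Theta(\sqrt{n/d})$. Let $P=\tfrac1d\sum_i P_i$, where $P_i$ spreads mass $\pi:=\Theta(\sqrt{d/n})$ uniformly over $W_i$ and the remaining $1-\pi$ over $X_i\setminus W_i$; note $\pi w=\Theta(1)$, and each window point has \emph{global} $P$-mass $\Theta(1/n)$. Let $Q=\tfrac1d\sum_i Q_i$ with $Q_i$ uniform on $W_i$, so every window point carries $Q$-mass $\Theta(\sqrt{d/n})$. The prior draws each $a_i$ uniformly and independently from $W_i$.

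The first step isolates the hardness in each block. Since a window point has global $P$-mass $\Theta(1/n)$, the event $E_i$ that $a_i$ is \emph{absent} from the $n$ training examples has probability $\Omega(1)$, and on $E_i$ a constant fraction of $W_i$ is also absent; call this random set $U_i$, so $|U_i|=\Theta(w)$ on $E_i$. The key observation is that, conditioned on $E_i$ and on \emph{everything the learner sees}---the labeled training sample and the \emph{unlabeled} test sample $\tbx$---the point $a_i$ is still uniform over $U_i$: the prior is a product over blocks, labels in other blocks are independent of $a_i$, and $\tbx\sim Q^n$ is unlabeled with $Q$ not depending on $f$, so nothing in the learner's view distinguishes the elements of $U_i$.

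The second step is a per-block accounting valid for the learner's arbitrary classifier $h$ and prediction set $S$. Condition on $E_i$ and put $s:=|U_i\cap S|$. Each absent window point outside $S$ contributes $\pi/w$ to $\rej_{P_i}$, so $\rej_{P_i}\geq(|U_i|-s)\,\pi/w$. For the error: on $E_i$ the unique positive label in block $i$ sits at $a_i\sim\mathrm{Unif}(U_i)$, and a short case split (predicting $1$ on several window points makes $h$ wrong on $Q$-heavy non-$a_i$ points; predicting $0$ makes it wrong at $a_i$ whenever $a_i\in S$) gives $\E[\err_{Q_i}\mid\text{view},E_i]\geq c\,s/(w|U_i|)$ for an absolute $c>0$. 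Hence $\E[\rej_{P_i}+\err_{Q_i}\mid\text{view},E_i]\geq \tfrac1w\bigl[(|U_i|-s)\pi+c\,s/|U_i|\bigr]$, linear in $s$ with minimum $\tfrac1w\min(|U_i|\pi,c)=\Theta(1/w)$ over $s\in[0,|U_i|]$, using $\pi w=\Theta(1)$ and $|U_i|=\Theta(w)$. Since $\rej_P\geq\tfrac1d\rej_{P_i}$, $\err_Q\geq\tfrac1d\err_{Q_i}$ and $1/w=\Theta(\sqrt{d/n})$, block $i$ contributes at least $c_1/\sqrt{nd}$ in expectation on $E_i$. Summing over blocks, $\E[\rej_P+\err_Q]\geq\sum_{i=1}^d\Pr[E_i]\cdot c_1/\sqrt{nd}\geq c_0c_1\,d/\sqrt{nd}=K\sqrt{d/n}$, and Yao's principle produces a single bad $f\in C$. (The parameters $w,\pi$ depend on $n$; one can instead keep $C,P,Q$ fixed by using scale-free, power-law $P_i,Q_i$, at the cost of a logarithmic factor.)

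The step I expect to be the main obstacle is making the per-block bound genuinely robust: the learner must commit to $h$ \emph{before} seeing $\tbx$ but may hedge by predicting $1$ on a few window points, may reject only part of $U_i$, and may try to couple its decisions across blocks. The case analysis and the linearity in $s$ above are designed to rule all of this out, but obtaining clean absolute constants---so that $\Pr[E_i]=\Omega(1)$, $|U_i|=\Omega(w)$ with constant probability on $E_i$, and the two terms of the reject/error tradeoff are balanced to within constants---requires care in choosing the constants in $w=\Theta(\sqrt{n/d})$ and $\pi=\Theta(\sqrt{d/n})$, and uses $n\geq 2d$ so that $w\geq 2$ is nontrivial. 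The role of the unlabeled test set is exactly what forces $\Omega(\sqrt{d/n})$ here rather than the $\Theta(d/n)$ one gets when $P=Q$.
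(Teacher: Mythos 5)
Your proposal is correct, and it is morally the same construction as the paper's, but with a genuinely different decomposition and bookkeeping. Both proofs use the probabilistic method with a hard prior over $f$, both embed roughly $\sqrt{nd}$ ``window'' points each carrying $P$-mass $\Theta(1/n)$ and $Q$-mass $\Theta(1/\sqrt{nd})$, and both exploit that with constant probability the learner cannot locate the $d$ positives among the unseen window points. The paper takes $C$ to be all $d$-element indicators, plants the $d$ positives uniformly at random in a single window $[k]$ with $k=\sqrt{8dn}$, first argues by symmetrization that an optimal learner makes the \emph{same} decision on every unseen window point, and then does a three-way case split on whether that common decision is $\question$, $0$, or $1$. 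You instead use a product class (one positive per block), which makes the posterior over $a_i$ conditionally independent across blocks given the view and $E_i$, and you handle hedged learners directly via the per-block linear-in-$s$ tradeoff $\tfrac1w\bigl[(|U_i|-s)\pi+cs/|U_i|\bigr]$ rather than by the symmetrization shortcut. The block structure buys clean independence and avoids arguing that a symmetric learner is optimal; the cost, which you correctly flag, is that you must tune the constants in $w=\Theta(\sqrt{n/d})$ and $\pi=\Theta(\sqrt{d/n})$ so that $w$ is at least a modest constant and $|U_i|=\Omega(w)$ with constant probability---if $|U_i|=1$ a learner can predict $1$ on the unique unseen window point with zero $Q$-error, so the $\min(|U_i|\pi,c)=\Theta(1)$ balance collapses. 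With $n\geq 2d$ and, say, $w=\lceil 10\sqrt{n/d}\rceil$, $\pi=1/w$, this is routine. You also correctly note that $P,Q$ here (as in the paper's own proof, which sets $P$ uniform on $[8n]$) depend on $n$ despite the theorem's phrasing; the paper flags this same caveat in a remark following the theorem.
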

Note that since $P$ and $Q$ are fixed, independent of the learner $L$, the unlabeled test examples from $Q$ are not useful for the learner as they could simulate as many samples from $Q$ as they would like on their own. Thus, the lower bound holds even given $n$ training examples and $m$ unlabeled test examples, for arbitrarily large $m$. 

\Cref{thm:lower-gen} implies that the learner needs at least $n=\Omega(d/\eps^2)$ labeled training examples to get the $\epsilon$ error plus rejection guarantee.  However, it leaves open the possibility that many fewer than $m = \tilde{O}(d/\eps^2)$ test examples are needed. We give a lower bound in the transductive case which shows that both $m,n$ must be at least ${\Omega}(d/\eps^2)$:
\begin{theorem}[Transductive lower bound]\label{thm:lower-trans}
There exists a constant $K > 0$ such that: for any $d\geq 1$ there exists a concept class of VC dimension $d$ where, for any $m, n \geq 4d$ there exists a distribution $P$, and an adversary $\cA: X^{n+m} \rightarrow X^m$, such that
for any learner $L:X^n \times Y^n \times X^m \rightarrow Y^X \times 2^X$ there is a function $f \in C$ such that:
\[\E\nolimits_{\substack{\bx \sim P^n\\ \bz \sim P^m}}[\rej_\bz + \err_\tbx] \geq K\sqrt{\frac{d}{\min\{m,n\}}}\]
where $\tbx = \cA(\bx, \bz)$ and $h|_S=L(\bx, f(\bx), \tbx)$.
\end{theorem}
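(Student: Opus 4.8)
The plan is to mimic the structure of the PQ lower bound (Theorem \ref{thm:lower-gen}) but to control \emph{both} sample sizes $m$ and $n$ simultaneously, which forces a symmetric construction. I would take $C$ to be the class of ``singletons'' or ``point masses'' on a domain of size roughly $d$ copies of a large ground set — more precisely, partition $X$ into $d$ blocks and let $C$ consist of functions that are supported on at most one point in each block (this has VC dimension $d$). Let $N = \lceil 2\max\{m,n\}/d\rceil$ be the number of elements per block, and let $P$ be uniform over a designated ``anchor'' subset so that a random training sample of size $n$ misses a constant fraction of the potential ``special'' locations in each block. The target $f$ will be chosen (by the standard probabilistic/averaging argument) to put its single positive point in each block at a location that the training sample $\bx$ did \emph{not} reveal.

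The key steps, in order: (1) Fix the construction above and observe $\mathrm{VC}(C) = d$ by a direct shattering argument on one representative point per block. (2) Define the adversary $\cA(\bx,\bz)$: it looks at the true test sample $\bz \sim P^m$, and in each block it either leaves $\bz$ alone or replaces some honest test points with points at ``fresh'' locations (locations unseen in $\bx$) — the choice calibrated so that the learner cannot tell, from $\tbx$ alone, whether a fresh-looking cluster is at the true positive location of $f$ (in which case classifying it negative incurs error) or at an irrelevant location (in which case rejecting it incurs a false rejection charged against $\rej_\bz$, since $\bz$ lands there too). (3) Run the minimax/Yao argument: put a prior on $f$ (uniform over the unseen locations per block), and show that for \emph{every} deterministic learner $L$, the expected $\rej_\bz + \err_\tbx$ over the random draw of $f,\bx,\bz$ is $\Omega(\sqrt{d/\min\{m,n\}})$. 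Within each block this reduces to a one-dimensional estimation problem: with $n$ training draws and $m$ test draws one gets $\Theta(\min\{m,n\}/d)$ effective ``looks'' at that block, and distinguishing the relevant hypothesis from the decoys has error probability bounded below by a constant unless $\min\{m,n\}/d$ is large, giving per-block contribution $\Omega(\sqrt{d/\min\{m,n\}})$ after the usual $\sqrt{\cdot}$ loss from the $PQ$-style tradeoff between errors and rejections (the same phenomenon that produced $\sqrt{d/n}$ in Theorem \ref{thm:lower-gen}). (4) Sum the $d$ independent blocks and apply the probabilistic method to extract a single bad $f \in C$, converting the expectation bound over $f$ into an existential statement.

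The main obstacle I anticipate is step (2)–(3): designing the adversary so that the \emph{same} modified test set $\tbx$ is simultaneously consistent with (a) the scenario where rejecting hurts $\rej_\bz$ and (b) the scenario where not rejecting hurts $\err_\tbx$, while keeping $\bz$ and $\tbx$ close enough in Hamming distance that the $\rej_\tbx \le \rej_\bz + \Delta(\bz,\tbx)$ relation (\cref{eq:ham}) does not trivialize things — and making the indistinguishability quantitative enough to yield the $\sqrt{\min\{m,n\}}$ scaling rather than $\sqrt{\max\{m,n\}}$. Concretely, the delicate point is that a small test sample ($m$ small) limits how much honest signal the adversary has to hide behind, while a small training sample ($n$ small) limits what the learner knows; the construction must be balanced so that whichever of $m,n$ is the bottleneck controls the bound. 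I would handle this by making the per-block ground set size $N$ scale with $\max\{m,n\}$ but calibrating the number of decoy locations the adversary injects to scale with $\min\{m,n\}$, so that a union bound / Fano-type argument over the decoys gives the claimed rate. The rest — the VC computation, the expectation-to-existence averaging, and summing over blocks — I expect to be routine.
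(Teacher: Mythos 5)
Your proposal takes a genuinely different route from the paper. The paper uses a single concept class (functions with exactly $d$ ones on $\nats$, $P$ uniform over $[8n]$), and it handles the $\min\{m,n\}$ dependence by splitting into two separate bounds: $\Omega(\sqrt{d/n})$ is obtained essentially for free by reducing to the PQ lower bound (the adversary simply discards $\bz$ and samples $\tbx\sim Q^m$), while $\Omega(\sqrt{d/m})$ requires a fresh adversary that selects $a=\lfloor\sqrt{md}\rfloor$ unseen locations appearing in $\bz$ and $b=\lceil d/2\rceil$ unseen positive locations, and repeats each of these $(a+b)$ locations $r=\lfloor m/(a+b)\rfloor\approx\sqrt{m/d}$ times inside $\tbx$. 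Your block decomposition with a minimax/Yao argument per block is a defensible alternative strategy, and one can check that if each block carries roughly $r\approx\sqrt{m/d}$ copies of the positive location and $r$ decoy locations drawn from $\bz$, the resulting error-vs-rejection tradeoff does come out to $\Omega(\sqrt{d/m})$ — so the approach is not doomed. But you have not actually supplied the key quantitative ingredient, and what you did write suggests you have not done the computation: you say to calibrate ``the number of decoy locations ... to scale with $\min\{m,n\}$,'' whereas the correct scaling is roughly $\sqrt{md}$ decoy locations in total (i.e., $\sqrt{m/d}$ per block), each repeated about $\sqrt{m/d}$ times in $\tbx$. This repetition count is precisely what makes $\err_\tbx$ and $\rej_\bz$ both come out to order $\sqrt{d/m}$ when they are balanced against each other, and it is the whole content of the lower bound; your step (3) appeals to a ``Fano-type argument'' and ``the usual $\sqrt{\cdot}$ loss'' without ever pinning this down. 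Relatedly, your phrase ``per-block contribution $\Omega(\sqrt{d/\min\{m,n\}})$'' is dimensionally ambiguous: if you literally sum that over $d$ blocks you overshoot by a factor of $d$; what you must mean is that each block contributes about $\frac{1}{d}\Omega(\sqrt{d/\min\{m,n\}})$ to the global $\rej_\bz+\err_\tbx$, but that needs to be stated and justified, because the global quantities normalize by $m$ and the per-block sample sizes are $\approx m/d$. Finally, you do not separate the two regimes: when $n<m$, obtaining $\sqrt{d/n}$ is actually the easy half because the adversary can ignore $\bz$ entirely and reduce to \Cref{thm:lower-gen}; folding that into the block construction is possible but adds delicacy for no gain. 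I would recommend either filling in the explicit decoy counts and repetition factor in your block argument, or adopting the paper's two-case structure, which isolates the genuinely new adversarial ingredient.
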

This $\Omega(\sqrt{d/\min\{m,n\}})$ lower bound implies that one needs both $\Omega(d/\eps^2)$ training and test examples to guarantee $\eps$ error plus rejections. This is partly why, for simplicity, aside from the \Cref{thm:lower-trans}, our analysis takes $m=n$. The proofs of these two lower bounds are in \Cref{ap:lower-proofs}.

\begin{figure}[t!]\label{fig:WTP}
\rule[1ex]{\textwidth}{0.1pt}

$\wtp(\text{train } \bx\in X^n, \text{test } \tbx \in X^n, \text{error } \eps\in [0,1], \text{weight } \Lambda =n+1):$

\begin{itemize}
\itemsep0em
    \item For $t=1,2,3,\ldots:$
    \begin{enumerate}
    \item $S_t \coloneqq \{x \in X: c_1(x)=c_1'(x) \wedge \cdots \wedge c_{t-1}(x)=c_{t-1}'(x)\}$  ~~\# So $S_1 = X$
    \item Choose $c_t, c'_t \in \functions$ to maximize $s_t(c,c') \defeq \err_{\tbx}(c'|_{S_t}, c) - \Lambda \cdot \err_\bx(c', c)$\vspace{0.04in}\\
    ~~~~~~\# \Cref{lem:eff-wtp} shows how to maximize $s_t$ using $\ERM_{\DIS}$ ($\DIS$ is defined in \cref{eq:DIS})
    \item If $s_t(c_t, c'_t) \leq \eps$, then stop and return $S_t$
\end{enumerate}
\end{itemize}

\rule[1ex]{\textwidth}{0.1pt}
\caption{The $\wtp$ unsupervised algorithm takes unlabeled training examples and unlabeled test examples as input, and it outputs a set $S\subseteq X$ where classification should take place.
}\label{fig:nalgs}
\end{figure}

\paragraph{Unsupervised selection algorithm.}
Our unsupervised selection algorithm $\wtp$ is described in Figure~\ref{fig:nalgs}.  It takes as input only train and test examples $\bx,\tbx\in X^n$ along with an error parameter $\epsilon$ recommended to be $\tilde\Theta(\sqrt{d/n})$, and it outputs a set~$S$ of the selected elements.
$\wtp$ requires a more powerful black-box ERM---we show that $\wtp$ can be implemented efficiently if one can perform ERM with respect to the family of binary classifiers that are disagreements (xors) between two classifiers. For classifiers $c,c':X\rightarrow Y$, define $\dis_{c, c'}:X \rightarrow \{0,1\}$ and $\DIS$ as follows:
\begin{equation}
\dis_{c,c'}(x)\defeq\begin{cases}1 & \text{if } c(x)\neq c'(x)\\0 & \text{otherwise}\end{cases} ~~~\text{ and }~~~
\DIS\defeq\{\dis_{c,c'}:~c,c'\in C\}.\label{eq:DIS}
\end{equation}
\Cref{lem:eff-wtp} shows how $\wtp$ is implemented efficiently with an $\ERM_\DIS$ oracle.

Also, we show nearly identical guarantees to those of \Cref{wow:real-trans} for $\wtp$:
\begin{theorem}[Unsupervised]\label{cor:wtp-real-trans}
For any $n \in \nats$, any $\delta\geq 0$, and any distribution $P$ over $X$:
\[\Pr_{\bx,\bz\sim P^n}\left[\forall f\in C, \tbx \in X^n:~\left(\err_{\tbx}(h|_S) \leq \eps^*\right)\wedge\left(\rej_{\bz}(S)\leq \eps^*  \right)\right]\geq 1-\delta,\]
where $\eps^*=\sqrt{\frac{2d}{n}\log 2n} + \frac{1}{n}\log \frac{1}{\delta}$, $S=\wtp(\bx, \tbx, \eps^*)$ and $h=\ERM_C(\bx, f(\bx)).$
\end{theorem}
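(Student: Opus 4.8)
The plan is to reuse, almost verbatim, the two-part argument behind \Cref{wow:real-trans}, replacing the fixed anchor classifier $h$ by an arbitrary pair $c,c'\in C$ and the one-sided disagreement region $\{x:c(x)\neq h(x)\}$ by a member of $\DIS$. As in that proof the error bound is purely deterministic --- it holds for every $f\in C$ and every adversarial $\tbx$ at once, which is exactly what the ``$\forall f\in C$'' in the statement needs, since $\wtp$ ignores labels and therefore outputs the \emph{same} $S$ regardless of $f$ --- and the only place the failure probability $\delta$ is spent is the rejection bound.

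For the error bound I would fix $\bx,\tbx$, let $S=S_T=\wtp(\bx,\tbx,\eps^*)$ be the returned set, fix any $f\in C$, and set $h=\ERM_C(\bx,f(\bx))$. Since $f\in C$, the minimizer $h$ satisfies $\err_\bx(h,f)=0$, so in the notation of Figure~\ref{fig:nalgs} the pair $(f,h)$ gives $s_T(f,h)=\err_\tbx(h|_{S_T},f)-\Lambda\cdot\err_\bx(h,f)=\err_\tbx(h|_S)$. But the algorithm returned $S_T$ only because $\max_{c,c'\in C}s_T(c,c')=s_T(c_T,c'_T)\le\eps^*$ (the maximum is attained, as $s_T$ depends on $(c,c')$ only through their values on the finite set $\bx\cup\tbx$), and $(f,h)$ is admissible; hence $\err_\tbx(h|_S)=s_T(f,h)\le\eps^*$. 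This uses no assumption on $\tbx$ and covers every $f$.

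For the rejection bound I would first record two structural facts. (i) $\wtp$ never rejects a training point: the pair $(c,c)$ attains $s_t(c,c)=0$, so the maximizing pair has $s_t(c_t,c'_t)\ge 0$, i.e. $\Lambda\cdot\err_\bx(c'_t,c_t)\le\err_\tbx(c'_t|_{S_t},c_t)\le 1<\Lambda/n$; as $\err_\bx(c'_t,c_t)$ is a multiple of $1/n$ it must be $0$, so $c_t$ and $c'_t$ agree on all of $\bx$, every $x_i$ meets the defining conditions of $S_{t+1}$, and inductively $\bx\subseteq S$. (ii) By the analogue of \Cref{lem:eff} for $\wtp$, i.e. \Cref{lem:eff-wtp}, every non-final iteration removes more than an $\eps^*$-fraction of the test points still in $S_t$, so there are at most $1/\eps^*$ of them and $S=S_T$ is an intersection of at most $1/\eps^*$ sets of the form $\{x:c(x)=c'(x)\}$ with $c,c'\in C$; equivalently $\overline S$ is a union of at most $1/\eps^*$ members of $\DIS$. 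Let $\mathcal F$ be the data-independent family of all such intersections. A member of $\mathcal F$ is determined on any $m$ points by the restrictions of at most $2/\eps^*$ classifiers from $C$, so $\Pi_{\mathcal F}(m)\lesssim\Pi_C(m)^{2/\eps^*}$. Since $\bx\subseteq S\in\mathcal F$ always, the rejection bound follows from the uniform-convergence statement $\Pr_{\bx,\bz\sim P^n}[\exists F\in\mathcal F:\ \bx\subseteq F\ \text{and}\ \rej_\bz(F)>\eps^*]\le\delta$, which is the version of \Cref{lem:lol} appropriate for $\wtp$. I would prove it by the standard realizable double-sample argument: on a pool of $2n$ i.i.d.\ points split uniformly at random into $\bx$ and $\bz$, any fixed $\overline F$ that contains at least $\eps^* n$ of the pool has all of them land on the $\bz$-side with probability at most $2^{-\eps^* n}$, so a union bound over the $\lesssim\Pi_C(2n)^{2/\eps^*}$ realizable patterns of $\mathcal F$ gives failure probability $\lesssim\Pi_C(2n)^{2/\eps^*}2^{-\eps^* n}$, and a direct calculation verifies this is $\le\delta$ for $\eps^*=\sqrt{\tfrac{2d}{n}\log 2n}+\tfrac{1}{n}\log\tfrac{1}{\delta}$ --- the coefficient $2d$, which is already (slightly wastefully) present in \Cref{wow:real-trans}, is precisely what absorbs the extra factor of two in the exponent caused by selecting \emph{pairs} $c,c'$ rather than a single anchor.

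The deterministic error inequality and the ``no training point rejected'' claim are immediate; the real content is the uniform-convergence lemma, and the step I expect to be the main obstacle is handling the fact that $S$ is not a single bounded-VC set but an intersection of a \emph{data-dependent} number of disagreement regions. The argument must therefore be sequenced carefully: first cap that number by $O(1/\eps^*)$ via the per-round progress, and only then fix the resulting finite-complexity family $\mathcal F$ and symmetrize, checking that the $2^{-\eps^* n}$ tail dominates the $\Pi_C(2n)^{O(1/\eps^*)}$ union bound with the stated constants. Everything else is a transcription of the $\wwtp$ proof, with $\DIS$ in place of $\{x:c(x)\neq h(x)\}$ and with the observation that, because $\wtp$ uses no labels, its single output $S$ meets the error guarantee against every $f\in C$ simultaneously.
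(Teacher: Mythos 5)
Your proposal is correct and follows essentially the same route as the paper's proof: the paper establishes the general trade-off in Theorem~\ref{thm:wtp-real-trans} by (a) observing $s_T(f,h)=\err_\tbx(h|_S)$ since $\err_\bx(h,f)=0$, (b) noting $\Lambda=n+1$ forces $c_t,c'_t$ to agree on $\bx$ so no training point is rejected, (c) bounding $\Pi_{\Delta_T}[2n]\le(2n)^{2d/\eps}$ via Sauer's lemma with $T\le\lfloor 1/\eps\rfloor$ from Lemma~\ref{lem:eff-wtp}, and (d) invoking the symmetrization bound of Lemma~\ref{lemma:trans:aux}, then plugs in $\eps^*$ to get the corollary --- exactly your steps. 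The only cosmetic difference is that you fold the trade-off and the specific $\eps^*$ verification into one pass rather than stating the intermediate theorem separately.
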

The proof is given in \Cref{sec:urejectron-analysis} and follows from  
Theorem~\ref{thm:wtp-real-trans} which shows that by varying parameter $\epsilon$, one can achieve any trade-off $\err_{\tbx}\leq \epsilon$ and $\rej_{\bz}\leq \tilde{O}(\frac{d}{n\eps})$. Since one runs $\wtp$ without labels, it has guarantees with respect to any empirical risk minimizer $h$ which may be chosen separately, and its output is also suitable for a multi-class problem.

\paragraph{Massart noise.} We also consider two non-realizable models. First, we consider the Massart noise model, where there is an arbitrary (possibly adversarial) noise rate $\eta(x) \leq \eta$ chosen for each example.  We show that $\wwtp$ is a PQ learner in the Massart noise model with $\eta< 1/2$, assuming an ERM oracle and an additional $N=\tilde{O}\left(\frac{d n^2 }{\delta^2(1-2\eta)^2}\right)$  examples from $P$.  See \Cref{ap:massart} for details.

\paragraph{A semi-agnostic setting.} 
We also consider the following semi-agnostic model. % in the following two Theorems only
For an arbitrary distribution $D$ over $X \times Y$, again with $Y=\{0,1\}$, the analogous notions of rejection and error are:
\[\rej_D(S) \defeq \Pr_{(x,y)\sim D}[x \not\in S]  ~~~\text{and}~~~
\err_D(h|_S) \defeq \Pr_{(x,y)\sim D}[h(x)\neq y \wedge x \in S]
\]
In standard agnostic learning with respect to $D$, we suppose there is some classifier $f \in C$ with error $\err_D(f)\leq \eta$ and we aim to find a classifier whose generalization error is not much greater than $\eta$. In that setting, one can of course choose $\eta_{\text{opt}} \defeq \min_{f \in C} \err_D(f)$. For well-fitting models, where there is some classifier with very low error, $\eta$ may be small.

To prove any guarantees in our setting, the test distribution must somehow be related to the training distribution. To tie together the respective training and test distributions $\mu, \tmu$ over $X\times Y$, we suppose we know $\eta$ such that both $\err_\mu(f)\leq \eta$ and $\err_\tmu(f)\leq \eta$ for some $f \in C$. Even with these conditions,
\Cref{ag-lower} shows that one cannot simultaneously guarantee error rate on $\tmu$ and rejection rate on $\mu$ less than $\sqrt{\eta/8}$, and \Cref{wow:ag-gen} shows that our $\wwtp$ algorithm achieves a similar upper bound. This suggests that PQ-learning (i.e., adversarial SC) may be especially challenging in settings where ML is not able to achieve low error $\eta$.

% \begin{theorem}[Agnostic lower bound]\label{ag-lower2}
% For any $\eta\in [0,1/2]$ and $n \geq 4 \ln (8/\eta)$, there exists a family of binary classifiers $C$ of VC dimension 1, such that for any selective classification algorithm $L$ there exists $\mu,\tmu$ over $X\times Y$ and $f \in C$ such that $\err_\mu(f)\leq \eta$, $\err_{\tmu}(f)\leq \eta$ and:
% \[\E\nolimits_{\substack{(\bx,\by)\sim \mu^n\\(\tbx,\tilde\by)\sim\tmu^n}}\bigl[\err_\tmu(h|_S)+\rej_\mu(S)\bigr] \geq \frac{\sqrt{\eta}}{6}, ~\text{ where } h|_S = L(\bx, \by, \bz).\]
% \end{theorem}
% \begin{theorem}[Agnostic]\label{wow:ag-gen2}
% For any $n\in \nats$, any $\delta,\gamma \in (0,1)$, any $\eta\in [0,1)$, and any distributions $\mu, \tmu$ over $X\times Y$ such that that $\err_\mu(f)\leq \eta$  and $\err_\tmu(f)\leq \eta$ simultaneously for some $f\in C$:
% \[\Pr\nolimits_{\substack{(\bx,\by) \sim \mu^n\\(\tbx,\tilde\by) \sim \tmu^n}}\left[\left(\err_{\tmu}(h|_S) \leq 6\sqrt{\eta} + 8\eps^*\right) \wedge \left(\rej_\mu(S)  \leq 4\sqrt{\eta}+6\eps^*\right)\right]\geq 1-\delta,\]
% where $\eps^* = 4\sqrt{\frac{d \ln 2n+\ln 48/\delta}{n}}$, $\Lambda^* = \frac{1}{2}\sqrt{\frac{1}{\eta+(\eps^*)^2}}$, and $h|_S= \wwtp(\bx,\by,\tbx,\eps^*,\Lambda^*)$.
% \end{theorem}

% Nonetheless, for our unpredictable set $R$, $P(U)\leq \eps$ then $Q(U) -\eps$ is a lower bound on the \textit{statistical distance} between $P$ and $Q$, which can be defined as $\sup_{S \subseteq X} Q(S)-P(S)$. \omarnote{not clear what $R$ is}

\section{Experiments}\label{sec:experiments}
%\anote{Omar, please add to this section. We should keep both the random forest and the neural net experiments?}
Rather than classifying sensitive attributes such as explicit images, we perform simple experiments on handwritten letter classification from the popular EMNIST dataset \citep{cohen2017emnist}. For both experiments, the training data consisted of the eight lowercase letters \textit{a d e h l n r t}, chosen because they each had more than 10,000 instances. From each letter, 3,000 instances of each letter were reserved for use later, leaving 7,000 examples, each constituting 56,000 samples from $P$.

We then considered two test distributions, $\qadv, \qnat$ representing adversarial and natural settings. $\qadv$ consisted of a mix of 50\% samples from $P$ (the 3,000 reserved instances per lower-case letter mentioned above) and 50\% samples from an adversary that used a classifier $h$ as a black box. To that, we added 3,000 adversarial examples for each letter selected as follows: the reserved 3,000 letters were labeled by $h$ and the adversary selected the first misclassified instance for each letter. Misclassified examples are shown in Figure \ref{fig:evil}. It made 3,000 imperceptible modifications of each of the above instances by changing the intensity value of a single pixel by at most 4 (out of 256). The result was 6,000 samples per letter constituting 48,000 samples from $\qadv$.

\begin{figure}
\centering
\begin{subfigure}{.264\textwidth}
  \includegraphics[width=\textwidth]{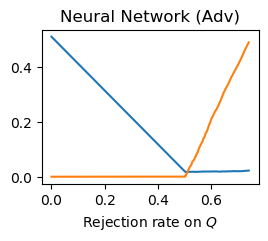}
\end{subfigure}%
\begin{subfigure}{.246\textwidth}
  \includegraphics[width=\textwidth]{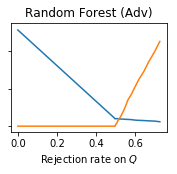}
\end{subfigure}%
\begin{subfigure}{.243\textwidth}
  \includegraphics[width=\textwidth]{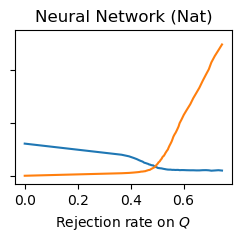}
\end{subfigure}%
\begin{subfigure}{.246\textwidth}
  \includegraphics[width=\textwidth]{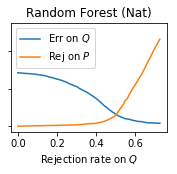}
\end{subfigure}
\caption{Trade-offs between rejection rate on $P$ and error rate on $Q$. %of the $\wtp$ algorithm. This corresponds to a neural network classifier $h_{\rm NN}$ trained on $P$, and a neural network distinguisher $h_{\rm NN}^{\rm Dis}$ trained to distinguish between $P$ and $Q$.
The error on $Q$ (in blue) is the fraction of errors \emph{among selected examples} (unlike $\err_Q$ in our analysis). }
\label{fig:tradeoff}
\end{figure}

% \paragraph{Natural tests $\qnat$.}
For $\qnat$, the test set also consisted of 6,000 samples per letter, with 3,000 reserved samples from $P$ as above. In this case, the remaining half of the letters were simply upper-case\footnote{In some datasets, letter classes consist of a mix of upper- and lower-case, while in others they are assigned different classes (EMNIST has both types of classes). In our experiments, they belong to the same class.} versions  of the letters \textit{A D E H L N R T}, taken from the EMNIST dataset (case information is also available in that dataset). Again the dataset size is 48,000. We denote this test distribution by $\qnat$.

%either a neural network $h_{\rm NN}^{\rm Dis}$ or a Random Forest Regressor $h_{\rm RF}^{\rm Dis}$

In Figure~\ref{fig:tradeoff}, we plot the trade-off between the rejection rate on $P$ and the error rate on $Q$ of the $\wtp$ algorithm. Since this is a multi-class problem, we implement the most basic form of the $\wtp$ algorithm, with $T=1$ iterations. Instead of fixing parameter $\Lambda$, we simply train a predictor $h^{\rm Dis}$ to distinguish between examples from $P$ and $Q$, and train a classifier $h$ on $P$. We trained two models, a random forest (with default parameters from scikit-learn \citep{pedregosa2011scikit}) and a neural network. Complete details are provided at the end of this section. We threshold the prediction scores of distinguisher $h^{\rm Dis}$ at various values. For each threshold $\tau$, we compute the fraction of examples from $P$ that are rejected (those with prediction score less than $\tau$), and similarly for $Q$, and the error rate of classifier $h$ on examples from $Q$ that are \textit{not} rejected (those with prediction score at least $\tau$). We see in Figure~\ref{fig:tradeoff} that for a suitable threshold $\tau$ our $\wtp$ algorithm achieves both low rejection rate on $P$ and low error rate on $Q$. Thus on these problems the simple algorithm suffices.

We compare to the state-of-the-art SC algorithm SelectiveNet \citep{GeifmanE19}. We ran it to train a selective neural network classifier on $P$. SelectiveNet performs exceptionally on $\qnat$, achieving low error and reject almost exclusively upper-case letters. It of course errs on all adversarial examples from $\qadv$, as will all existing SC algorithms (no matter how robust) since they all choose $S$ without using unlabeled test examples.

\paragraph{Models} A Random Forest Classifier $h_{\rm RF}$ from Scikit-Learn (default parameters including 100 estimators) \citep{pedregosa2011scikit} and a simple neural network $h_{\rm NN}$ consisting of two convolutional layers followed by two fully connected layers\footnote{\url{https://github.com/pytorch/examples/blob/master/mnist/main.py}} were fit on a stratified 90\%/10\% train/test split. The network parameters are trained with SGD with momentum ($0.9$), weight decay ($2\times 10^{-4}$), batch size ($128$), for $85$ epochs with a learning rate of $0.1$, that is decayed it by a factor of 10 on epochs 57 and 72. $h_{\rm RF}$ had a 3.6\% test error rate on $P$, and $h_{\rm NN}$ had a 1.3\% test error rate on $P$. %this balanced 8-class problem (and 0 training errors).

\paragraph{SelectiveNet}
SelectiveNet requires a target coverage hyperparameter which in our experiments is fixed to 0.7. We use an open-source PyTorch implementation of SelectiveNet that is available on GitHub \footnote{\url{https://github.com/pranaymodukuru/pytorch-SelectiveNet}}, with a VGG 16 architecure \citep{DBLP:journals/corr/SimonyanZ14a}. To accommodate the  VGG 16 architecure without changes, we pad all images with zeros (from 28x28 to 32x32), and duplicate the channels (from 1 to 3). SelectiveNet achieves  rejection rates of 21.08\% ($P$), 45.89\% ($\qnat$), and 24.04\% ($\qadv$), and error rates of 0.02\% ($P$), 0.81\% ($\qnat$), and 76.78\% ($\qadv$).

\begin{figure}
    \centering
    \includegraphics{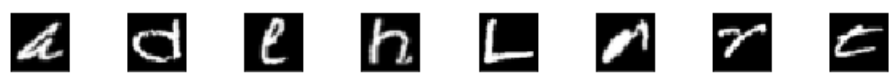}
    \caption{Adversarial choices of \textit{a d e h l n r t}, misclassified by the Random Forest classifier.}
    \label{fig:evil}
\end{figure}

\section{Conclusions}\label{sec:conclude}
The fundamental theorem of statistical learning states that an $\ERM$ algorithm for class $C$ is asymptotically nearly optimal requiring $\tilde{\Theta}(d/n)$ labeled examples for learning arbitrary distributions when $P=Q$ \citep[see, e.g.,][]{shalev2014understanding}. This paper can be viewed as a generalization of this theorem to the case where $P\neq Q$, obtaining $\tilde{\Theta}(\sqrt{d/n})$ rates. When $P=Q$, unlabeled samples from $Q$ are readily available by ignoring labels of some training data, but unlabeled test samples are necessary when $P\neq Q$. No prior such guarantee was known for arbitrary $P\neq Q$, even for simple classes such as intervals, perhaps because it may have seemed impossible to guarantee anything meaningful in the general case.

The practical implications are that, to address learning in the face of adversaries beyond perturbations (or drastic covariate shift), unlabeled examples and abstaining from classifying may be necessary. In this model, the learner can beat an unbounded white-box adversary. Even the simple approach of training a classifier to distinguish unlabeled train vs.\ test examples may be adequate in some applications, though for theoretical guarantees one requires somewhat more sophisticated algorithms.

% \newpage

%As a reminder, our goal is to have low rejection rate on $P$ and low error rate on $Q$.

\section*{Broader Impact}
%required section
In adversarial learning, this work can benefit users when adversarial examples are correctly identified. It can harm users by misidentifying such examples, and the misidentifications of examples as suspicious could have negative consequences just like misclassifications. This work ideally could benefit groups who are underrepresented in training data, by abstaining rather than performing harmful incorrect classification. However, it could also harm such groups: (a) by providing system designers an alternative to collecting fully representative data if possible; (b) by harmfully abstaining at different rates for different groups;  (c) when those labels would have otherwise been correct but are instead being withheld; and (d) by identifying them when they would prefer to remain anonymous.

Our experiments on handwriting recognition have few ethical concerns but also have less ecological validity than real-world experiments on classifying explicit images or medical scans.

% Use unnumbered first level headings for this section, which should go at the end of the paper. {\bf Note that this section does not count towards the eight pages of content that are allowed.}

% \begin{ack}
% Use unnumbered first level headings for the acknowledgments. All acknowledgments
% go at the end of the paper before the list of references. Moreover, you are required to declare
% funding (financial activities supporting the submitted work) and competing interests (related financial activities outside the submitted work).
% More information about this disclosure can be found at: \url{https://neurips.cc/Conferences/2020/PaperInformation/FundingDisclosure}.

% Do {\bf not} include this section in the anonymized submission, only in the final paper. You can use the \texttt{ack} environment provided in the style file to autmoatically hide this section in the anonymized submission.
% \end{ack}

\bibliographystyle{plainnat}
\bibliography{learning}
\newpage

\appendix

% \section{Further conclusions and future work}\label{sec:conclude2}
% When $P\neq Q$, selective classification is necessary to guarantee accuracy, since a spammer who identifies (or creates) a single misclassified example could repeat that example (or variations thereof) numerous times. We consider a setting where: (1) a classifier is learned based on labeled training data $(x,f(x))$ for $x \sim P$, (2) nature or an all-powerful white-box adversary selects a test set, and (3) the SC chooses where to predict $S \subseteq X$ based on training data and unlabeled samples $\tilde{x}\sim Q$. We show that, under a realizable assumption, a SC learner can still guarantee low error rate on $Q$ and low rejection rate on $P$. This paper does not improve the robustness of the underlying classifier which ideally would be chosen to be robust to perturbations. This work raises several interesting further questions.

% For instance, can one extend the algorithms to probabilistic prediction and regression, where the contrast between probabilities and certainty (i.e., one may be confident that $\Pr[y=1|x]=1/2$ versus

% ? Can the model be used to assign real-valued certainty ratings instead of decisions $x \in S$? Also, the current formulation of guaranteeing a low value $\err_P + \rej_Q$ in leaves significant wiggle room in how much of $\rej_Q$ can be rejected, in certain cases such as the test examples in group (3) of Figure \ref{fig:example}.

\section{Rejectron analysis (realizable)}\label{ap:otherproofs}

In this section, we present the analysis of $\wwtp$ in the realizable case $f \in C$. Say a classifier $c$ is \textit{consistent} if $c(\bx)=f(\bx)$ makes 0 training errors. \Cref{wow:real-trans} provides transductive guarantees on the empirical error and rejection rates, while \Cref{wow:real-gen} provides generalization guarantees that apply to future examples from $P, Q$. Both of these theorems exhibit trade-offs between error and rejection rates. At a high level, their analysis has the following structure:
\begin{itemize}
    \item $\wwtp$ selects a consistent $h=\ERM(\bx, f(\bx))$, since we are in the realizable case.
    \item Each $c_t$ is a consistent classifier that disagrees with $h|_{S_t}$ on the tests $\tbx$ as much as possible, with $s_t(c_t)=\err_\tbx(h|_{S_t}, c_t)$ (since  $\err_\bx(h, c_t)=0$). This follows the facts that $\Lambda > n$,  $s_t(h)=0$, and $s_t(c)<0$ for any inconsistent $c$. (The algorithm is defined for general $\Lambda<n$ for the agnostic analysis later.)
    \item Therefore, when the algorithm terminates on iteration $T$, it has empirical test error $\err_\tbx(h|_{S_T}, f)\leq \eps$ otherwise it could have chosen $c_t=f$.
    \item The number of iterations $T < 1/\eps$ since on each iteration an additional $\eps$ fraction of $\tbx$ is removed from $S_t$. Lemma \ref{lem:eff} states this and shows how to use an $\ERM$ oracle on an artificial dataset to efficiently find $c_t$.
    \item All training examples $x_i$ are in $S$ since each $c_t$ and $h$ agree on all $x_i$.
    \item Transductive error and rejection bounds:
    \begin{enumerate}
        \item For error, we have already argued that the empirical error $\err_\tbx \leq \eps$.
        \item For rejection, \Cref{lem:lol} states that it is unlikely that there would be any choice of $h, \bc=(c_1, \ldots, c_T)$ where the resulting $S(h, \bc)\defeq \{x \in X: h(x)=c_1(x)=\ldots=c_T(x)\}$ would contain all training examples but reject (abstain on) many ``true'' test examples $z_i$ since $\bx$ and $\bz$ are identically distributed. The proof uses Sauer's lemma.
    \end{enumerate}
    \item Generalization error and rejection bounds:
    \begin{enumerate}
        \item For error, \Cref{lem:omg} states that it is unlikely that there is any $h, \bc$ such that $\err_\tbx(h|_{S(h, \bc)})\leq \eps$ yet $\err_Q(h|_{S(h, \bc)})>2\eps$.
        \item For rejection rate, \Cref{lem:omg2} uses VC bounds to show that it is unlikely that $\rej_P(S(h, \bc))>\eps$ while $\rej_\bx(S(h, \bc))=0$.
    \end{enumerate}
    Both proofs use Sauer's lemma.
\end{itemize}

We next move to the transductive analysis since it is simpler, and it is also used as a stepping stone to the generalization analysis.

\subsection{Transductive guarantees (realizable)}

Note that $\wwtp$ rejects any $x\notin S$, where $S=S(h,\bc)$ is defined by
\begin{equation}\label{eq:S}
    S(h,\bc)\defeq \bigl\{x \in X: h(x)=c_1(x)=c_2(x)=\ldots=c_T(x)\bigr\}.
\end{equation}
In what follows, we prove the transductive analogue of a ``generalization'' guarantee for arbitrary $h \in C, \bc\in C^T$.  This will be useful when proving \Cref{wow:real-trans}.
\begin{lemma}\label{lem:lol}
For any $T,n\in\mathbb{N}$, any $\delta\geq 0$, and $\eps=\frac{1}{n}\left(d(T+1) \log(2n) + \log \frac{1}{\delta}\right)$:
\[\Pr_{\bx, \bz\sim P^n}\left[\exists \bc \in C^T, h \in C:~(\rej_\bx({S(h,\bc)})=0) \wedge (\rej_\bz({S(h,\bc)})>\eps) \right] \leq \delta.\]
\end{lemma}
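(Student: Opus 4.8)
The plan is to treat this as a uniform convergence statement over a structured class of ``selection sets'' and apply Sauer's lemma to bound its complexity. First I would observe that each selection set $S(h,\bc)$ with $h\in C$ and $\bc=(c_1,\ldots,c_T)\in C^T$ is determined by $T+1$ functions from $C$, and membership $x\in S(h,\bc)$ is the event that $h(x)=c_1(x)=\cdots=c_T(x)$. The key combinatorial point: on the fixed set of $2n$ points $\bx\cup\bz$, the number of distinct subsets of the form $S(h,\bc)$ is at most the number of distinct labelings that $(T+1)$-tuples from $C$ can induce, which by Sauer's lemma is at most $\bigl(\sum_{i=0}^d \binom{2n}{i}\bigr)^{T+1} \le (2n)^{d(T+1)}$ (for $2n\ge 2$, using $d\ge 1$). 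So there are at most $(2n)^{d(T+1)}$ relevant ``bad events'' to union-bound over.

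Next I would fix one such selection set $S=S(h,\bc)$ and bound the probability of the bad event for it. The event $\rej_\bx(S)=0$ means all $n$ points of $\bx$ land in $S$; the event $\rej_\bz(S)>\eps$ means more than an $\eps$ fraction of $\bz$ land outside $S$. Since $\bx,\bz\sim P^n$ are i.i.d.\ and independent of each other and of this fixed $S$ (crucially, $S$ ranges over a class defined without reference to the samples — this is why I phrase it as ``there exists $\bc,h$'' rather than using the algorithm's adaptively chosen ones), the standard argument applies: if $\rej_P(S)\le\eps$ then $\rej_\bz(S)>\eps$ is already unlikely, but the real content is the symmetrization-style bound. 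Actually the cleanest route: condition on the bad event requiring $\bar S$ (the complement) to miss all of $\bx$ but capture $>\eps n$ points of $\bz$; for a \emph{fixed} set $\bar S$, the probability that $n$ i.i.d.\ draws all avoid $\bar S$ while another $n$ i.i.d.\ draws hit it more than $\eps n$ times is at most $(1-\rej_P(\bar S))^n$ when $\rej_P(\bar S)$ is large, and this is at most $e^{-\eps n}$ whenever $\rej_\bz>\eps$ forces $\rej_P$ to be non-negligible — more carefully, one splits on whether $\rej_P(S)\le \eps$: if so, $\Pr[\rej_\bz(S)>\eps]$ is small by a Chernoff/multiplicative bound; if $\rej_P(S)>\eps$, then $\Pr[\rej_\bx(S)=0]=(1-\rej_P(S))^n < e^{-\eps n}$. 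Either way the per-set failure probability is at most $e^{-\eps n}$ (up to constants I would track carefully to match the stated $\eps$).

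Then I would union-bound: the total failure probability is at most $(2n)^{d(T+1)}\cdot e^{-\eps n}$, and setting this $\le\delta$ requires $\eps n \ge d(T+1)\ln(2n) + \ln(1/\delta)$, i.e.\ $\eps \ge \frac1n\bigl(d(T+1)\ln(2n)+\ln(1/\delta)\bigr)$. Since the lemma states $\eps = \frac1n\bigl(d(T+1)\log(2n)+\log\frac1\delta\bigr)$ with $\log=\log_2$, which is at least the natural-log version, this suffices (the base-2 logs only make $\eps$ larger, hence the bound easier). I would also handle the degenerate case $\delta=0$: then $\eps=\infty$ (or the statement is vacuous as $\rej_\bz$ can never exceed $\infty$), so there is nothing to prove.

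The main obstacle I anticipate is getting the two-sample / one-sample probability bound clean with the right constants, in particular arguing the independence correctly: the set $S(h,\bc)$ must be quantified over the \emph{class} $\{S(h,\bc): h\in C, \bc\in C^T\}$ \emph{before} looking at $\bx,\bz$, so that for each fixed member the samples are genuinely independent of it; the Sauer bound controls how many distinct members can ``matter'' on the realized $2n$ points, and a careful statement (e.g.\ via the standard ghost-sample symmetrization, or directly since here we don't even need symmetrization because one of the two conditions, $\rej_\bx(S)=0$, is itself an extreme-tail event) closes the gap. I'd expect the rest — Sauer's lemma application and the union bound arithmetic — to be routine.
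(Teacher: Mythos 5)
Your high-level plan---bound the number of distinct restrictions of $\{S(h,\bc)\}$ via Sauer's lemma, get a per-set exponential tail, union bound---matches the paper's strategy, but the per-set probability and the union bound are quantified inconsistently, and that is the real content of the lemma, not a constant-tracking detail. You compute a per-set failure probability of roughly $e^{-\eps n}$ by \emph{fixing} $S$ a priori and treating $\bx,\bz$ as two independent iid samples, then you multiply this by $(2n)^{d(T+1)}$, which is the number of distinct \emph{restrictions} of the sets $S(h,\bc)$ to the realized $2n$ points $\bx\cup\bz$. These two quantities live in incompatible probability spaces: the count $(2n)^{d(T+1)}$ is only available after conditioning on the realized points, but once you condition, your per-set calculation over fresh iid draws no longer applies. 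You mention ghost-sample symmetrization as a possible fix, but then assert you ``don't even need symmetrization because $\rej_\bx(S)=0$ is an extreme-tail event.'' That claim is unsupported: the extreme-tail nature of one of the two conditions does not, on its own, let you union over a data-dependent family, and the one-sided bound you are implicitly invoking (Blumer et al.'s Theorem A2.1, which the paper uses elsewhere as \Cref{lemma:gen:aux}) proves that uniformity by doing a ghost-sample argument internally. Your ``split on $\rej_P(S)\lessgtr\eps$'' per-set estimate also has a secondary defect at the boundary: when $\rej_P(S)$ is close to $\eps$, $\Pr[\rej_\bz(S)>\eps]$ is near $1/2$, not $e^{-\Omega(\eps n)}$, so you would have to split at $\eps/2$ and lose constants.

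The paper closes exactly this gap via the two-sample permutation argument of \Cref{lemma:trans:aux}: first condition on the unordered multiset $\bw$ of all $2n$ points so that the $\Pi_{\Dis_T}[2n]\leq(2n)^{d(T+1)}$ labelings induced by $\dis_{h,\bc}$ are \emph{fixed}; then, for each such labeling with $\geq\eps n$ ones, bound the probability over the uniformly random split of $\bw$ into $\bx$ and $\bz$ that every one of those $\geq\eps n$ ``rejected'' points lands in the $\bz$ half. Each successive rejected point lands in the test half with conditional probability at most $1/2$, so this probability is at most $2^{-\eps n}$; the union bound then yields $(2n)^{d(T+1)}2^{-\eps n}=\delta$ for exactly the stated $\eps$ (with base-2 logarithms, no slack). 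If you commit to the split-and-permute formulation from the start---rather than treating the per-set event as one over two fresh iid samples---the per-restriction bound is clean, the uniformity is honest, and the constants come out matching.
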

This lemma is proven in \Cref{sec:gen}.
Using it, we can show a trade-off between error and rejection rate for the transductive case.
\begin{theorem}\label{thm:real-trans}
For any $n\in\mathbb{N}$, any $\eps,\delta\geq 0$, any $f \in C$:
\begin{equation}\label{eqn:realize-trans-err}
\forall \bx,\tbx\in X^n:~\err_{\tbx}\bigl(h|_S, f\bigr)  \leq \eps,
\end{equation}
where $h|_S=\wwtp(\bx,f(\bx),\tbx,\eps)$,
and for any distribution $P$ over $X$,
\begin{equation}\label{eqn:realize-trans-acc}
\Pr_{\bx,\bz\sim P^n}\left[\forall \tbx\in X^n:~\rej_{\bz}\bigl(S\bigr)
%+\err_{\bz}\bigl(h|_S, f\bigr)
\leq \frac{1}{n}\left(\frac{2d}{\eps} \log(2n) + \log \frac{1}{\delta}\right) \right]\geq 1-\delta.
\end{equation}
\end{theorem}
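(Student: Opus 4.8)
The plan is to establish the two assertions of \Cref{thm:real-trans} separately: the error bound \eqref{eqn:realize-trans-err} is a purely structural property of $\wwtp$ that holds for \emph{every} $\bx,\tbx$, whereas the rejection bound \eqref{eqn:realize-trans-acc} follows by invoking the uniform-convergence statement of \Cref{lem:lol} on the selection set that $\wwtp$ outputs.

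For the error bound, I would first note that since $f\in C$, the classifier $h=\ERM(\bx,f(\bx))$ is consistent, i.e.\ $\err_\bx(h,f)=0$, and therefore $s_t(f)=\err_\tbx(h|_{S_t},f)-\Lambda\,\err_\bx(h,f)=\err_\tbx(h|_{S_t},f)$ for every iteration $t$. By \Cref{lem:eff} the algorithm halts, say after producing classifiers $c_1,\dots,c_T$ and then choosing $c_{T+1}$ maximizing $s_{T+1}$ with $s_{T+1}(c_{T+1})\le\eps$; its output is $h|_S$ with $S=S_{T+1}$. Since $c_{T+1}$ maximizes $s_{T+1}$ over $C$ and $f\in C$, we get $\err_\tbx(h|_S,f)=s_{T+1}(f)\le s_{T+1}(c_{T+1})\le\eps$, which is \eqref{eqn:realize-trans-err}, and nothing in this argument refers to $P$ or to how the adversary picked $\tbx$.

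For the rejection bound the crucial structural fact is that $S$ never rejects a training point. I would argue that each $c_t$ is consistent: $s_t(h)=0$ because $\err_\tbx(h|_{S_t},h)=0$ (a point of $S_t$ cannot be a disagreement between $h|_{S_t}$ and $h$) and $\err_\bx(h,h)=0$, so $s_t(c_t)\ge s_t(h)=0$; on the other hand any $c$ inconsistent with $f$ on $\bx$ disagrees with $h$ on at least one training example, whence $s_t(c)\le 1-\Lambda/n=-1/n<0$ since $\Lambda=n+1$. Hence $h,c_1,\dots,c_T$ all agree with $f$ on $\bx$, so $\bx\subseteq S(h,(c_1,\dots,c_T))$ and $\rej_\bx(S)=0$. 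Moreover \Cref{lem:eff} gives $T\le T_{\max}:=\lfloor 1/\eps\rfloor$, and appending copies of $h$ to $(c_1,\dots,c_T)$ changes neither the set $S(h,\cdot)$ nor the property $\rej_\bx=0$, so the output is always of the form $S(h,\bc)$ with $\bc\in C^{T_{\max}}$ and $\rej_\bx(S(h,\bc))=0$. I then apply \Cref{lem:lol} with $T=T_{\max}$ and the given $\delta$: with probability at least $1-\delta$ over $\bx,\bz\sim P^n$, \emph{every} $h\in C$ and $\bc\in C^{T_{\max}}$ with $\rej_\bx(S(h,\bc))=0$ satisfies $\rej_\bz(S(h,\bc))\le\tfrac{1}{n}\bigl(d(T_{\max}+1)\log(2n)+\log\tfrac{1}{\delta}\bigr)$. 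On this event, because the $\wwtp$ output has $\rej_\bx(S)=0$ and is of the required form, and because $T_{\max}+1=\lfloor 1/\eps\rfloor+1\le 2/\eps$ for $\eps\le 1$, we conclude $\rej_\bz(S)\le\tfrac{1}{n}\bigl(\tfrac{2d}{\eps}\log(2n)+\log\tfrac{1}{\delta}\bigr)$, simultaneously for every adversarial $\tbx\in X^n$ since the high-probability event concerns only $\bx,\bz$.

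The one genuinely delicate point is the adaptivity of the adversary: $\tbx$, and hence the classifiers $c_t$ and the iteration count $T$, are chosen after seeing $\bx$, so one cannot naively union bound over a fixed finite family of selection sets. This is exactly what \Cref{lem:lol} is built to absorb---it is uniform over the entire family of sets $S(h,\bc)$ realizable with $T_{\max}$ classifiers, and $T_{\max}$ is a data-independent quantity handed to us by \Cref{lem:eff}. Granted those two ingredients, the remaining work (consistency of the $c_t$, the harmless padding, and the inequality $\lfloor 1/\eps\rfloor+1\le 2/\eps$) is routine bookkeeping.
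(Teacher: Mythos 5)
Your proof is correct and follows essentially the same route as the paper's: for the error bound, exploit consistency of $h$ so that $s_t(f)=\err_\tbx(h|_{S_t},f)$ and invoke the termination condition; for the rejection bound, argue that $\Lambda>n$ forces every $c_t$ to agree with $h$ on all training points so $\rej_\bx(S)=0$, pad to exactly $T=\lfloor 1/\eps\rfloor$ classifiers, and apply \Cref{lem:lol} using $T+1\le 2/\eps$. The only cosmetic difference is that you spell out explicitly why inconsistent $c$ gives $s_t(c)<0\le s_t(h)$, a step the paper states more tersely in its proof body but elaborates in its preceding high-level outline; the substance and the key uniform-convergence step are identical.
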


We note that a natural alternative formalization of Equation~\eqref{eqn:realize-trans-acc} would be to require that
\[
\Pr_{\bx\sim P^n}\left[\forall \tbx\in X^n:~\rej_{P}\bigl(S\bigr)
%+\err_{\bz}\bigl(h|_S, f\bigr)
\leq \frac{1}{n}\left(\frac{2d}{\eps} \log(2n) + \log \frac{1}{\delta}\right) \right]\geq 1-\delta.
\]
However, the formalization of Equation~\eqref{eqn:realize-trans-acc} is stronger, as it guarantees that the rejection probability is small, even if the adversary is {\em ``white-box"} and chooses $\tilde{\bf x}$ after seeing~${\bf z}$.

\begin{proof}[Proof  of \Cref{thm:real-trans}]
We start by proving \cref{eqn:realize-trans-err}.  To this end, fix any $n\in\mathbb{N}$, any $\epsilon>0$, any $f\in\functions$, and any $\bx,\tbx\in X^n$.  Let
$h=\ERM(\bx, f(\bx))$.  Since we are in the realizable case, this implies that $h$ has zero training error, i.e., $\err_\bx(h, f)=0$, and hence $s_t(h)=\err_{\tbx}(h|_{S_t}, f)$ for all $t$. Thus, the algorithm cannot terminate on any iteration where $\err_{\tbx}(h|_{S_t}, f)  > \eps$ since it can always select $c_t=f\in C$.  This proves Equation~\eqref{eqn:realize-trans-err}.

It remains to prove \cref{eqn:realize-trans-acc}.  By \Cref{lem:eff}, $T=\lfloor 1/\eps \rfloor$ is an upper bound on the number of completed iterations of the algorithm. WLOG there are exactly $T$ iterations because if there were actually $T'<T$ iterations, simply ``pad'' them with $c_{T'+1}=\ldots=c_T=h$ which doesn't change~$S$.

We note that the algorithm selects all training examples.  This follows from the fact that $\Lambda > n$, together with the fact that $h(x_i)=f(x_i)$ for every $i\in[n]$, where the latter follows from the fact that  $f\in C$ and $h=\ERM(\bx, f(\bx))$. By \Cref{lem:lol}, with probability $\geq 1-\delta$ there are no choices $h \in C, \bc=(c_1, \ldots, c_T) \in C^T$ for which $S(h, \bc)$ contains all $x_i$'s but is missing $\geq \eps'$ fraction of $\bz$ for $\eps'=\frac{1}{n}\left(\frac{2d}{\eps} \log(2n) + \log \frac{1}{\delta}\right)$ since $T+1\leq 2/\eps$.
\end{proof}

\Cref{wow:real-trans} is a trivial corollary of~ \Cref{thm:real-trans}.
\begin{proof}[Proof of \Cref{wow:real-trans}]
Recall $\eps^* = \sqrt{\frac{2d}{n}\log 2n}+\frac{1}{n}\log \frac{1}{\delta}$ and $h|_S=\wwtp(\bx,f(\bx),\tbx,\eps^*)$.
The proof follows from \Cref{thm:real-trans} and the fact that:
\[ \frac{1}{n}\left(\frac{2d}{\eps^*}\log 2n  + \log \frac{1}{\delta}\right)\leq \frac{2d \log 2n}{n\sqrt{\frac{2d}{n}\log 2n}}+\frac{1}{n}\log\frac{1}{\delta} =\eps^*.\]
\end{proof}

\subsection{Generalization guarantees (realizable)}
Before we state our generalization guarantees, analogous to \Cref{lem:lol} above, we prove that low test error and low training rejection rates imply, with high probability, low generalization error and rejection rates.
\begin{lemma}\label{lem:omg}
For any $\delta>0, \eps\geq \frac{8 \ln 8/\delta}{n} + \sqrt{\frac{8d \ln 2n}{n}}$, $T\leq 1/\eps$, any $f,h \in C$ and any distribution $Q$ over $X$,
\[\Pr_{\bz\sim Q^n}\left[\exists\bc \in C^T: \left(\err_{Q}(h|_{S(h,\bc)},f)>2\eps\right)\wedge\left(\err_\bz(h|_{S(h,\bc)},f)\leq \eps\right)\right]\leq \delta.
\]
\end{lemma}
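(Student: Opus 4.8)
The plan is to prove this via a uniform-convergence argument over the class of selective classifiers definable by $h$ together with a tuple $\bc \in C^T$. First I would isolate the relevant ``bad event'' class: for fixed $h,f$, each choice of $\bc=(c_1,\ldots,c_T)\in C^T$ gives a set $S(h,\bc)=\{x: h(x)=c_1(x)=\cdots=c_T(x)\}$, and the error indicator $x \mapsto \ind[h(x)\neq f(x) \wedge x \in S(h,\bc)]$ is a $\{0,1\}$-valued function of $x$. I would bound the number of distinct such indicator functions, as $\bc$ ranges over $C^T$, restricted to any sample of size $n$: by Sauer's lemma each $c_j$ realizes at most $(2n)^d$ distinct labelings on a sample of $2n$ points (we may need $2n$ for a symmetrization/double-sample step), so the tuple realizes at most $(2n)^{dT}$ labelings, and the resulting family of error-indicator functions has size at most $(2n)^{dT} \le (2n)^{d/\eps}$ using $T \le 1/\eps$.

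The second step is a one-sided relative/multiplicative Chernoff bound combined with a union bound over this finite family. The event we want to rule out is that some $\bc$ has true error $\err_Q(h|_{S(h,\bc)},f) > 2\eps$ but empirical error $\err_\bz(h|_{S(h,\bc)},f) \le \eps$ on the iid sample $\bz \sim Q^n$ — i.e. the empirical error is less than half the true error. For a fixed function with mean $p > 2\eps$, a multiplicative Chernoff bound gives $\Pr[\hat p \le p/2] \le \exp(-np/8) \le \exp(-n\cdot 2\eps/8) = \exp(-n\eps/4)$. Union-bounding over the $\le (2n)^{d/\eps}$ functions, the failure probability is at most $(2n)^{d/\eps}\exp(-n\eps/4) = \exp\!\big(\tfrac{d}{\eps}\ln(2n) - \tfrac{n\eps}{4}\big)$. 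It remains to check that the hypothesis $\eps \ge \frac{8\ln(8/\delta)}{n} + \sqrt{\frac{8d\ln 2n}{n}}$ forces this to be $\le \delta$; plugging in, the $\sqrt{8d\ln(2n)/n}$ term handles $\frac{d}{\eps}\ln(2n)$ against half of $\frac{n\eps}{4}$, and the $\frac{8\ln(8/\delta)}{n}$ term handles $\ln(1/\delta)$ against the other half, so $\frac{n\eps}{4} \ge \frac{d}{\eps}\ln(2n) + \ln(1/\delta)$ up to the constants chosen — I would verify the arithmetic carefully here but it is routine.

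There are two subtleties I would flag as the main obstacles. First, the clean multiplicative Chernoff bound $\Pr[\hat p \le p/2]\le e^{-np/8}$ applies directly for a \emph{fixed} function, but $h$ and $f$ are fixed in the lemma statement (only $\bc$ varies), so a plain union bound over the Sauer-bounded family suffices and no double-sampling is strictly needed — this simplifies things, and I would double-check whether the factor $(2n)^d$ versus $n^d$ matters for the stated constants (it does not, since the bound already writes $\log(2n)$). Second, I need to be slightly careful that the family of error-indicator functions really has size $\le (2n)^{dT}$: two different tuples $\bc$ can induce the same $S$ and hence the same indicator, so $(2n)^{dT}$ is an over-count, which is fine for an upper bound. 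The only genuinely delicate part is matching the specific constants ($8$'s) in the hypothesis on $\eps$ to what the Chernoff-plus-union-bound yields; I would organize the final inequality as ``$\eps \ge A + B$ with $A = \frac{8\ln(8/\delta)}{n}$ and $B=\sqrt{8d\ln(2n)/n}$ each separately killing one half of $n\eps/4$'' to make the constant-chasing transparent.
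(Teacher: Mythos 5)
Your high-level plan matches the paper's: fix $h,f$, form the error-indicator class $G = \{g_\bc\}_{\bc\in C^T}$ where $g_\bc(x) = \ind[h(x)\neq f(x)\ \wedge\ x\in S(h,\bc)]$, bound its growth function by $(2n)^{dT}\le (2n)^{d/\eps}$ via Sauer, and then combine a one-sided relative deviation bound with that growth-function bound to control the probability that some $\bc$ has true error $>2\eps$ but empirical error $\le\eps$. The paper packages this last step as Lemma~\ref{lemma:gen:aux}, Eq.~\eqref{eqn:Blumer2}, which it attributes to Theorem A3.1 of \cite{Blumer89} (applied with $\gamma = 1/2$ and $\eps \to 2\eps$, giving $8\Pi_G[2n]\,e^{-\eps n/8}\le\delta$). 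That classical theorem is a symmetrization (double-sample) argument, not a plain union bound.

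The genuine gap is in your ``first subtlety.'' You write that because $h,f$ are fixed, ``a plain union bound over the Sauer-bounded family suffices and no double-sampling is strictly needed.'' This is incorrect. Fixing $h,f$ only removes the need to union-bound over those; the class $G$ is still indexed by $\bc\in C^T$ and is in general infinite (halfspaces, intervals, etc.). The quantity $\Pi_G[2n]$ counts distinct \emph{restrictions} of $G$ to a sample, not distinct functions: two distinct $\bc$'s can induce the same labeling on $\bz$ while having wildly different true errors $\err_Q$, so the ``finite family'' you propose to union-bound over is not a family on which the Chernoff bound can be applied. The double-sample/ghost-sample step is precisely what converts the growth-function count into a valid union bound, and it is not optional here. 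This also shows up in the constants: without symmetrization you correctly get exponent $-n\eps/4$, but symmetrization costs a factor of $2$ in the exponent (splitting the deviation across the two halves) and an extra prefactor of $8$, which is exactly why the paper's hypothesis reads $\eps \ge \frac{8\ln(8/\delta)}{n} + \sqrt{\frac{8d\ln 2n}{n}}$ and why the exponent lands at $-\eps n/8$ rather than your $-\eps n/4$. Your earlier parenthetical ``we may need $2n$ for a symmetrization/double-sample step'' had it right; the later paragraph that walks it back is the error. With the symmetrization step restored, your argument becomes the paper's argument, and the constant-chasing you sketch then matches the paper's last few displayed inequalities.
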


\begin{lemma}\label{lem:omg2}
For any $T\geq 1$, any $f \in C$ and any distribution $P$ over $X$,
\[\Pr_{\bx\sim P^n}\left[\exists h\in C,\bc \in C^T: \left(\rej_P({S(h,\bc)})>\xi\right)\wedge\left(\rej_\bx(S(h,\bc))=0\right)\right]\leq \delta,
\]
where $\xi=\frac{2}{n}(d(T+1)\log(2n)+\log \frac{2}{\delta})$. Also,
\[\Pr_{\bx\sim P^n}\left[\exists h\in C,\bc \in C^T: \left(\rej_P(S(h,\bc))>2\alpha\right)\wedge\left(\rej_\bx(S(h,\bc))\leq \alpha\right)\right]\leq \delta,
\]
for any $\alpha\geq \frac{8}{n}(d(T+1)\ln(2n) + \ln \frac{8}{\delta})$.

\end{lemma}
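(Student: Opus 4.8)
The plan is to view both inequalities as uniform-convergence statements over the single set system of candidate rejection regions $\mathcal{G} \defeq \bigl\{\barS(h,\bc) : h \in C,\ \bc \in C^T\bigr\}$, where $\barS(h,\bc) = X \setminus S(h,\bc) = \bigcup_{i=1}^T \{x : c_i(x) \neq h(x)\}$, and to apply the standard double-sampling toolkit to $\mathcal G$. The only step that uses the hypothesis $\VC(C)=d$ is a bound on the growth function of $\mathcal G$: for any finite $A\subseteq X$ of size $m$, the trace $\{G\cap A : G\in\mathcal G\}$ has at most $\Pi_C(m)^{T+1}$ elements, where $\Pi_C$ denotes the growth function of $C$, because $G\cap A$ is determined by the $T+1$ restrictions $h|_A,c_1|_A,\dots,c_T|_A$, each taking at most $\Pi_C(m)$ values; by Sauer's lemma this is at most $(2n)^{d(T+1)}$ whenever $m\le 2n$. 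This is the conceptual core; the rest is bookkeeping.

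For the first inequality I would reduce to \Cref{lem:lol} via a ghost sample. Draw $\bz\sim P^n$ independently of $\bx$, and note that $\rej_\bx(S(h,\bc))=0$ means $\barS(h,\bc)$ contains no training point, while $\rej_P(S(h,\bc))=P(\barS(h,\bc))$. If a witness $\barS(h,\bc)$ has $P$-mass $p>\xi$, then $n\cdot\rej_\bz(S(h,\bc))$ is $\mathrm{Binomial}(n,p)$ with mean $pn>\xi n$, so a multiplicative Chernoff bound gives $\rej_\bz(S(h,\bc))\ge \xi/2$ with probability at least $1/2$ (using that $\xi n\ge 2\bigl(d(T+1)\log 2n+\log\tfrac2\delta\bigr)$ exceeds a small absolute constant). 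Hence the probability in the statement is at most $2\,\Pr_{\bx,\bz}\bigl[\exists h,\bc:\ \rej_\bx(S(h,\bc))=0\ \wedge\ \rej_\bz(S(h,\bc))>\xi/2\bigr]$, and \Cref{lem:lol} applied with failure probability $\delta/2$ bounds this by $\delta/2$, precisely because $\xi/2=\tfrac1n\bigl(d(T+1)\log 2n+\log\tfrac2\delta\bigr)$. Altogether the probability is at most $\delta$.

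For the second inequality I would run the one-sided, relative version of the same symmetrization argument, or equivalently invoke a standard relative VC deviation inequality for $\mathcal G$. Pass again to a ghost sample $\bz$: conditioned on a witness with $\rej_\bx\le\alpha$ and $\rej_P>2\alpha$, a Chernoff bound gives $\rej_\bz\ge\tfrac32\alpha$ with probability $\ge 1/2$, so it suffices to bound $\Pr_{\bx,\bz}\bigl[\exists h,\bc:\ \rej_\bx(S(h,\bc))\le\alpha\ \wedge\ \rej_\bz(S(h,\bc))\ge\tfrac32\alpha\bigr]$. Condition on the $2n$-point multiset $\bx\uplus\bz$ and on which of those points lie in $\barS(h,\bc)$; over a uniformly random split into $\bx$ and $\bz$ the number of such points falling in $\bx$ is hypergeometric with population count $\ge\tfrac32\alpha n$, and the unbalanced deviation ``at most $\alpha n$ in $\bx$ but at least $\tfrac32\alpha n$ in $\bz$'' has probability $e^{-\Omega(\alpha n)}$. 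A union bound over the $\le\Pi_C(2n)^{T+1}\le(2n)^{d(T+1)}$ distinct traces, followed by choosing $\alpha$ large enough to absorb this factor, yields the bound $\delta$. Equivalently, from the packaged relative bound $\rej_P(S)\le\rej_\bx(S)+c\sqrt{\rej_P(S)\,\beta}$ holding uniformly over the family with $\beta=\tfrac1n\bigl(d(T+1)\log 2n+\log\tfrac1\delta\bigr)$, one checks by elementary algebra that $\rej_\bx(S)\le\alpha$ and $\rej_P(S)>2\alpha$ would force $\rej_P(S)<4c^2\beta\le 2\alpha$ once $\alpha$ exceeds a constant multiple of $\beta$, a contradiction.

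The main obstacle is not conceptual: once the growth-function estimate $\Pi_C(m)^{T+1}$ is in hand, both parts are routine $\epsilon$-net and relative-deviation arguments. The only delicate point is tracking constants through the Chernoff and hypergeometric tail bounds in the second part so that the threshold lands exactly at $\alpha\ge\tfrac8n\bigl(d(T+1)\ln 2n+\ln\tfrac8\delta\bigr)$, which is handled most cleanly by citing a standard relative VC inequality rather than re-deriving the tails by hand.
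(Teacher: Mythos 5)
Your proof is correct, and the shared crux (bounding the growth function of the rejection regions $\barS(h,\bc)$ by $\Pi_C(m)^{T+1}\le(2n)^{d(T+1)}$ via Sauer) is exactly the paper's Lemma~\ref{lem:complexity}. Where you differ is in the first inequality: the paper proves it by citing a packaged one-sided VC bound (Theorem~A2.1 of \cite{Blumer89}, repackaged as \cref{eqn:Blumer1} of Lemma~\ref{lemma:gen:aux}) applied directly to the family $\Dis_T$, whereas you reduce it to the paper's own \emph{transductive} Lemma~\ref{lem:lol} by drawing a ghost sample $\bz\sim P^n$ and applying a multiplicative Chernoff bound to argue that $\rej_P>\xi$ forces $\rej_\bz>\xi/2$ with probability at least $1/2$. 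This is a nice, more self-contained route that reuses infrastructure the paper already builds, at the cost of the factor-of-$2$ symmetrization loss; the numbers work out precisely because $\xi/2 = \tfrac1n\bigl(d(T+1)\log 2n + \log\tfrac2\delta\bigr)$ matches Lemma~\ref{lem:lol} with failure probability $\delta/2$. One thing worth flagging is that your Chernoff step is tight: $\Pr[\rej_\bz > \xi/2]\geq 1/2$ needs $\xi n\gtrsim 8\ln 2$, which holds here (since $d(T+1)\log 2n + \log\tfrac2\delta \geq 3$) but only just, so it is worth stating the numeric check explicitly. For the second inequality you essentially reconstruct, or propose to cite, the same relative-deviation bound the paper uses (Theorem~A3.1 of \cite{Blumer89}, i.e.\ \cref{eqn:Blumer2} with $\gamma=1/2$ and $\eps=2\alpha$); your swap/hypergeometric sketch is the standard derivation, and you correctly note that the only delicate point is constant-tracking to hit the threshold $\alpha\ge\tfrac8n\bigl(d(T+1)\ln 2n+\ln\tfrac8\delta\bigr)$.
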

We mention that the first inequality in \Cref{lem:omg2}  is used to provide generalization guarantees in the realizable setting, whereas the latter inequality is used to provide guarantees in the semi-agnostic setting.

\begin{theorem}\label{thm:real-gen}
For any $n \in\mathbb{N}$ and  $\delta>0$, any $\eps \geq  \sqrt{\frac{8d \ln 2n}{n}}+\frac{8 \ln 8/\delta}{n}$, any $f \in C$ and any distributions $P,Q$ over $X$:
\begin{equation}\label{eqn:realize-gen-err}
\forall \bx\in X^n:~~\Pr_{\btx \sim Q^n}\left[\err_Q(h|_S) \leq 2\eps\right] \geq 1-\delta,
\end{equation}
where $h|_S\defeq \wwtp(\bx,f(\bx),\btx,\eps)$. Furthermore, for any $\eps\geq 0$,
\begin{equation}\label{eqn:realize-gen-acc}
\Pr_{\bx \sim P^n}\left[\forall \btx\in X^n:~\rej_P  \leq \frac{2}{n}\left(\frac{2d}{\eps}\log 2n + \log \frac{2}{\delta}\right)\right] \geq 1-\delta.
\end{equation}
\end{theorem}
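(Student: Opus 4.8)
\textbf{Proof proposal for \Cref{thm:real-gen}.}

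The plan is to combine the transductive error argument already used in \Cref{thm:real-trans} with the two generalization lemmas \Cref{lem:omg} and \Cref{lem:omg2}, exploiting the structural fact that $\wwtp$'s output set always has the form $S(h,\bc)$ for some $h\in C$ and $\bc\in C^T$ with $T\le 1/\eps$. First I would establish \cref{eqn:realize-gen-err}. Fix $\bx\in X^n$; since $f\in C$ and $h=\ERM(\bx,f(\bx))$, the classifier $h$ is consistent, so by the argument in \Cref{thm:real-trans} the algorithm terminates only when $\err_{\btx}(h|_{S_T},f)\le\eps$, i.e.\ the empirical error on the drawn test sample $\btx\sim Q^n$ is at most $\eps$. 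By \Cref{lem:eff}, the number of iterations $T\le\lfloor 1/\eps\rfloor$, and (padding with copies of $h$ as in the proof of \Cref{thm:real-trans}) we may assume exactly $T$ iterations, so $S=S(h,\bc)$ with $\bc\in C^T$. Now apply \Cref{lem:omg} with this $Q$, this $f$, this $h$, and $\bz=\btx$: with probability $\ge1-\delta$ over $\btx\sim Q^n$, there is no $\bc\in C^T$ with $\err_{\btx}(h|_{S(h,\bc)},f)\le\eps$ but $\err_Q(h|_{S(h,\bc)},f)>2\eps$. Since the $\bc$ actually produced by the algorithm satisfies the former, it must fail the latter, giving $\err_Q(h|_S)\le2\eps$ on this event. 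The hypothesis $\eps\ge\sqrt{8d\ln 2n/n}+8\ln(8/\delta)/n$ is exactly what \Cref{lem:omg} requires.

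For \cref{eqn:realize-gen-acc} I would argue over $\bx\sim P^n$ directly. As noted, $\wwtp$ never rejects a training example, because $\Lambda>n$ forces each $c_t$ to agree with $h$ on all of $\bx$, and $h(x_i)=f(x_i)$ since $h$ is consistent; hence $\rej_\bx(S(h,\bc))=0$ for the $h,\bc$ produced. Again $S=S(h,\bc)$ for some $h\in C$, $\bc\in C^T$ with $T\le\lfloor 1/\eps\rfloor$, so $T+1\le 1+1/\eps\le 2/\eps$ (using $\eps\le1$; for $\eps>1$ the bound is trivial since rejection rates are at most $1$). Apply the first inequality of \Cref{lem:omg2}: with probability $\ge1-\delta$ over $\bx\sim P^n$, no $h\in C,\bc\in C^T$ simultaneously has $\rej_\bx(S(h,\bc))=0$ and $\rej_P(S(h,\bc))>\xi$ with $\xi=\frac{2}{n}(d(T+1)\log 2n+\log\frac{2}{\delta})$. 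On this event the algorithm's output must satisfy $\rej_P(S)\le\xi\le\frac{2}{n}\bigl(\frac{2d}{\eps}\log 2n+\log\frac{2}{\delta}\bigr)$, which is the claimed bound. Note this probability is over $\bx$ alone and the conclusion is quantified over all $\btx\in X^n$, since the set of possible outputs $\{S(h,\bc):h\in C,\bc\in C^T\}$ does not depend on $\btx$ — this is precisely why the white-box/worst-case $\btx$ quantifier can be pulled inside.

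The main obstacle is not any single deep step but making sure the ``$S$ has the form $S(h,\bc)$ with $T\le 1/\eps$'' reduction lines up cleanly with the hypotheses of \Cref{lem:omg} and \Cref{lem:omg2}: in particular verifying that the padding trick leaves $S$ unchanged, that the lemmas are invoked with $T=\lfloor 1/\eps\rfloor$ (so their $T+1$ factors match the $2d/\eps$ appearing in the target bounds), and that \Cref{lem:omg}'s quantifier over $\bc$ covers whatever the algorithm outputs while its $h$ is correctly fixed to $\ERM(\bx,f(\bx))$ before $\btx$ is drawn. Once those bookkeeping points are handled the two displayed bounds follow immediately by substituting $T+1\le 2/\eps$ into $\xi$ and into the error conclusion, with no further computation needed.
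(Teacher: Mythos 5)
Your proposal is correct and follows essentially the same route as the paper's own proof: establish $\err_{\btx}(h|_S)\le\eps$ with certainty via the transductive argument (\Cref{thm:real-trans}, Eq.~\eqref{eqn:realize-trans-err}), then invoke \Cref{lem:omg} with the ERM-determined $h$ fixed and $\bc$ quantified to get the $Q$-error bound, and separately use $\rej_\bx(S)=0$ plus $T+1\le 2/\eps$ with \Cref{lem:omg2} for the rejection bound. The only small inaccuracy is your parenthetical claim that for $\eps>1$ the rejection bound is ``trivial since rejection rates are at most $1$'' — the stated bound can be below~$1$ there; what actually saves the case $\eps>1$ is that the algorithm then halts immediately with $T=0$ and $S=X$, so $\rej_P=0$.
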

\begin{proof}[Proof of \Cref{thm:real-gen}]
Let $T= \lfloor 1/\eps\rfloor$ be an upper bound on the number of iterations.
We first prove \cref{eqn:realize-gen-err}. Since the ERM algorithm is assumed to be deterministic, the function $h$ is uniquely determined by $\bx$ and $f$. By Theorem~\ref{thm:real-trans} (Equation~\eqref{eqn:realize-trans-err}), the set $S$ has the property that $\err_\btx(h|_S)\leq \eps$ (with certainty) for all $\bx, \btx$. By \Cref{lem:omg}, with probability at most $\delta$ there exists a choice of $h, \bc$ which would lead to $\err_Q(h|_S)> 2\eps$ and $\err_\btx(h|_S)\leq \eps$, implying \cref{eqn:realize-gen-err}.

For \cref{eqn:realize-gen-acc}, as we argued in the proof of Theorem~\ref{thm:real-trans}, the fact that $\Lambda > n$, together with the fact we are in the realizable case (i.e., $\by=f(\bx)$), implies that we select all training examples. Because of this and the fact that $T+1\leq 2/\eps$, \Cref{lem:omg2} implies \cref{eqn:realize-gen-acc}.
\end{proof}
\Cref{wow:real-gen} is a trivial corollary of~\Cref{thm:real-gen}.
\begin{proof}[Proof of~\Cref{wow:real-gen}]
Recall that $\eps^* = \sqrt{\frac{8d \ln 2n}{n}}+\frac{8 \ln 16/\delta}{n}$.

\Cref{eqn:realize-gen-err} implies that  $\Pr[\err_Q\leq 2\eps^*]\geq 1-\delta/2$ and \cref{eqn:realize-gen-acc} implies,
\[\Pr_{\bx \sim P^n}\left[\forall \bz\in X^n:~\rej_P \leq \frac{2}{n}\left(\frac{2d}{\eps^*}\log 2n + \log \frac{4}{\delta}\right)\right] \geq 1- \frac{\delta}{2}.\]
Further, note that $\log_2 r \leq 2\ln r$ for $r \geq 1$ and hence, using $\eps^* >\sqrt{\frac{8d \ln 2n}{n}}$,
\[\frac{2}{n}\left(\frac{2d}{\eps^*}\log 2n + \log \frac{4}{\delta}\right) \leq \frac{8d}{n\eps^*}\ln 2n + \frac{4}{n}\ln \frac{4}{\delta} \\
< \sqrt{\frac{8d \ln 2n}{n}} + \frac{4}{n}\ln \frac{4}{\delta} \leq \eps^*.
\]
The proof is completed by the union bound.
\end{proof}

\section{Analysis of Urejectron}\label{sec:urejectron-analysis}

In this section we present a transductive analysis of $\wtp$, again in the realizable case. We begin with its computational efficiency.

\begin{lemma}[$\wtp$ computational efficiency]\label{lem:eff-wtp}
For any $\bx, \tbx \in X^n,\epsilon>0$ and $\Lambda\in \nats$, $\wtp$ outputs $S_{T+1}$ for $T\leq \lfloor 1/\eps\rfloor$. Further, each iteration can be implemented using one call to $\ERM_{\DIS}$, as defined in \cref{eq:DIS}, on at most $(\Lambda +1)n$ examples and $O(n)$ evaluations of classifiers in $\functions$.
\end{lemma}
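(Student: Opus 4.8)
The plan is to mirror the proof of \Cref{lem:eff} almost verbatim, with the single twist that the ``disagreement'' objective now involves a pair $(c,c')$ rather than $h$ and a single $c$, and so the ERM call is made against the richer class $\DIS$. First I would show how to compute, given $S_t$, a pair $(c_t,c_t')$ maximizing $s_t(c,c') = \err_{\tbx}(c'|_{S_t}, c) - \Lambda \cdot \err_\bx(c', c)$. Observe that both terms in $s_t$ depend on $c$ and $c'$ only through the disagreement function $\dis_{c,c'}$: indeed $\err_\bx(c',c) = \frac1n \sum_{i} \dis_{c,c'}(x_i)$, and $\err_{\tbx}(c'|_{S_t}, c) = \frac1n |\{i : \tx_i \in S_t \text{ and } \dis_{c,c'}(\tx_i)=1\}|$. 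So I would build the artificial dataset that contains each training example $x_i$ with label $0$ repeated $\Lambda$ times, and each test example $\tx_i$ with $\tx_i \in S_t$ with label $1$ included once, and run $\ERM_\DIS$ on it. Counting errors of a candidate $g = \dis_{c,c'} \in \DIS$ on this dataset gives $\Lambda \sum_i g(x_i) + \sum_{i:\tx_i\in S_t}(1 - g(\tx_i)) = |\{i : \tx_i \in S_t\}| - n\, s_t(c,c')$, so minimizing error over $\DIS$ is exactly maximizing $s_t$. This is the analogue of the identity in \Cref{lem:eff}; the only difference is that $\ERM_\DIS$ returns $g \in \DIS$, from which we recover a witnessing pair $(c_t, c_t')$.

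Next I would bound the iteration count. As in \Cref{lem:eff}, for each completed iteration $t$ we have $S_{t+1}\subseteq S_t$ and
\[
\frac1n|\{i:\tx_i\in S_t\}| - \frac1n|\{i:\tx_i\in S_{t+1}\}| = \err_{\tbx}(c_t'|_{S_t}, c_t) \geq s_t(c_t,c_t') > \eps,
\]
using that $\Lambda \cdot \err_\bx(c_t',c_t) \geq 0$ for the middle inequality and the non-termination condition for the last. Hence the number of fully completed iterations is less than $1/\eps$, so $T \leq \lfloor 1/\eps\rfloor$ and the output is $S_{T+1}$. Finally, for the efficiency bookkeeping I would note, exactly as in \Cref{lem:eff}, that one never stores $S_t$ explicitly (it may be infinite); one only maintains the index set $\{j\in[n] : \tx_j \in S_t\}$, updated by evaluating $c_t, c_t'$ at the $n$ test points, and the list of pairs $(c_1,c_1'),\dots,(c_T,c_T')$ together with the ability to evaluate membership in $S = S_{T+1}$ on a query point by $O(T) = O(1/\eps)$ classifier evaluations; each iteration makes one $\ERM_\DIS$ call on $(\Lambda+1)n$ examples and $O(n)$ classifier evaluations.

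The main obstacle — really the only place any thought is required — is the reduction of maximizing the pair-objective $s_t$ to a single $\ERM_\DIS$ call: one has to be careful that $S_t$ itself is defined through the previously selected pairs (via $c_j(x) = c_j'(x)$), so membership $\tx_i \in S_t$ is known from the maintained index set rather than needing $c,c'$, and that the labels on the artificial dataset ($0$ on training, $1$ on in-$S_t$ test points) correctly encode ``agree on $\bx$, disagree on $\tbx$.'' Everything else is a routine transcription of the $\wwtp$ efficiency argument.
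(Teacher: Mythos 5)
Your proposal is correct and follows essentially the same route as the paper's proof: the same artificial dataset (training examples labeled $0$ repeated $\Lambda$ times, in-$S_t$ test examples labeled $1$ once), the same algebraic identity relating $\ERM_\DIS$ error to $s_t$, the same shrinkage argument for $T \le \lfloor 1/\eps\rfloor$, and the same bookkeeping via the index set and stored classifier pairs. The only difference is that you spell out the error-counting identity explicitly, whereas the paper just remarks that the accuracy ``differs by a constant from $s_t$''—your version is slightly more detailed but the argument is identical.
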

The proof of this lemma is nearly identical to that of \Cref{lem:eff}.
\begin{proof}[Proof of \Cref{lem:eff-wtp}]
The argument that $T \leq \lfloor 1/\eps \rfloor$ follows for the same reason as before, replacing \cref{eqn:shrink} with:
\[|\{i:~x_i \in S_t\}|-|\{i:~x_i \in S_{t+1}\}|=|\{i: x_i \in S_t \wedge c_t(x_i)\neq c'_t(x_i)\}| = n \err_{\tbx}(c_t|_{S_t}, c'_t)\geq n\eps.\]
For efficiency, again all that needs to be stored are the subset of indices $Z_t= \{i~|~\tilde{x}_i \in S_t\}$ and the classifiers $c_1, c'_1, \ldots, c_T, c'_T$ necessary to compute $S$. To implement iteration $t$ using the $\ERM_\DIS$ oracle, construct a dataset consisting of each training example, labeled by 0, repeated $\Lambda$ times, and each test example in $\tilde{x}_i \in S_t$, labeled $1$, included just once. The accuracy of $\dis_{c,c'}$ on this dataset is easily seen to differ by a constant from $s_t(c,c')$, hence running $\ERM_\DIS$ maximizes $s_t$.
\end{proof}

The following Theorem exhibits the trade-off between accuracy and rejections.
\begin{theorem}\label{thm:wtp-real-trans}
For any $n \in \nats$, any $\eps\geq 0$,
\begin{equation}\label{eq:wtp-error}
\forall \bx, \tbx\in X^n, f \in C: \err_{\tbx}(h|_S) \leq \eps,
\end{equation}
where $S=\wtp(\bx, \tbx, \eps)$ and $h=\ERM_C(\bx,f(\bx))$.
Furthermore, for any $\delta > 0$ and any distribution~$P$ over~$X$:
\begin{equation}\label{eq:wtp-rej}
\Pr_{\bx,\bz\sim P^n}\left[\rej_\bz(S)\leq \frac{1}{n}\left(\frac{2d\log 2n}{\eps} + \log 1/\delta\right) \right]\geq 1-\delta.
\end{equation}
\end{theorem}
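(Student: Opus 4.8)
The plan is to mirror the realizable transductive analysis of $\wwtp$ (\Cref{thm:real-trans}), adapted to the label-free, two-classifier selection rule of $\wtp$, using the bound $\Pi_{\DIS}(m)\le\Pi_C(m)^2$ on the growth function of the disagreement class $\DIS$ of~\eqref{eq:DIS} so that the VC term grows only by a factor $2$.

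For the error bound~\eqref{eq:wtp-error}: since $f\in C$ and $h=\ERM_C(\bx,f(\bx))$, $h$ is consistent, i.e.\ $\err_\bx(h,f)=0$, so in every iteration $t$ the candidate pair $(c,c')=(f,h)\in C\times C$ has $s_t(f,h)=\err_{\tbx}(h|_{S_t},f)$. When $\wtp$ halts it returns $S=S_{T+1}$ because the maximizing pair satisfies $s_{T+1}(c_{T+1},c'_{T+1})\le\eps$; since $(f,h)$ was a legal choice, $\err_{\tbx}(h|_S,f)=s_{T+1}(f,h)\le s_{T+1}(c_{T+1},c'_{T+1})\le\eps$. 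Each completed iteration strictly removes at least one test example from $S$, so $\wtp$ always halts (and when $\eps=0$ the same argument forces zero test error, with~\eqref{eq:wtp-rej} then vacuous); so assume $\eps>0$.

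For the rejection bound~\eqref{eq:wtp-rej} I would record two structural facts holding for \emph{every} $\tbx$. First, no training example is rejected: since $\Lambda=n+1>n$, any pair exhibiting a training disagreement has $s_t\le 1-(n+1)/n=-1/n<0$, whereas $s_t(c,c)=0$ for every $c\in C$, so the maximizer $(c_t,c'_t)$ has $\err_\bx(c'_t,c_t)=0$; thus $c_t,c'_t$ agree on all of $\bx$, every $x_i$ lies in every $S_t$, and $\rej_\bx(S)=0$. Second, by \Cref{lem:eff-wtp} the number $T$ of completed iterations is at most $\lfloor1/\eps\rfloor$, and padding the remaining iterations with trivial pairs $c_t=c'_t\in C$ (adjoining the vacuous constraint $X$) leaves $S$ unchanged, so we may take $T=\lfloor1/\eps\rfloor$ independent of $\tbx$. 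Next I would prove the $\wtp$-analogue of \Cref{lem:lol}. Writing the output as $S=\bigl\{x:\dis_{c_1,c'_1}(x)=0\wedge\cdots\wedge\dis_{c_T,c'_T}(x)=0\bigr\}$, an intersection of $T$ zero-sets of functions in $\DIS$, one runs the symmetrization (double-sample) argument behind \Cref{lem:lol} — which uses Sauer's lemma — but with the growth-function bound $\Pi_C(2n)^{T+1}$ replaced by $\Pi_{\DIS}(2n)^{T}\le\Pi_C(2n)^{2T}$, a $\dis_{c,c'}$-pattern on a fixed point set being determined by the $C$-patterns of $c$ and of $c'$. This yields: with probability $\ge1-\delta$ over $\bx,\bz\sim P^n$, no choice of $c_1,c'_1,\dots,c_T,c'_T\in C$ produces a set $S$ with $\rej_\bx(S)=0$ yet $\rej_\bz(S)>\frac1n\bigl(2dT\log(2n)+\log\frac1\delta\bigr)$. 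Combining with the first structural fact and $T\le1/\eps$ gives $\rej_\bz(S)\le\frac1n\bigl(\frac{2d\log 2n}{\eps}+\log\frac1\delta\bigr)$ on that event, establishing~\eqref{eq:wtp-rej}.

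The main obstacle is the last lemma; everything else parallels the $\wwtp$ analysis and is bookkeeping. The real content is to check that the $\wtp$ selection sets are exactly intersections of $T$ members of $\{\,g^{-1}(0):g\in\DIS\,\}$, and then to run the standard ``zero empirical rejection implies small true rejection'' deviation bound with effective dimension $2dT$ via $\Pi_{\DIS}\le\Pi_C^2$ and Sauer's lemma. The factor-$2$ loss — $2dT$ in place of the $d(T+1)$ of \Cref{lem:lol} — is precisely the price of two free classifiers per iteration instead of one shared $h$, which is also why $\wtp$ requires an ERM oracle for $\DIS$ rather than for $C$.
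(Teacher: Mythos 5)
Your proposal is correct and mirrors the paper's own proof of Theorem~\ref{thm:wtp-real-trans} step for step: the error bound from $\err_\bx(h,f)=0$ and the stopping criterion, $\rej_\bx(S)=0$ from $\Lambda>n$, padding to $T=\lfloor 1/\eps\rfloor$, the growth-function bound $\Pi_{\Delta_T}[2n]\le(2n)^{2dT}$ via Sauer applied to pairs from $C$, and the double-sample deviation bound (\Cref{lemma:trans:aux}). The paper uses $\binom{N}{2}^T$ where you use $N^{2T}$ for $N=(2n)^d$, but both round to the same $(2n)^{2dT}$ and the rest is identical.
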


Before we prove \Cref{thm:wtp-real-trans} we provide some generalization bounds that will be used in the proof.
To this end, given a family $G$ of classifiers $g: X \rightarrow \{0,1\}$, following \cite{Blumer89}, define:
\begin{equation}
\Pi_G[2n] \defeq \max_{\bw \in X^{2n}}|\{g(\bw): g\in G\}|.
\end{equation}
\begin{lemma}[Transductive train-test bounds]\label{lemma:trans:aux}
For any $n \in \nats$, any distribution $P$ over a domain~$X$, any set $G$ of classifiers over $X$, and any $\epsilon>0$,
\begin{equation}\label{eq:trans1}
\Pr_{\bx,\bz\sim P^n}\left[\exists g \in G: ~\left(\frac{1}{n}\sum_i g(z_i) \geq \eps\right)\wedge\left(\frac{1}{n}\sum_i g(x_i)=0\right)\right]\leq \Pi_G[2n]2^{-\eps n}
\end{equation}
and
\begin{equation}\label{eq:trans2}
\Pr_{\bx,\bz\sim P^n}\left[\exists g \in G: ~\frac{1}{n}\sum_i g(z_i)\geq \frac{1+\alpha}{n}\sum_i g(x_i) + \eps \right]\leq \Pi_G[2n]e^{-\frac{2\alpha}{(2+\alpha)^2}\eps n}.
\end{equation}
\end{lemma}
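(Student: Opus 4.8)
The plan is to prove both inequalities by the classical symmetrization (``ghost sample'') argument, using that $\bx$ and $\bz$ are already two independent samples of equal size, so no extra ghost sample need be introduced. Set $\bw=(x_1,\ldots,x_n,z_1,\ldots,z_n)\in X^{2n}$; since $\bw\sim P^{2n}$ it is exchangeable, so if $\sigma$ is a uniformly random permutation of $[2n]$ drawn independently of $\bw$, then the pair consisting of the first $n$ and the last $n$ coordinates of $(w_{\sigma(1)},\ldots,w_{\sigma(2n)})$ has the same joint law as $(\bx,\bz)$. Hence each of the two target probabilities equals $\E_{\bw}$ of the corresponding probability over the uniform balanced partition $\sigma$ of the $2n$ indices into a ``train'' half and a ``test'' half, and it suffices to bound that conditional probability uniformly in $\bw$. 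With $\bw$ fixed, the vector $(g(w_1),\ldots,g(w_{2n}))$ takes at most $\Pi_G[2n]$ distinct values as $g$ ranges over $G$, so by a union bound it is enough to control, for each fixed pattern $v\in\{0,1\}^{2n}$ with $a=\sum_i v_i$ ones, the probability over $\sigma$ of the event associated with $v$.

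For \eqref{eq:trans1}, the event ``$\tfrac1n\sum_{\mathrm{train}}v=0$ and $\tfrac1n\sum_{\mathrm{test}}v\ge\eps$'' forces all $a$ ones of $v$ into the test half, hence $a\ge\eps n$, and its probability over $\sigma$ is $\binom{2n-a}{n}/\binom{2n}{n}=\prod_{j=0}^{a-1}\tfrac{n-j}{2n-j}\le 2^{-a}\le 2^{-\eps n}$, where the first inequality uses $\tfrac{n-j}{2n-j}\le\tfrac12$ for every $j\ge 0$. Summing over the at most $\Pi_G[2n]$ patterns and taking $\E_{\bw}$ gives \eqref{eq:trans1}.

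For \eqref{eq:trans2}, fix a pattern $v$ with $a$ ones and let $X_1$ be the number of ones falling in the train half, a hypergeometric variable with mean $\mu=a/2$; the number of ones in the test half is $a-X_1$. The event ``$\tfrac1n(a-X_1)\ge\tfrac{1+\alpha}{n}X_1+\eps$'' is exactly $X_1\le\tfrac{a-\eps n}{2+\alpha}$, which (writing $a=2n\hat q$) is the relative lower-tail event $X_1\le(1-\gamma)\mu$ with $\gamma=\tfrac{\alpha\hat q+\eps}{(2+\alpha)\hat q}\in[0,1]$ whenever the threshold is non-negative, and has probability $0$ otherwise. Since a sum obtained by sampling without replacement is at least as concentrated as the corresponding sum sampled with replacement (Hoeffding's convexity comparison of moment generating functions), the multiplicative Chernoff lower-tail bound $\Pr[X_1\le(1-\gamma)\mu]\le e^{-\gamma^2\mu/2}$ applies here; substituting $\mu=n\hat q$ and the value of $\gamma$, and then using $(\alpha\hat q+\eps)^2\ge 4\alpha\hat q\eps$ (AM--GM), collapses the exponent to exactly $\tfrac{2\alpha}{(2+\alpha)^2}\eps n$. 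A union bound over the $\le\Pi_G[2n]$ patterns and $\E_{\bw}$ then finish the proof.

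The symmetrization setup and the combinatorial estimate for \eqref{eq:trans1} are routine; the step I expect to require the most care is \eqref{eq:trans2}, where I must invoke a concentration bound valid for sampling without replacement (not merely for sums of independent Bernoullis) and then carry out the parameter bookkeeping so that the Chernoff exponent simplifies precisely to $\tfrac{2\alpha}{(2+\alpha)^2}\eps n$ --- it is the bound $(\alpha\hat q+\eps)^2\ge 4\alpha\hat q\eps$ that produces the stated constant.
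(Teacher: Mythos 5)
Your proof is correct and takes essentially the same route as the paper: a symmetrization over a pooled sample $\bw\sim P^{2n}$, a union bound over the at most $\Pi_G[2n]$ labeling patterns, the sequential-placement estimate $\prod_{j<a}\frac{n-j}{2n-j}\le 2^{-a}$ for \eqref{eq:trans1}, and for \eqref{eq:trans2} a reduction of the event to a hypergeometric lower tail, comparison with i.i.d.\ Bernoulli sampling, a multiplicative Chernoff bound, and the AM--GM step $(\alpha\hat q+\eps)^2\ge 4\alpha\hat q\eps$ (which, after clearing denominators, is exactly the paper's $(\alpha s+2r)^2\ge 8\alpha s r$, i.e.\ $(\alpha s-2r)^2\ge 0$). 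The only cosmetic differences are your reparameterization $\hat q=a/(2n)$ in place of the paper's working directly with $s$ and $r=\eps n$, and your slightly more explicit handling of the union bound over patterns, which the paper leaves implicit.
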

The proof of this lemma is deferred to \Cref{sec:gen}. (Note \cref{eq:trans2} is used for the agnostic analysis later.)

\begin{proof}[Proof of \Cref{thm:wtp-real-trans}]
We denote for $T\geq 1$ and classifier vectors $\bc,\bc'\in C^T$:
\begin{align*}
\delta_{\bc,\bc'}(x)&\defeq\max_{i\in[T]} \dis_{c_i,c'_i}(x)=\begin{cases}1 & \text{if }c_i(x)\neq c'_i(x) \text{ for some }i \in [T]\\
0&\text{otherwise}.\end{cases}\\
\Delta_T&\defeq\left\{\delta_{\bc,\bc'}:~\bc,\bc'\in C^T\right\}.
\end{align*}
Thus the output of $\wtp$ is $S_{T+1}=\{x\in X:~\delta_{\bc,\bc'}(x)=0\}$ for the vectors $\bc=(c_1,\ldots,c_T)$ and $\bc'=(c'_1,\ldots,c'_T)$ chosen by the algorithm.

Let $T$ be the final iteration of the algorithm so that the output of the algorithm is $S=S_{T+1}$.
Note that $\err_\bx(f,h)=0$, by definition of $\ERM_C$, so   $s_{T+1}(f,h)=\err_{\tbx}(h|_S) \leq \eps$ (otherwise the algorithm would have chosen $c=h, c'=f$ instead of halting) which implies \cref{eq:wtp-error}.

By Lemma  \ref{lem:eff-wtp}, WLOG we can take $T=\lfloor 1/\eps \rfloor$ by padding with classifiers $c_t=c'_t$.

We next claim that $x_i \not\in S_t$ for all $i\in [n]$, i.e., $\delta_{\bc,\bc'}(x_i)=0$. This is because the algorithm is run with $\Lambda = n+1$, so any disagreement $c_t(x_i)\neq  c'_t(x_i)$ would result in a negative score $s_t(c_t, c'_t)$.  (But a zero score is always possible by choosing $c_t=c_t'$.) Thus we must have the property that $\dis_{c_t',c_t}(x_i)=0$ and hence $\delta_{\bc,\bc'}(x_i)=0$. Now, it is not difficult to see that %\omarnote{would it be good to refer readers to where $\Pi_{\Delta_T}[2n]$ is defined? It is clear to me though.} \ynote{fixed!}
$\Pi_{\Delta_T}[2n]\leq (2n)^{2d/\eps}$ because, by Sauer's lemma, there are at most $N=(2n)^d$ different labelings of $2n$ examples by classifiers from $C$, hence there are at most ${N \choose 2}^T \leq (2n)^{2dT}$ disagreement labelings for $T\leq 1/\eps$ pairs. Thus for $\xi = \frac{1}{n}\left(\frac{2d\log 2n}{\eps} + \log 1/\delta\right)$, by \Cref{lemma:trans:aux},
\[\Pr_{\bx,\bz\sim P^n}\left[\forall g \in \Delta_T \text{ s.t. } \sum_i g(x_i)=0:~\frac{1}{n}\sum_i g(z_i)\leq \xi\right]\geq 1-\Pi_{\Delta_T}[2n]2^{-\xi n}\geq 1-\delta.\]
If this $1-\delta$ likely event happens, then also $\rej_\bz(S)=\frac{1}{n}\sum_i \delta_{\bc,\bc'}(z_i)\leq \xi$ for the algorithm choices $\bc,\bc'$.
\end{proof}

\begin{proof}[Proof of Theorem~\ref{cor:wtp-real-trans}]
The proof follows from \Cref{thm:wtp-real-trans}  and the fact that,
\[\frac{1}{n}\left(\frac{2d \log 2n}{\eps^*} + \log 1/\delta\right) \leq \frac{2d \log 2n}{n\sqrt{\frac{2d\log 2n}{n}}} + \frac{\log 1/\delta}{n} = \eps^*.\]
\end{proof}

\section{Massart Noise}\label{ap:massart}
This section shows that we can PQ learn in the Massart noise model. The Massart model \citep{massart2006risk} is defined with respect to a noise rate $\eta< 1/2$ and function (abusing notation) $\eta : X \rightarrow [0,\eta]$:
\begin{definition}[Massart Noise Model]
Let $P$ be a distribution on $X$, $\eta<1/2$, and $0 \leq \eta(x) \leq \eta$ for all $x \in X$. The Massart distribution $P_{\eta,f}$ with respect to $f$ over $(x,y) \in X \times Y$ is defined as follows: first $x\sim P$ is chosen and then $y=f(x)$ with probability $1-\eta(x)$ and $y=1-f(x)$ with probability $\eta(x)$.
\end{definition}
When clear from context, we omit $f$ and write $P_\eta=P_{\eta,f}$. The following lemma relates the \textit{clean} error rate $\err_P(h, f) = \Pr_P[h(x)\neq f(x)]$ and \textit{noisy} error rate  $\err_{P_\eta} = \Pr_{(x,y)\sim P_\eta}[h(x)\neq y]$. Later, we will show how to drive the clean error arbitrarily close to 0 using an ERM.
\begin{lemma}
\label{lem:noisy-err}
For any classifier $g:X\to Y$, any $\eta<1/2, f \in C$, and any distribution $P_\eta$ corrupted with Massart noise:
\[(1-2\eta) \err_P(g) \leq \err_{P_\eta}(g) - {\rm OPT},\]
where ${\rm OPT}  = \min_{h\in C} \err_{P_\eta}(h) = \E_{x \sim P}[\eta(x)]$.
\end{lemma}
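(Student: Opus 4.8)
The plan is to prove the two-sided bound relating the clean error $\err_P(g)$ to the noisy error $\err_{P_\eta}(g)$ by a direct pointwise computation of the noisy error and then integrating. First I would fix a classifier $g$ and, for each $x\in X$, compute the conditional probability that $g(x)\neq y$ given $x$. If $g(x)=f(x)$ then $g$ errs (against the noisy label) exactly when the noise flips the label, which happens with probability $\eta(x)$; if $g(x)\neq f(x)$ then $g$ agrees with the noisy label exactly when the noise flips, so $g$ errs with probability $1-\eta(x)$. Writing $\ind[g(x)\neq f(x)]$ for the clean-error indicator, this gives the identity
\[
\Pr_{y}[g(x)\neq y \mid x] = \eta(x)\,\ind[g(x)=f(x)] + (1-\eta(x))\,\ind[g(x)\neq f(x)] = \eta(x) + (1-2\eta(x))\,\ind[g(x)\neq f(x)].
\]

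Next I would take the expectation over $x\sim P$ to obtain
\[
\err_{P_\eta}(g) = \E_{x\sim P}[\eta(x)] + \E_{x\sim P}\bigl[(1-2\eta(x))\,\ind[g(x)\neq f(x)]\bigr].
\]
The first term is exactly $\mathrm{OPT}$, and I would justify $\mathrm{OPT}=\min_{h\in C}\err_{P_\eta}(h)=\E_{x\sim P}[\eta(x)]$ by noting that the displayed identity applied to $h=f$ (which lies in $C$ by realizability) gives $\err_{P_\eta}(f)=\E_{x\sim P}[\eta(x)]$, and applied to an arbitrary $h\in C$ the extra term $\E_{x\sim P}[(1-2\eta(x))\ind[h(x)\neq f(x)]]$ is nonnegative since $\eta(x)\le\eta<1/2$ forces $1-2\eta(x)>0$. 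Hence $f$ is a minimizer and $\mathrm{OPT}=\E_{x\sim P}[\eta(x)]$.

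Finally, to get the stated inequality I would lower-bound the residual term: since $0\le\eta(x)\le\eta$ we have $1-2\eta(x)\ge 1-2\eta>0$ for every $x$, so
\[
\E_{x\sim P}\bigl[(1-2\eta(x))\,\ind[g(x)\neq f(x)]\bigr] \geq (1-2\eta)\,\E_{x\sim P}\bigl[\ind[g(x)\neq f(x)]\bigr] = (1-2\eta)\,\err_P(g),
\]
which rearranges to exactly $(1-2\eta)\,\err_P(g)\le \err_{P_\eta}(g)-\mathrm{OPT}$. There is no real obstacle here; the only point requiring a little care is the identification of $\mathrm{OPT}$ with $\E_{x\sim P}[\eta(x)]$ and the observation that the Massart condition $\eta<1/2$ is precisely what makes $f$ optimal and the coefficient $1-2\eta(x)$ strictly positive, so the direction of the inequality comes out right.
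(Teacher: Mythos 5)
Your proposal is correct and follows essentially the same argument as the paper: the pointwise identity $\Pr_y[g(x)\neq y\mid x]=\eta(x)+(1-2\eta(x))\ind[g(x)\neq f(x)]$, taking expectation over $x\sim P$, and lower-bounding the residual using $1-2\eta(x)\ge 1-2\eta$. Your added justification that $\mathrm{OPT}=\E_{x\sim P}[\eta(x)]$ (via $f\in C$ being a minimizer) is a welcome clarification the paper states without proof.
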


\begin{proof}
By definition of the noisy error rate of $g$ under $P_\eta$, observe the following:
\begin{align*}
    \err_{P_\eta}(g) &= \Pr_{(x, y)\sim P_\eta}[g(x)\neq y]\\
                       &= \E_{x\sim P} \left[ \eta(x) \mathbf{1}\{g(x)=f(x)\} + (1-\eta(x)) \mathbf{1}\{g(x)\neq f(x)\} \right]\\
                       &= \E_{x\sim P} \left[ \eta(x) (1 - \mathbf{1}\{g(x)\neq f(x)\}) + (1-\eta(x)) \mathbf{1}\{g(x)\neq f(x)\} \right]\\
                       &= \E_{x\sim P} \left[ \eta(x)\right] + \E_{x\sim P} \left[ (1-2\eta(x))\mathbf{1}\{g(x)\neq f(x)\}\right]\\
                       &= {\rm OPT} + \E_{x\sim P} \left[ (1-2\eta(x))\mathbf{1}\{g(x)\neq f(x)\}\right]\\
                       &\geq {\rm OPT} + (1-2\eta) \E_{x\sim P} [\mathbf{1}\{g(x)\neq f(x)\}]\\
                       &= {\rm OPT} + (1-2\eta) \err_P(g),
\end{align*}
where the last inequality follows from the fact that $\eta(x)\leq \eta$ for every $x\in X$. Rearranging the terms concludes the proof.
\end{proof}
The following lemma shows that using an extra $N=\tilde{O}\left(\frac{dn^2}{\delta^2(1-2\eta)^2}\right)$ i.i.d. examples $(\bx',\by')\sim P^N_{\eta}$, we can ``denoise'' the $n$ held-out examples $(\bx,\by)\sim P^{n}_{\eta}$ with $\hat{h}=\ERM_C(\bx',\by')$, and then run $\wwtp$ on $(\bx, \hat{h}(\bx))$. This shows that we can PQ learn $C$ under Massart noise. 
\begin{lemma}[Massart denoising]
\label{lem:denoise-massart}
For any $f\in C$ and any distribution $P$ over $X$, any $\eta <1/2$ and $\eta: X \rightarrow  [0, \eta]$, let $P_\eta$ be the corresponding Massart distribution over $(x,y)$. For any $n\in \nats$, let $(\bx,\by)= (x_1,y_1),\dots, (x_n,y_n)\sim P_{\eta}$ be i.i.d. examples sampled from $P_\eta$. Then, 
\[\Pr_{(\bx',\by')\sim P^N_{\eta}} \left[ \err_{\bx} (\hat{h}, f) = 0 \right] \geq 1-\delta,\]
where $\hat{h} = \ERM_C(\bx',\by')$ and $N=O\left(\frac{d n^2 + \log(2/\delta)}{\delta^2(1-2\eta)^2}\right)$.
%with $(\bx', \by')= (x'_1,y'_1), \dots, (x'_N,y'_N)\sim P^{\eta}_{f}$ i.i.d. examples where $N=O(\frac{n d}{(1-2\eta)})$, for $\hat{h} = \ERM_C(\bx',\by')$ we have that $\err_{\bx} (\hat{h}, f) = 0$. 
\end{lemma}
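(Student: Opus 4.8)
The plan is to bound the clean error $\err_P(\hat h, f)$ of the empirical risk minimizer $\hat h = \ERM_C(\bx',\by')$ on the noisy sample, and then argue that if this clean error is small enough, then with high probability $\hat h$ agrees with $f$ on all $n$ held-out points $\bx$. First I would invoke the standard agnostic uniform-convergence (VC) bound for the class $C$ on the $N$ noisy samples $(\bx',\by') \sim P_\eta^N$: with probability $\geq 1-\delta/2$, every $c \in C$ satisfies $|\err_{P_\eta}(c) - \err_{(\bx',\by')}(c)| \leq O(\sqrt{(d + \log(1/\delta))/N})$. Since $\hat h$ minimizes empirical noisy error and $f \in C$ achieves noisy error $\OPT$, this gives $\err_{P_\eta}(\hat h) - \OPT \leq O(\sqrt{(d+\log(1/\delta))/N})$. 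Then by \Cref{lem:noisy-err}, $(1-2\eta)\err_P(\hat h) \leq \err_{P_\eta}(\hat h) - \OPT$, so $\err_P(\hat h, f) \leq \frac{1}{1-2\eta} \cdot O\bigl(\sqrt{(d+\log(1/\delta))/N}\bigr)$.

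Next I would convert a small clean error into zero error on the held-out $\bx$. Conditioned on the above $1-\delta/2$-likely event, $\hat h$ is a fixed classifier with $\err_P(\hat h,f) \le \rho$ for $\rho$ as above. The $n$ points $x_1,\dots,x_n$ are iid from $P$ (marginally), so by a union bound $\Pr[\exists i:\hat h(x_i)\ne f(x_i)] \le n\rho$. For this to be at most $\delta/2$ we need $\rho \le \delta/(2n)$, i.e. $\sqrt{(d+\log(1/\delta))/N} \le c\,\delta(1-2\eta)/n$ for an appropriate constant, which rearranges to $N = O\!\left(\frac{n^2(d+\log(1/\delta))}{\delta^2(1-2\eta)^2}\right)$, matching the claimed sample size (up to the logarithmic-factor conventions used elsewhere in the paper). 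One subtlety to handle carefully: $\hat h$ depends on $\bx'$, which is independent of $\bx$, so conditioning on $\bx'$ (and hence fixing $\hat h$) is legitimate before taking the union bound over the $n$ fresh points $\bx$; the event $\{\err_{\bx}(\hat h,f)=0\}$ in the statement is over the randomness of $\bx'$ only, with $\bx$ quantified implicitly — I would make sure the probabilistic statement is phrased so that both samples' randomness is accounted for, or note that the bound holds for a $1-\delta$ fraction of $(\bx',\bx)$ jointly. Combining the two failure events by a union bound gives the overall probability $\geq 1-\delta$.

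The main obstacle I anticipate is purely bookkeeping: getting the exact constant and the exponent of $n$ right so that the stated $N = O\!\bigl(\frac{dn^2 + \log(2/\delta)}{\delta^2(1-2\eta)^2}\bigr)$ form is reproduced, rather than something like $N = O\!\bigl(\frac{(d+\log(1/\delta))n^2}{\delta^2(1-2\eta)^2}\bigr)$; these differ only in how the $\log(1/\delta)$ term is grouped, and reconciling that with the paper's $\tilde O$ conventions (and the specific VC bound invoked) requires a little care. There is no deep difficulty — the argument is a composition of a standard agnostic VC bound, \Cref{lem:noisy-err}, and a union bound over $n$ fresh points — but one must be attentive that the agnostic generalization bound is applied to the \emph{noisy} labels (which are genuinely iid from the product distribution $P_\eta$) and that the $1/(1-2\eta)$ factor from denoising, squared, is what produces the $(1-2\eta)^{-2}$ dependence in $N$.
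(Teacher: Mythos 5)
Your plan reproduces the paper's proof essentially verbatim: the paper likewise (i) invokes the agnostic VC/ERM guarantee on the noisy sample to get $\err_{P_\eta}(\hat h) \leq \OPT + \eps'$ with $N = O((d+\log(2/\delta))/\eps'^2)$, (ii) applies \Cref{lem:noisy-err} to convert this to a clean-error bound $\err_P(\hat h) \leq \eps'/(1-2\eta)$ and sets $\eps' = \frac{\delta}{2n}(1-2\eta)$ so that $\err_P(\hat h) \leq \delta/(2n)$, and (iii) union-bounds over the $n$ held-out points to conclude $\err_\bx(\hat h, f) = 0$ with probability $\geq 1-\delta$. Your bookkeeping worry about the grouping of $\log(2/\delta)$ is well-founded: carrying the paper's own choice of $\eps'$ through $N = O((d+\log(2/\delta))/\eps'^2)$ yields a numerator of $(d+\log(2/\delta))\,n^2$, not $dn^2 + \log(2/\delta)$ as stated in the lemma, so the paper's displayed $N$ appears to drop an $n^2$ factor on the $\log(2/\delta)$ term (harmless since $dn^2$ dominates when $d \geq 1$, but worth noting). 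Your observation about where the randomness of $\bx$ enters is also correct; the paper's statement writes the probability only over $\bx'$, but its proof (and yours) in fact accounts for $\bx$ via the union bound, so the probability is properly joint.
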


\begin{proof}
By agnostic learning guarantees for $\ERM_{C}$, we have that for any $\eps',\delta>0$:
\[\Pr_{(\bx',\by')\sim P^N_\eta} \left[ \err_{ P_{\eta}}(\hat{h}) \leq {\rm OPT} + \eps' \right] \geq 1-\frac{\delta}{2},\]
where $\hat{h} = \ERM_C(\bx',\by')$ and $N=O(\frac{d+\log(2/\delta)}{\eps'^2})$.
%with probability at least $2/3$ over $N=O(d/\eps'^2)$ i.i.d. examples $(\bx',\by')\sim P^{\eta}_{f}$, for $\hat{h}= \ERM_C(\bx',\by')$:
%\[ \err_{ P^{\eta}_{f}}(\hat{h}) \leq {\rm OPT} + \eps'.\]
By \Cref{lem:noisy-err}, choosing $\eps' = \frac{\delta}{2n} (1-2\eta)$ guarantees that the clean error rate $\err_P(\hat{h})\leq \frac{\delta}{2n}$. Since, $(\bx,\by)\sim P^n_{\eta}$ are independent held-out examples, by a union bound, we get that $\err_{\bx}(\hat{h},f)=0$ with probability $1-\delta$. 
\end{proof}
This yields an easy algorithm and corollary: simply use the $N$ examples $\bx', \by'$ to denoise the $n$ labels for $\bx$ and then run the $\wwtp$ algorithm. 
\begin{corollary} [PQ guarantees under Massart noise] \label{corr:massart}
For any $n \in \nats, \delta>0, f \in C$ and distributions $P,Q$ over $X$, any $\eta <1/2$ and $\eta: X \rightarrow  [0, \eta]$, let $P_\eta$ be the corresponding Massart distribution over $(x,y)$. Then,
\[\Pr_{(\bx',\by')\sim P^N_\eta, (\bx,\by)\sim P^n_\eta, \tbx\sim Q^n}[\err_Q\leq 2\eps^*  ~\wedge~\rej_P  \leq \eps^*]\geq 1-\delta,\]
where $\eps^* = \sqrt{8\frac{d \ln 2n}{n}}+\frac{8 \ln 32/\delta}{n}$, $N=O\left(\frac{d n^2 + \log(2/\delta)}{\delta^2(1-2\eta)^2}\right)$, $\hat{h}=\ERM_C(\bx',\by')$, and $h|_S=\wwtp(\bx,\hat{h}(\bx),\tbx,\eps^*)$.
\end{corollary}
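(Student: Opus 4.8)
The plan is to reduce the Massart case to the realizable PQ guarantee of \Cref{wow:real-gen} by way of the denoising step, splitting the overall failure probability $\delta$ into two equal halves. Set $\hat h=\ERM_C(\bx',\by')$ and let $E$ denote the \emph{denoising event} that $\hat h(x_i)=f(x_i)$ for every $i\in[n]$, i.e. $\err_\bx(\hat h,f)=0$. Note that $\hat h$ is a function of $(\bx',\by')$ alone, hence independent of $(\bx,\tbx)$, while marginally $\bx\sim P^n$, $\tbx\sim Q^n$, and $\bx$ is independent of $\tbx$ (since $Q$ is fixed in advance of $\bx$).

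First I would invoke \Cref{lem:denoise-massart} (which itself rests on the clean-versus-noisy error comparison of \Cref{lem:noisy-err} together with standard agnostic generalization for $\ERM_C$) with failure probability $\delta/2$ in place of $\delta$; this gives $\Pr[\overline{E}]\le \delta/2$ provided $N=O\!\left(\frac{dn^2+\log(2/\delta)}{\delta^2(1-2\eta)^2}\right)$, which is exactly the value quoted in the statement up to the harmless constant change coming from rescaling $\delta$.

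Next I would observe that on the event $E$ the output of $\wwtp(\bx,\hat h(\bx),\tbx,\eps^*)$ coincides with that of $\wwtp(\bx,f(\bx),\tbx,\eps^*)$: the algorithm of \Cref{fig:algs} uses the training labels only through $h\coloneqq\ERM(\bx,\cdot)$ and through the term $\err_\bx(h,\cdot)$, and on $E$ we have $\hat h(\bx)=f(\bx)$, so the two invocations are literally identical. Consequently the bad event $B\coloneqq\{\err_Q>2\eps^*\}\cup\{\rej_P>\eps^*\}$ for the Massart run satisfies $B\cap E\subseteq B'$, where $B'$ is the analogous bad event for the realizable run $\wwtp(\bx,f(\bx),\tbx,\eps^*)$. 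Since $B'$ depends only on $(\bx,\tbx)$, whose joint law is $P^n\times Q^n$, \Cref{wow:real-gen} applied with failure probability $\delta/2$ --- which is precisely why the stated $\eps^*=\sqrt{8\,d\ln(2n)/n}+8\ln(32/\delta)/n$ carries a $\ln(32/\delta)=\ln\bigl(16/(\delta/2)\bigr)$ term --- yields $\Pr[B']\le \delta/2$. Combining, $\Pr[B]\le\Pr[\overline{E}]+\Pr[B\cap E]\le \delta/2+\Pr[B']\le \delta$, which is the claim.

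The argument is essentially bookkeeping once \Cref{lem:denoise-massart} and \Cref{wow:real-gen} are in hand; the one point that genuinely needs care is the probability-space manipulation in the last step. It is tempting to ``condition on $E$ and apply \Cref{wow:real-gen}'' directly, but $E$ depends on $\bx$, so conditioning distorts the law of $\bx$ away from $P^n$. The union-bound decomposition $\Pr[B]\le\Pr[\overline{E}]+\Pr[B']$ sidesteps this: one applies the realizable theorem to the \emph{unconditioned} marginals $\bx\sim P^n$, $\tbx\sim Q^n$ of the realizable run, and uses $E$ only to identify the outputs of the two runs. Everything else --- the constant adjustments in $N$ and in the two $\delta/2$ splits, and verifying that $\wwtp$ truly ignores the labels beyond $\ERM(\bx,\cdot)$ --- is routine.
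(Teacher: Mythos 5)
Your proof is correct and follows the approach the paper intends (the paper itself only offers the one-line sketch ``denoise, then run $\wwtp$''): invoke \Cref{lem:denoise-massart} with failure probability $\delta/2$, identify the Massart run with the realizable run on the denoising event, and invoke \Cref{wow:real-gen} with $\delta/2$, combining by a union bound. Your remark that one must not naively condition on the denoising event $E$ (since $E$ depends on $\bx$ and would distort $\bx$'s marginal away from $P^n$) is exactly the right care to take, and the constants $\ln(32/\delta)=\ln\bigl(16/(\delta/2)\bigr)$ and the $N$ from rescaling $\delta$ check out.
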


%\begin{theorem}\label{thm:massart}
%For any $n \in\mathbb{N}$ and  $\delta>0$, any $\eps \geq  \sqrt{\frac{8d \ln 2n}{n}}+\frac{8 \ln 8/\delta}{n}$, any $f \in C$ and any distributions $P,Q$ over $X$:
%\begin{equation}
%\forall \bx\in X^n:~~\Pr_{\btx \sim Q^n}\left[\err_Q(h|_S) \leq 2\eps\right] \geq 1-\delta,
%\end{equation}
%where $h|_S\defeq \wwtp(\bx,f(\bx),\btx,\eps)$. Furthermore, for any $\eps\geq 0$,
%\begin{equation}
%\Pr_{\bx \sim P^n}\left[\forall \btx\in X^n:~\rej_P  \leq \frac{2}{n}\left(\frac{2d}{\eps}\log 2n + \log \frac{2}{\delta}\right)\right] \geq 1-\delta.
%\end{equation}
%\end{theorem}

\section{Semi-agnostic analysis}\label{ap:ag}

In agnostic learning, the learner is given pairs $(x,y)$ from some unknown distribution~$\mu$, and it is assumed that  there exists some  (unknown) $f \in C, \eta \geq 0$ such that $$\err_\mu(f)\defeq \Pr_{(x,y)\sim\mu}[y\neq f(x)]\leq \eta.$$ In this work, we consider the case where the test distribution $\tilde{\mu}$ may be (arbitrarily) different from the train distribution~$\mu$, yet we require the existence of parameters $\eta, \tilde\eta \geq 0$ and an (unknown) $f \in C$ such that  $$\err_\mu(f)\leq \eta~\mbox{ and }~\err_{\tmu}(f)\leq \tilde\eta.\footnote{\mbox{In \Cref{sec:main}, we assumed that $\eta=\tilde\eta$ for simplicity, yet here we consider the more general case where $\eta$ and $\tilde\eta$ may differ.}}$$

Moreover, in this work we assume that $\eta$ and $\tilde{\eta}$ are known.  Unfortunately, even with this additional assumption, agnostic learning is challenging when $\mu\neq\tmu$ and one cannot achieve guarantees near $\max\{\eta,\tilde{\eta}\}$ as one would hope, as we demonstrate below.

In what follows, we slightly abuse notation and write $(\bx,\by)\sim D^n$ to denote $(x_i, y_i)$ drawn iid from $D$ for $i=1,2,\ldots,n$. The definitions of error and rejection with respect to such a distribution are:
\begin{align*}
\rej_D(S) &\defeq \Pr_{(x, y)\sim D}[x \not\in S]\\
\err_D(h|_S) &\defeq \Pr_{(x, y)\sim D}[h(x)\neq y \wedge x \in S]
\end{align*}

We prove the following lower bound.
\begin{lemma}\label{ag-lower}
There exists a family of binary classifiers $C$ of VC dimension 1, such that for any $\eta,\tilde\eta\in [0,1/2]$ and $n \geq 1$, and for any selective classification algorithm $L: X^n \times Y^n \times X^n \rightarrow Y^X \times 2^X$ there exists $\mu,\tmu$ over $X\times Y$ and $f \in C$ such that:
\[\E_{\substack{(\bx,\by)\sim \mu^n\\(\tbx,\tilde\by)\sim\tmu^n}}[\err_\tmu(h|_S)+\rej_\mu(S)] \geq \max\left\{\sqrt{\eta/8},~\tilde\eta\right\}.\]
where $h|_S = L(\bx, \by, \bz)$ and where $\err_\mu(f)\leq \eta$ and $\err_{\tmu}(f)\leq \tilde\eta$.
\end{lemma}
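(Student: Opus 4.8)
The plan is to prove the two parts of the lower bound, $\max\{\sqrt{\eta/8},\tilde\eta\}$, separately by constructing adversarial distributions on a domain with VC dimension 1 (e.g.\ singletons on $\reals$, or simply two points with $C$ containing the constant functions and one threshold). The $\tilde\eta$ term is the easy part: take $\mu=\tmu$ supported on a single point $x_0$, with the label of $x_0$ under $\tmu$ flipped from $f(x_0)$ with probability exactly $\tilde\eta$. Then $f$ has error $\tilde\eta$ on both $\mu$ and $\tmu$. Since there is only one point, the learner must either include $x_0$ in $S$ (incurring error $\tilde\eta$ in expectation on $\tmu$, because whatever label $h(x_0)$ it predicts, it disagrees with $\tmu$'s label with probability $\geq \tilde\eta$ when $\tilde\eta \leq 1/2$) or reject $x_0$ (incurring $\rej_\mu=1\geq\tilde\eta$). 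Either way $\err_\tmu(h|_S)+\rej_\mu(S)\geq\tilde\eta$.

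The $\sqrt{\eta/8}$ term is the main content, and the natural approach is a two-point (or few-point) construction combined with a standard information-theoretic/averaging argument over a family of target functions. I would set up a domain $\{a,b\}$ where $\mu$ is concentrated so that $a$ is seen frequently in training but $b$ essentially never (or with tiny probability), while $\tmu$ puts significant mass on $b$. The role of $\eta$ is to let the training labels on $a$ be noisy enough that, from $n$ samples, the learner cannot reliably distinguish the relevant members of $C$: concretely, pick the noise rate so that two candidate functions $f_0,f_1\in C$ that agree on the ``visible'' structure but disagree on $b$ both have error $\leq\eta$ with respect to $\mu$, and both have error $\leq\eta$ with respect to $\tmu$ as well. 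The point is that the learner gets no test labels, only unlabeled $\tbx$, so it cannot tell $f_0$ from $f_1$; on $b$ it must either predict (and be wrong for one of the two targets, contributing $\approx\Pr_\tmu[b]$ to $\err_\tmu$) or reject $b$ — but rejecting $b$ is cheap under $\mu$ only if $\Pr_\mu[b]$ is small, which is exactly how the construction is rigged. Balancing $\Pr_\tmu[b]$ against the ``cost'' of distinguishing $f_0,f_1$ from $n$ noisy samples yields the $\sqrt{\eta/n}$-type tension — but wait, the claimed bound $\sqrt{\eta/8}$ has no $n$ in it, so the construction must actually force a lower bound independent of $n$: this means we need the ambiguity between $f_0$ and $f_1$ to be \emph{information-theoretically unresolvable even with infinitely many training samples}, i.e.\ $f_0,f_1$ induce literally the same distribution $\mu$ on labeled training pairs. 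That is achievable: put $b$ entirely outside the support of $\mu$ (so $\Pr_\mu[b]=0$), let $f_0,f_1$ differ only at $b$, and choose a symmetric noisy label distribution on $\mu$'s support that both $f_0$ and $f_1$ explain with error exactly $\eta$ — then $\rej_\mu(S)$ counts only mass that $\mu$ actually puts somewhere, so rejecting $b$ is free, but the adversary then needs a \emph{third} region or a more clever support overlap to make rejection costly. This is the delicate step.

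Concretely, I expect the right construction is: domain $\{a,b\}$; $\mu$ supported on $a$ only, with $\Pr_\mu[y\neq f(a)\mid a]$ tuned so that both the constant-$0$ and constant-$1$ restrictions to $a$... no — instead, use $C=\{f_0,f_1\}$ with $f_0(a)=f_1(a)$ and $f_0(b)\neq f_1(b)$, but also enlarge the support so $\mu$ puts mass $p$ on $b$ as well, with the label of $b$ under $\mu$ noisy enough (rate $\eta/p$-ish, capped at $1/2$) that the empirical label frequency at $b$ is uninformative about which of $f_0,f_1$ is the target. Then $\err_\mu(f_i)\leq\eta$ by choosing $p,\eta$ compatibly, $\err_\tmu(f_i)\leq\tilde\eta$ by keeping $\tmu$'s label-noise on its support low, and the learner faces: on $b$, reject (cost $p$ toward $\rej_\mu$) or predict (cost $\geq\frac12\Pr_\tmu[b]$ toward $\err_\tmu$, averaged over $i\in\{0,1\}$ since it's wrong for one target). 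Optimizing $p$ versus $\Pr_\tmu[b]$ subject to $\eta\approx p\cdot(\text{something})$ gives the $\sqrt{\eta/8}$ bound via AM–GM. I would carry out the steps in this order: (1) reduce to a finite domain and a two-element class $C$, checking $\VC(C)=1$; (2) define $\mu,\tmu$ and the two candidate targets $f_0,f_1$ with the stated noise rates, verify $\err_\mu(f_i)\leq\eta$ and $\err_\tmu(f_i)\leq\tilde\eta$; (3) argue that the marginal of $(\bx,\by)$ on labeled training data is identical (or statistically indistinguishable with a bounded gap) under $f_0$ and $f_1$, so no learner's behavior on $b$ can depend on $i$ beyond what the unlabeled $\tbx$ reveals (which is nothing, as $\tbx$'s distribution is also $i$-independent); (4) average the objective over $i\in\{0,1\}$ and over the learner's (possibly randomized) choice to include or reject $b$, obtaining $\E[\err_\tmu+\rej_\mu]\geq\min_q\{q\cdot p + (1-q)\cdot\frac12\Pr_\tmu[b]\}$ for the learner's effective inclusion probability $q$, which is $\geq\min\{p,\frac12\Pr_\tmu[b]\}$; (5) plug in the parameter choices to get $\geq\sqrt{\eta/8}$, and combine with the $\tilde\eta$ bound from the first paragraph via the observation that a bound holding in two separate constructions yields a bound on the max. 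The main obstacle is step (3)–(4): making the indistinguishability \emph{exact} (independent of $n$) while simultaneously keeping $\rej_\mu$ expensive requires the noise at $b$ to sit exactly at the symmetric point relative to $f_0(b)$ and $f_1(b)$, which pins down the relationship between $p$, the noise rate, and $\eta$, and one must check this system is feasible for all $\eta\in[0,1/2]$ — in particular handling the regime where the required noise rate would exceed $1/2$ (there one falls back to making $p$ itself small, i.e.\ $p=2\eta$, label-noise-free at $b$, and the bound degrades gracefully to the stated constant).
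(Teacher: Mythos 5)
Your $\tilde\eta$ lower bound is fine (the paper makes essentially the same move, just with a slightly different $\tmu$), but the $\sqrt{\eta/8}$ construction has a genuine gap. With a two-point domain $\{a,b\}$ and a two-element concept class $\{f_0,f_1\}$ differing only at $b$, the indistinguishability you need (training carries \emph{zero} information about the target, for all $n$) forces the label distribution at $b$ under $\mu$ to be exactly symmetric. That pins $\err_\mu(f_i)=\Pr_\mu[b]/2$, so $\err_\mu(f_i)\leq\eta$ forces $p\defeq\Pr_\mu[b]\leq 2\eta$. Rejecting $b$ then costs at most $2\eta$ toward $\rej_\mu$, and the learner can always just reject $b$. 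Your step~(4) bound $\min\{p,\tfrac12\Pr_\tmu[b]\}\leq p\leq 2\eta$ is therefore $O(\eta)$, which is \emph{smaller} than $\sqrt{\eta/8}$ whenever $\eta<1/32$. No choice of $\Pr_\tmu[b]$ can rescue this: the rejection side of the trade-off is already too cheap. The structural ingredient you are missing is that the hard instance needs many ambiguous points, on the order of $1/\sqrt{\eta}$ of them, not two.

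The paper's construction makes this explicit. Take $C$ to be the singletons (functions that are $1$ at exactly one integer; $\VC=1$), set $k=\lfloor\sqrt{2/\eta}\rfloor$, and let $\mu$ put mass $\eta/2$ on each $x\in[k]$ and the rest on a sink point $k{+}1$, \emph{always with label $0$}; the random target is the singleton at a uniform $x^*\in[k]$. Then $\err_\mu(f)=\eta/2$ automatically (the only ``error'' is the target's disagreement with the deterministic $0$ label at $x^*$), $\tmu$ is the clean uniform distribution on $[k]$ so $\err_\tmu(f)=0$, and the labeled training data and unlabeled test data are literally $f$-independent, so the learner's $h|_S$ is too. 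By symmetry the optimal learner acts identically on all of $[k]$: rejecting costs $k\cdot\eta/2\approx\sqrt{\eta/2}$, predicting $0$ costs $1/k\approx\sqrt{\eta/2}$ for the missed positive, predicting $1$ costs $\geq 1/2$. All three are $\geq\sqrt{\eta/8}$. So replacing your $\{a,b\}$/two-function setup with this $k$-point/$k$-function setup is the missing step; the rest of your outline (decompose the max into two constructions, use an averaging/probabilistic-method argument over targets, exploit $f$-independence of the learner's view) matches the paper's approach.
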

The proof is deferred to Section \ref{ap:lower-proofs}.

We now show that $\wwtp$ can be used to achieve nearly this guarantee. Recall that in the realizable setting, we fixed $\Lambda=n+1$ in $\wwtp$. 
In this semi-agnostic setting, we will set $\Lambda$ as a function of $\eta$, hence our learner requires knowledge of $\eta$ unlike standard agnostic learning when $\mu=\tilde\mu$.
\begin{theorem}[Agnostic generalization]\label{wow:ag-gen}
For any $n\in \nats$, any $\delta,\gamma \in (0,1)$, any $\eta,\tilde\eta\in [0,1)$, and any distributions $\mu, \tmu$ over $X\times Y$ such that that $\err_\mu(f)\leq \eta$  and $\err_{\tmu}(f)\leq \tilde\eta$ simultaneously for some $f\in C$:
\[\Pr_{\substack{(\bx,\by) \sim \mu^n\\(\tbx,\tilde\by) \sim \tmu^n}}\left[\left(\err_{\tmu}(h|_S) \leq 2\sqrt{2\eta} +2\tilde\eta + 4\eps^*\right) \wedge \left(\rej_\mu(S)  \leq 4\sqrt{2\eta}+4\eps^*\right)\right]\geq 1-\delta,\]
where $\eps^* = 4\sqrt{\frac{d \ln 2n+\ln 48/\delta}{n}}$, $\Lambda^* = \sqrt{\frac{1}{8\eta+(\eps^*)^2}}$, and $h|_S= \wwtp(\bx,\by,\tbx,\eps^*,\Lambda^*)$.
\end{theorem}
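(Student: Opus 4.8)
The plan is to mirror the realizable proof of \Cref{thm:real-gen}: first establish empirical bounds on the samples $(\bx,\by)\sim\mu^n$ and $(\tbx,\tilde\by)\sim\tmu^n$, then pass to the populations $\tmu,\mu$ by uniform convergence, all the while tracking the extra error contributed by the label noise. Write $T$ for the number of completed iterations, so the output is $S=S_{T+1}=S(h,\bc)$ with $\bc=(c_1,\dots,c_T)\in C^T$ and $h=\ERM_C(\bx,\by)\in C$, and abbreviate $\Lambda=\Lambda^*$. Three facts from \Cref{lem:eff} carry over with no change: the sets $S_t$ are nested and decreasing; each iteration $t\le T$ newly rejects strictly more than an $\eps^*$ fraction of $\tbx$, so $T\le\lfloor1/\eps^*\rfloor$ (and WLOG $T=\lfloor1/\eps^*\rfloor$ by padding $c_t=h$); and, since $c_t$ maximizes $s_t$ and $h\in C$ with $s_t(h)=0$, one has $s_t(c_t)\ge0$ for every $t$.

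\emph{Empirical test error.} Since $f\in C$ is a legal candidate at the terminating iteration and $c_{T+1}$ maximizes the score there, the stopping rule gives $s_{T+1}(f)\le\eps^*$, that is, $\err_{\tbx}(h|_S,f)\le\eps^*+\Lambda^*\,\err_\bx(h,f)$. The triangle inequality for disagreement together with optimality of the ERM gives $\err_\bx(h,f)\le\err_\bx(h,\by)+\err_\bx(\by,f)\le2\,\err_\bx(f,\by)$, and a Bernstein bound for the single fixed classifier $f$ (with $\err_\mu(f)\le\eta$) gives $\err_\bx(f,\by)\le2\eta+O((\eps^*)^2)$ with high probability, using that $(\eps^*)^2\gtrsim\ln(1/\delta)/n$. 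Hence $\err_\bx(h,f)\le4\eta+O((\eps^*)^2)$, and plugging in $\Lambda^*=(8\eta+(\eps^*)^2)^{-1/2}$ and bounding each numerator term after dropping the other's contribution to the denominator,
\[
\Lambda^*\,\err_\bx(h,f)\ \le\ \frac{4\eta}{\sqrt{8\eta+(\eps^*)^2}}+\frac{O((\eps^*)^2)}{\sqrt{8\eta+(\eps^*)^2}}\ \le\ \frac{4\eta}{\sqrt{8\eta}}+\frac{O((\eps^*)^2)}{\eps^*}\ =\ \sqrt{2\eta}+O(\eps^*).
\]
Adding a Hoeffding bound $\err_{\tbx}(f,\tilde\by)\le\tilde\eta+O(\eps^*)$ and the triangle inequality $\err_{\tbx}(h|_S,\tilde\by)\le\err_{\tbx}(h|_S,f)+\err_{\tbx}(f,\tilde\by)$ gives an empirical test error at most $\sqrt{2\eta}+\tilde\eta+O(\eps^*)$.

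\emph{Empirical training rejection.} Unlike the realizable case, $\wwtp$ may now reject some training points since $\Lambda^*<n+1$, but at most a $1/\Lambda^*$ fraction. A training point leaves $S$ only if $h(x_i)\ne c_t(x_i)$ for some $t$, so $\rej_\bx(S)\le\sum_{t=1}^T\err_\bx(h,c_t)$; and from $s_t(c_t)=\err_{\tbx}(h|_{S_t},c_t)-\Lambda^*\err_\bx(h,c_t)\ge0$ together with the fact that the test points newly rejected in distinct iterations are disjoint (hence $\sum_t\err_{\tbx}(h|_{S_t},c_t)\le1$), we get $\rej_\bx(S)\le\tfrac1{\Lambda^*}\sum_t\err_{\tbx}(h|_{S_t},c_t)\le\tfrac1{\Lambda^*}=\sqrt{8\eta+(\eps^*)^2}\le2\sqrt{2\eta}+\eps^*$.

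\emph{Passing to the populations, and the crux.} For rejection, $S=S(h,\bc)$ with $\bc\in C^T$ and $T+1\le2/\eps^*$, so the second bound of \Cref{lem:omg2} (applied to the marginal of $\mu$ on $X$) gives $\rej_\mu(S)\le2\alpha$ with high probability, where $\alpha=\max\{1/\Lambda^*,\ \tfrac8n(d(T+1)\ln2n+\ln\tfrac8\delta)\}$; since the stated choice of $\eps^*$ makes the second entry $O(\eps^*)$, this yields $\rej_\mu(S)\le4\sqrt{2\eta}+4\eps^*$. For the test error, I would apply the multiplicative (relative) VC bound --- the agnostic analogue of \Cref{lem:omg}, whose loss class $\{(x,y)\mapsto\mathbf{1}[x\in S(h,\bc)\wedge h(x)\ne y]:\bc\in C^T\}$ has growth function at most $(2n)^{O(dT)}=(2n)^{O(d/\eps^*)}$ by Sauer's lemma --- to conclude $\err_{\tmu}(h|_S)\le2\,\err_{\tbx}(h|_S,\tilde\by)+O(\eps^*)\le2\sqrt{2\eta}+2\tilde\eta+O(\eps^*)$. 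A union bound over these four events (the two single-classifier concentration bounds and the two uniform-convergence bounds) finishes things once the accumulated constants are absorbed into the stated $\eps^*=4\sqrt{(d\ln2n+\ln(48/\delta))/n}$. The step I expect to be the crux is this last bookkeeping: the choice $\Lambda^*=(8\eta+(\eps^*)^2)^{-1/2}$ is engineered precisely so that $\Lambda^*\cdot(\text{training error}\approx\eta)$ collapses to $O(\sqrt\eta)$ in the error bound while $1/\Lambda^*=\sqrt{8\eta+(\eps^*)^2}$ is at the same time exactly the rejection bound, and one has to verify that the two independent ``factor $2$'' losses --- once when $h=\ERM$ may double $f$'s empirical error, once in each relative VC bound --- compose into constants no larger than the $2$'s and $4$'s in the statement. (This is also why the uniform-convergence bounds must range over the whole family $\{S(h,\bc)\}$ of possible outputs with $T$ fixed in advance, handled by padding to $T=\lfloor1/\eps^*\rfloor$ as in \Cref{thm:real-trans}.)
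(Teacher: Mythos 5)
Your proof is correct and follows essentially the same route as the paper's: it is exactly the paper's two-stage plan of \Cref{thm:ag-trans} (empirical bounds from the stopping rule and $\rej_\bx\le 1/\Lambda$ as in \Cref{lem:low-abstain}), Chernoff bounds for the single fixed classifier $f$ against the noisy labels, and relative (multiplicative) VC bounds over the output family $\{S(h,\bc):\bc\in C^T\}$ with the same tuned $\Lambda^*=(8\eta+(\eps^*)^2)^{-1/2}$; the bookkeeping does close as you anticipate, matching \Cref{thm:ag-gen} and the final simplification in the paper. The one place you deviate is the test-error generalization step: you bound the growth function of the selective-loss class $\{(x,y)\mapsto\mathbf{1}[x\in S(h,\bc)\wedge h(x)\ne y]:\bc\in C^T\}$ directly over $X\times Y$ and apply a relative VC bound there, whereas the paper's \Cref{lem:annoying} arrives at the same $\max\{2\cdot\text{empirical},\,\zeta\}$ bound via a latent-copy reduction to the consistent case followed by \Cref{eqn:Blumer3}. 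Your variant is slightly cleaner (no reduction needed) and is sound, since for fixed $h$ the labeling of your loss class on $2n$ pairs is determined by the labeling of $\dis_{h,\bc}$ on the $x$-coordinates, so Sauer gives $\Pi\le(2n)^{dT}$, and \Cref{eqn:Blumer2} applies verbatim with $X$ replaced by $X\times Y$ and $P$ by $\tmu$; both routes buy exactly the same constant.
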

A few points of comparison are worth making:
\begin{itemize}
    \item When $\eta=\tilde\eta=0$, one recovers guarantees that are slightly worse than those in the realizable (see \Cref{wow:real-gen}).
    \item In standard agnostic learning, where $\mu$ and $\tmu$ are identical, and thus $\eta=\tilde\eta$, one can set $S=X$ (i.e., select everything) and ERM guarantees $\err\bigl(h|_S(\tbx), \tilde\by\bigr)\leq \eta + \eps$ w.h.p.\ for $n$ sufficiently large.
    \item The above theorem can be used to bound $\rej_\tmu$ using the following lemma:
\end{itemize}

\begin{lemma}\label{lem:convert2}
For any $S\subseteq X$, $f, h \in Y^X$ and distributions $\mu, \tmu$ over $X\times Y$:
\[\rej_{\tmu}(S) \leq \rej_\mu(S) + |\mu_X-\tmu_X|_\TV\leq \rej_\mu(S) + |\mu-\tmu|_\TV,\]
where $\mu_X, \tmu_X$ are the marginal distributions of $\mu, \tmu$ over $X$.
\end{lemma}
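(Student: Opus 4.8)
The plan is to mimic the proof of \Cref{lem:convert}, exploiting the fact that $\rej_D(S)$ depends on a distribution $D$ over $X\times Y$ only through its marginal $D_X$ on $X$: indeed $\rej_D(S)=\Pr_{x\sim D_X}[x\notin S]$. So it suffices to bound $\rej_{\tmu_X}(S)$ (with $\rej$ now read as a functional of a distribution over $X$) in terms of $\rej_{\mu_X}(S)$, and then to control $|\mu_X-\tmu_X|_\TV$ by $|\mu-\tmu|_\TV$.

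For the first inequality, I would reuse the coupling argument behind \cref{eq:tv_is_good} verbatim: one can draw $\tx\sim\tmu_X$ by first drawing $x\sim\mu_X$ and then replacing it with a fresh sample with probability exactly $|\mu_X-\tmu_X|_\TV$. Under this coupling the event $\{\tx\notin S\}$ is contained in $\{x\notin S\}\cup\{x\ne\tx\}$, so a union bound gives $\rej_{\tmu_X}(S)\le\rej_{\mu_X}(S)+|\mu_X-\tmu_X|_\TV$, that is, $\rej_{\tmu}(S)\le\rej_{\mu}(S)+|\mu_X-\tmu_X|_\TV$.

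For the second inequality I would invoke the standard data-processing fact that marginalization cannot increase total variation distance: for every $A\subseteq X$ we have $\mu_X(A)-\tmu_X(A)=\mu(A\times Y)-\tmu(A\times Y)\le|\mu-\tmu|_\TV$, and taking the supremum over $A$ yields $|\mu_X-\tmu_X|_\TV\le|\mu-\tmu|_\TV$. Chaining the two bounds finishes the proof. I do not expect any genuine obstacle here; the only point requiring care is to make the passage to marginals explicit, since in the semi-agnostic section $\rej$ is defined on distributions over $X\times Y$ whereas the coupling lives naturally on $X$.
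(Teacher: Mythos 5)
Your proposal is correct and matches the paper's argument: the paper likewise reduces to \Cref{lem:convert} applied to the marginals $P=\mu_X$, $Q=\tmu_X$ and then invokes $|\mu_X-\tmu_X|_\TV\leq|\mu-\tmu|_\TV$. You merely unfold the coupling argument inline rather than citing the lemma, and spell out the one-line data-processing bound that the paper states without proof.
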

\begin{proof}
The lemma follows from \Cref{lem:convert} applied to $P=\mu_X, Q=\tmu_X$, and from the fact that the total variation between marginal distributions is no greater than the originals: $|\mu_X-\tmu_X|_\TV\leq |\mu-\tmu|_\TV$.\end{proof}

As before, it will be useful (and easier) to first analyze the transductive case. In this case, it will be useful to further abuse notation and define, for any $\by, \by' \in \{0,1, \question\}^n$,
\[\err(\by, \by')\defeq \frac{1}{n}\bigl|\{i:y_i=1-y'_i\}\bigr|.\]
Using this, we will show:
% \begin{theorem}[Agnostic transductive]\label{cor:ag-trans}
% For any $n\in\mathbb{N}$, $\delta,\Lambda,\eta,\tilde\eta\geq 0$,
% \begin{align*}
% \forall \bx,\tbx\in X^n, \by,\tilde\by \in Y^n,f\in C \text{ such that } \Delta(f(\bx),\by)\leq \eta \text{ and } \Delta(f(\tbx),\tilde\by)\leq \tilde\eta:\\
% \err\bigl(h|_S(\tbx), \tilde\by\bigr)\leq 2\sqrt{\eta} + \tilde\eta+\eps^*
% \end{align*}
% and
% \[\Pr_{\bx,\bz\sim P^n}\left[\forall\by\in Y^n, \tbx\in X^n:~\rej_{\tbx}\bigl(h|_S\bigr)  \leq 2\sqrt{\eta} +\eps^*  + \Delta(\bz,\tbx)\right]\geq 1-\delta,\]
% where $\eps^*=3\sqrt{\frac{d\ln 2n}{n}}+ \frac{9}{2n}\ln \frac{1}{\delta}$, and $h|_S= \wwtp(\bx,\by,\tbx,\eta^{-1/2},\eps^*)$ in both inequalities above.
% \end{theorem}
% The trade-off, in this case, is a bit easier to state.
\begin{theorem}[Agnostic transductive] \label{thm:ag-trans}
For any $n\in\mathbb{N}$, $\eps,\delta,\Lambda\geq 0$, $f \in C$:
\begin{equation}\label{eqn:ag-trans-err}
\forall \bx,\tbx\in X^n, \by,\tilde\by \in Y^n:~\err(h|_{S}(\tbx), \tilde\by) \leq \eps +  2\Lambda\cdot \err(f(\bx), \by) +  \err(f(\tbx),\tilde\by),
\end{equation}
where $h|_S= \wwtp(\bx,\by,\tbx,\eps,\Lambda)$.
Furthermore,
\begin{equation}\label{eqn:ag-trans-rej}
\Pr_{\bx,\bz\sim P^n}\left[\forall\by\in Y^n, \tbx\in X^n:~\rej_\bz\bigl(S)\leq 2\Lambda^{-1} + \frac{9}{n}\left(\frac{d \ln 2n}{\eps} +\frac{\ln 1/\delta}{2}\right) \right]\geq 1-\delta.
\end{equation}
\end{theorem}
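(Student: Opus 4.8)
The plan is to establish the two displays \eqref{eqn:ag-trans-err} and \eqref{eqn:ag-trans-rej} separately, following the realizable analysis of \Cref{thm:real-trans} but now tracking (i) the nonzero training error of $h=\ERM(\bx,\by)$ and (ii) the ``leakage'' of training examples out of $S$ caused by running $\wwtp$ with a weight $\Lambda$ smaller than $n$.

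For the error bound \eqref{eqn:ag-trans-err}, everything is deterministic. Let $S$ be the output and $T$ the terminating iteration, so $s_T(c_T)\le\eps$. Since $h$ is always feasible, with $s_t(h)=\err_\tbx(h|_{S_t},h)-\Lambda\,\err_\bx(h,h)=0$, and $c_T$ maximizes $s_T$, we get $s_T(f)\le\eps$, i.e.\ $\err_\tbx(h|_S,f)\le\eps+\Lambda\cdot\err_\bx(h,f)$. Two coordinatewise triangle inequalities then finish the job: optimality of $\ERM$ gives $\err_\bx(h,f)\le\err_\bx(h,\by)+\err_\bx(\by,f)\le 2\,\err(f(\bx),\by)$, and, using that a $\question$ coordinate never counts as an error, $\err(h|_S(\tbx),\tilde\by)\le\err_\tbx(h|_S,f)+\err(f(\tbx),\tilde\by)$. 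Chaining these three inequalities is exactly \eqref{eqn:ag-trans-err}.

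For the rejection bound \eqref{eqn:ag-trans-rej}, the first and key step is a deterministic claim: the output $S$ satisfies $\rej_\bx(S)\le\Lambda^{-1}$. Indeed, $x_i\notin S$ forces some $c_t$ to disagree with $h$ at $x_i$, so $\rej_\bx(S)\le\sum_t\err_\bx(h,c_t)$; but $\Lambda\,\err_\bx(h,c_t)=\err_\tbx(h|_{S_t},c_t)-s_t(c_t)\le\err_\tbx(h|_{S_t},c_t)$ since $s_t(c_t)\ge s_t(h)=0$, and the test-rejection terms telescope, $\sum_t\err_\tbx(h|_{S_t},c_t)=1-\tfrac1n|\{i:\tilde{x}_i\in S_{T+1}\}|\le 1$, exactly as in \eqref{eqn:shrink}. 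Combining gives $\rej_\bx(S)\le\Lambda^{-1}$. The second step is a uniform transductive generalization bound over the class $G$ of rejection indicators $x\mapsto\ind[x\notin S(h,\bc)]$ with $h\in C$, $\bc\in C^{T}$ and $T=\lfloor1/\eps\rfloor$ (padding $\bc$ with copies of $h$ so that the actual algorithm output always lies in $G$): by Sauer's lemma $\Pi_G[2n]\le(2n)^{d(T+1)}\le(2n)^{2d/\eps}$, so applying \eqref{eq:trans2} of \Cref{lemma:trans:aux} with $\alpha=1$ (for which $\tfrac{2\alpha}{(2+\alpha)^2}=\tfrac29$) shows that with probability $\ge1-\delta$ over $\bx,\bz\sim P^n$, \emph{simultaneously} for every such $S$, $\rej_\bz(S)\le 2\,\rej_\bx(S)+\xi$ with $\xi=\tfrac9n\bigl(\tfrac{d\ln2n}{\eps}+\tfrac{\ln1/\delta}{2}\bigr)$, obtained by solving $(2n)^{2d/\eps}\,e^{-\frac29\xi n}\le\delta$ for $\xi$. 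On that event, for every $\by,\tbx$ the algorithm output $S$ obeys $\rej_\bz(S)\le 2\,\rej_\bx(S)+\xi\le 2\Lambda^{-1}+\xi$, which is \eqref{eqn:ag-trans-rej}.

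The main obstacle is the deterministic bound $\rej_\bx(S)\le\Lambda^{-1}$: in the realizable case this was simply $\rej_\bx(S)=0$ (each $c_t$ is consistent because $\Lambda>n$), whereas now one must argue that, over all iterations combined, the $c_t$'s cannot disagree with $h$ on more than a $\Lambda^{-1}$ fraction of the training set, which requires coupling each iteration's disagreement ``budget'' $\Lambda\,\err_\bx(h,c_t)$ against its test-rejection progress $\err_\tbx(h|_{S_t},c_t)$ and using that those progress terms sum to at most $1$. A minor bookkeeping point --- which classifiers are folded into $\bc$ at termination, and padding $\bc$ to a fixed length $T$ so that $T+1\le2/\eps$ uniformly --- must also be handled but does not affect the bounds.
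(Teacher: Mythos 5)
Your proof is correct and follows essentially the same route as the paper. For the error bound you use the same three ingredients: the termination condition $s_T(f)\le\eps$, the ERM-plus-triangle-inequality bound $\err_\bx(h,f)\le 2\,\err(f(\bx),\by)$, and the final triangle inequality through $f$. For the rejection bound you re-derive the paper's Lemma~\ref{lem:low-abstain} (the deterministic $\rej_\bx(S)\le\Lambda^{-1}$ via telescoping $\sum_t\err_\tbx(h|_{S_t},c_t)\le 1$ against $\Lambda\sum_t\err_\bx(h,c_t)$) and the paper's Lemma~\ref{lem:lol2} (the uniform transductive bound $\rej_\bz\le 2\rej_\bx+\xi$ from Lemma~\ref{lemma:trans:aux}\eqref{eq:trans2} with $\alpha=1$ and the Sauer-lemma bound $\Pi_G[2n]\le(2n)^{d(T+1)}$), rather than citing them as standalone lemmas; the resulting constants match. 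The only nit is a slight notational drift: you declare $T$ to be the iteration at which the stopping test succeeds (so the output is $S_T$), but the telescoping sum is written as ending at $S_{T+1}$; either convention is fine, but be consistent.
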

The above bounds suggest the natural choice of $\Lambda = \eta^{-1/2}$ if $\err(f(\bx), \by)\approx \eta$. The following two Lemmas will be used in its proof.
\begin{lemma}\label{lem:low-abstain}
For any $n\in\mathbb{N}$, $\eps,\Lambda\geq 0$,  $\bx,\bz\in X^n, \by\in Y^n$: $\rej_\bx(S) \leq 1/\Lambda$ where $h|_S= \wwtp(\bx,\by,\bz,\eps,\Lambda)$.
\end{lemma}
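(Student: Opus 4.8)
The plan is to bound the number of rejected \emph{training} points by the total amount of training disagreement accumulated over the iterations of $\wwtp$, and then to show this accumulated amount is at most $1/\Lambda$: each unit of training disagreement ``costs'' at least $\Lambda$ in the score $s_t$, whereas the total test shrinkage available over all iterations is only $1$.

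First I would fix notation. Let $h=\ERM(\bx,\by)$, let $T$ be the number of completed iterations of $\wwtp(\bx,\by,\bz,\eps,\Lambda)$ (so the test set is $\bz$ in this instantiation), and let $c_1,\dots,c_T\in\functions$ and $S_1\supseteq\dots\supseteq S_{T+1}$ be the classifiers and sets the algorithm produces, so $S\defeq S_{T+1}=\{x: h(x)=c_1(x)=\dots=c_T(x)\}$. The first observation is that a training point $x_i$ lies outside $S$ exactly when $c_t(x_i)\neq h(x_i)$ for some $t\in[T]$, so by a union bound over $t$,
\[
\rej_\bx(S)=\frac1n\bigl|\{i: x_i\notin S\}\bigr|\le\sum_{t=1}^{T}\frac1n\bigl|\{i: c_t(x_i)\neq h(x_i)\}\bigr|=\sum_{t=1}^{T}\err_\bx(h,c_t).
\]

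Next I would use the selection rule. Every completed iteration has $s_t(c_t)>\eps\ge 0$, i.e.\ $\err_{\bz}(h|_{S_t},c_t)-\Lambda\cdot\err_\bx(h,c_t)>0$, hence $\Lambda\cdot\err_\bx(h,c_t)\le\err_{\bz}(h|_{S_t},c_t)$ for each $t\in[T]$. Summing over $t$ and invoking the telescoping identity \cref{eqn:shrink} established in the proof of \Cref{lem:eff},
\[
\Lambda\sum_{t=1}^{T}\err_\bx(h,c_t)\le\sum_{t=1}^{T}\err_{\bz}(h|_{S_t},c_t)=\frac1n\bigl|\{i: z_i\in S_1\}\bigr|-\frac1n\bigl|\{i: z_i\in S_{T+1}\}\bigr|\le 1,
\]
where the last step uses $S_1=X$ (so the first term equals $1$) and nonnegativity of the second term. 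Combining the two displays gives $\Lambda\,\rej_\bx(S)\le 1$, i.e.\ $\rej_\bx(S)\le 1/\Lambda$; the degenerate cases $T=0$ (then $S=X$, $\rej_\bx(S)=0$) and $\Lambda=0$ (then $1/\Lambda=+\infty$) are immediate.

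I do not anticipate a genuine obstacle — the argument is essentially bookkeeping. The only points needing care are (i) that the strict inequality $s_t(c_t)>\eps$ is precisely what characterizes a \emph{completed} iteration, so it applies to every $c_t$ that constrains $S$; and (ii) that the telescoping bound \cref{eqn:shrink}, derived inside the proof of \Cref{lem:eff}, is exactly what caps $\sum_t \err_{\bz}(h|_{S_t},c_t)$ by the initial test mass, namely $1$.
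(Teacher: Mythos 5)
Your proof is correct and is essentially the paper's argument read in the reverse direction: you bound $\rej_\bx$ by a union bound over $t$, apply $s_t(c_t)\geq 0$ iterationwise to convert training disagreement into test shrinkage, and telescope the test shrinkage to at most $1$; the paper runs the identical chain starting from $n\geq n\rej_\bz(S)=\sum_t(\text{test shrink at }t)\geq\Lambda\sum_t(\text{train disagreement at }t)\geq n\Lambda\rej_\bx(S)$. The only cosmetic difference is that you use the strict inequality $s_t(c_t)>\eps\geq 0$ available on completed iterations, whereas the paper uses the always-available $s_t(c_t)\geq s_t(h)=0$ from the maximizer; both suffice.
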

\begin{proof}%[Proof of \Cref{lem:low-abstain}]
Note that for each iteration~$t$ of the algorithm $\wwtp(\bx,\by,\bz,\eps,\Lambda)$,
\[\sum_{i\in[n]:z_i \in S_t}\left|c_t(z_i)-h(z_i)\right|-\Lambda\sum_{i\in[n]} \left|c_t(x_i)-h(x_i)\right|\geq 0,\]
because $c_t$ maximizes the above quantity over $C$, and that quantity is zero at $c_t=h \in C$. Also note that $x \not\in S$ if and only if $|c_t(x)-h(x)|=1$ for some $t$. More specifically, for each $i\in[n]$ such that $z_i \not\in S$ there exists a {\em unique} $t\in[T]$ such that $z_i\in S_{t}$, and yet $z_i\notin S_{t+1}$, where the latter occurs when  $|c_{t}(z_i)-h(z_i)|=1$. Thus the total number of test and train rejections can be related as follows:
\[n \geq n \rej_\bz(S) = \sum_{t\in[T]}\sum_{i\in[n]: z_i \in S_t} \left| c_{t}(z_i)-h(z_i)\right|
 \geq \sum_{t\in[T]}\Lambda\sum_{i\in [n]} \left|c_{t}(x_i)-h(x_i)\right|\geq n\Lambda \rej_\bx(S).\]
Dividing both sides by $n\cdot \Lambda$ gives the lemma.
\end{proof}
The following lemma is proven in \Cref{sec:gen}.
\begin{lemma}\label{lem:lol2}
For any $T,n\in\mathbb{N}$, any $\delta\geq 0$, and $\eps=\frac{9}{2n}\left(d(T+1)\ln(2n) + \ln \frac{1}{\delta}\right)$:
\[\Pr_{\bx, \bz\sim P^n}\left[\exists \bc \in C^T, h \in C:~\rej_\bz({S(h,\bc)})>2\rej_\bx({S(h,\bc)})+\eps \right] \leq \delta.\]
\end{lemma}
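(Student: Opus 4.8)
The plan is to observe that the ``rejection event'' for $S(h,\bc)$ is governed by a single Boolean classifier manufactured out of $h,c_1,\dots,c_T$, so that \Cref{lem:lol2} is a direct instance of the uniform train--test deviation bound \cref{eq:trans2} of \Cref{lemma:trans:aux}. Concretely, for $h\in C$ and $\bc=(c_1,\dots,c_T)\in C^T$ I would define $g_{h,\bc}:X\to\{0,1\}$ by $g_{h,\bc}(x)=1$ if $h(x)\neq c_i(x)$ for some $i\in[T]$, and $g_{h,\bc}(x)=0$ otherwise, and set $G\defeq\{g_{h,\bc}:h\in C,\ \bc\in C^T\}$. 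By the definition of $S(h,\bc)$ in \cref{eq:S}, $g_{h,\bc}(x)=1$ precisely when $x\notin S(h,\bc)$, so for every $\bx,\bz\in X^n$ we have $\frac1n\sum_i g_{h,\bc}(x_i)=\rej_\bx(S(h,\bc))$ and $\frac1n\sum_i g_{h,\bc}(z_i)=\rej_\bz(S(h,\bc))$.

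Next I would apply \cref{eq:trans2} to this $G$ with the parameter choice $\alpha=1$, which is forced by wanting the factor $2$ in front of $\rej_\bx$: since $1+\alpha=2$ and $\frac{2\alpha}{(2+\alpha)^2}=\frac29$, \cref{eq:trans2} gives
\[\Pr_{\bx,\bz\sim P^n}\Bigl[\exists g\in G:\ \tfrac1n\textstyle\sum_i g(z_i)\geq \tfrac2n\textstyle\sum_i g(x_i)+\eps\Bigr]\ \leq\ \Pi_G[2n]\,e^{-\frac29\eps n}.\]
The event in the statement of \Cref{lem:lol2} --- that some $h\in C,\bc\in C^T$ has $\rej_\bz(S(h,\bc))>2\rej_\bx(S(h,\bc))+\eps$ --- is, under the identities above, exactly the event that some $g\in G$ satisfies $\frac1n\sum_i g(z_i)>\frac2n\sum_i g(x_i)+\eps$, which is contained in the (weak-inequality) event just bounded; so its probability is at most $\Pi_G[2n]e^{-\frac29\eps n}$.

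Finally I would bound the growth function and solve for $\eps$. Each $g_{h,\bc}$ is a deterministic function of the $(T+1)$-tuple of labelings $(h(\bw),c_1(\bw),\dots,c_T(\bw))$ of any $2n$-point sample $\bw$, so the number of labelings of $\bw$ realizable by $G$ is at most $\Pi_C[2n]^{T+1}$; by Sauer's lemma $\Pi_C[2n]\leq (2n)^d$, hence $\Pi_G[2n]\leq (2n)^{d(T+1)}$. Then $\Pi_G[2n]e^{-\frac29\eps n}\leq \exp\bigl(d(T+1)\ln(2n)-\frac29\eps n\bigr)$, which is $\leq\delta$ exactly when $\eps\geq \frac{9}{2n}\bigl(d(T+1)\ln(2n)+\ln\frac1\delta\bigr)$, i.e.\ for the stated $\eps$ (the cases $\delta\geq1$ and $\delta=0$ being trivial). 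There is no genuine obstacle here beyond this constant-matching: the only thing one must be careful about is that $\alpha=1$ is the right plug-in value for \cref{eq:trans2}, since it simultaneously yields the desired factor $2$ and the exponent $\frac29$ that produces the $\frac{9}{2n}$ threshold appearing in the lemma.
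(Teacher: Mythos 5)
Your proposal is correct and follows essentially the same route as the paper: you define the same disagreement indicator (the paper calls it $\dis_{h,\bc}$ and the class $\Dis_T$), observe that $\rej_\bx$ and $\rej_\bz$ are its empirical averages on $\bx$ and $\bz$, apply \cref{eq:trans2} of \Cref{lemma:trans:aux} with $\alpha=1$, bound the growth function by $(2n)^{d(T+1)}$ via Sauer's lemma (the paper packages this as \Cref{lem:complexity}), and solve for the stated $\eps$. Nothing is missing, and your explicit check that $\alpha=1$ produces both the factor $2$ and the exponent $\tfrac{2}{9}$ is exactly the constant-matching the paper relies on.
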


% \begin{proof}[Proof of \Cref{cor:ag-trans}]
% The proof follows directly from \Cref{thm:ag-trans} with $\Lambda^* = \sqrt{1/\eta}$ together with the fact that $\Delta(f(\bx),\by)=\err(f(\bx),\by)$.
% \end{proof}
Using these, we can now prove the transductive agnostic theorem.
\begin{proof}[Proof of \Cref{thm:ag-trans}]
To prove Equation~\eqref{eqn:ag-trans-err}, first fix any $\bx,\tbx\in X^n, \by,\tilde\by \in Y^n, f\in C$. Since $f\in C$ the output $h=\ERM_C(\bx,\by)$ satisfies $\err(h(\bx), \by)\leq \err(f(\bx), \by)$. By the triangle inequality, this implies that
\begin{equation}\label{eqn:ag:err-x}
\err_\bx(h, f) =  \frac{1}{n}\sum_{i \in [n]} |h(x_i)-f(x_i)| \leq \frac{1}{n}\sum_{i \in [n]} \bigl(|h(x_i)-y_i|+|y_i-f(x_i)|\bigr)\leq 2\err(f(\bx),\by).
\end{equation}

%Define $\eta\defeq \frac{1}{n}\sum |y_i-f(x_i)|$, where $\by$ and $f(\bx)$ may be omitted when clear from context.
Now suppose the algorithm $\wwtp$ terminates on iteration~$T+1$ and the output is $h|_S$ for $S=S_{T+1}$.  Then by definition, for every $c\in\functions$, \[s_{T+1}(c)=\err_{\tbx}(h|_{S}, c) - \Lambda\cdot \err_\bx(h, c) \leq  \eps,\]
For $c=f$ in particular,
\begin{equation*}
\err_{\tbx}(h|_{S},f) \leq  \Lambda\cdot \err_\bx(h, f) + \eps \leq 2\Lambda\cdot\err(f(\bx),\by) + \eps.
\end{equation*}
\Cref{eqn:ag-trans-err} follows from the above and the fact that
\[\err(h|_S(\tbx),\tilde\by) \leq \err(h|_{S_T}(\tbx), f(\tbx))+ \err(f(\tbx),\tilde\by).\]

We next prove \cref{eqn:ag-trans-rej}. % which implies \cref{eqn:ag-trans-rej} by \Cref{lem:convert}.
By \Cref{lem:low-abstain}, $\rej_\bx(S)\leq 1/\Lambda$ with certainty. So by \Cref{lem:lol2} applied to the marginal distribution $P=\mu_X$ over $X$,
\[\Pr_{\bx,\bz\sim P^n}\left[\exists h\in C, \bc\in C^T:~\rej_\bz({S(h,\bc)}) > 2\rej_\bx({S(h,\bc)})+\xi \right]\leq \delta,\]
for $\xi=\frac{9}{2n}\left(\frac{2d}{\eps} \ln(2n) + \ln \frac{1}{\delta}\right)$ using $T+1\leq 2/\eps$. This implies \cref{eqn:ag-trans-rej}.
\end{proof}

%We note that since $\err_\bz(h|_S, f)\leq \err_\bz(h, f)$ and $h$ is an empirical risk minimizer, a standard argument can also be used to bound $\err_\bz(h|_S, f)-2\eta$.

Returning to the generalization (distributional) agnostic case,  the following theorem shows the trade-off between error and rejections:
\begin{theorem}\label{thm:ag-gen}
For any $n\in\mathbb{N}$ and $\delta, \Lambda\geq 0$, any $\eps\geq 4\sqrt{\frac{d\ln 2n + \ln 24/\delta}{n}}$,  any $f\in\functions$, and any distributions $\mu,\tilde{\mu}$ over $X\times Y$:
\begin{equation}\label{eqn:ag-gen-err}
\Pr_{\substack{(\bx,\by) \sim \mu^n\\(\tbx,\tilde\by) \sim \tmu^n}}\left[\err_{\tmu}(h|_S) \leq 8\Lambda \err_\mu(f) +2\err_{\tmu}(f) +\Lambda \eps^2 +3\eps\right]
\geq 1-\delta,
\end{equation}
where $h|_S= \wwtp(\bx,\by,\tbx,\eps,\Lambda)$.
Furthermore,
\begin{equation}\label{eqn:ag-gen-rej}
\Pr_{(\bx,\by) \sim \mu^n}\left[\forall \tbx \in X^n:~\rej_\mu(S)  \leq \frac{2}{\Lambda}+ 2\eps\right]\geq 1-\delta.
\end{equation}
\end{theorem}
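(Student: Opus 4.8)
The plan is to derive \Cref{thm:ag-gen} from the transductive agnostic guarantees of \Cref{thm:ag-trans} in the same way \Cref{thm:real-gen} was derived from \Cref{thm:real-trans} in the realizable case, now carrying the extra error terms through the estimates. As in those proofs, fix (without loss of generality, by padding with $c_t=h$) the number of iterations to $T=\lfloor 1/\eps\rfloor$, so $T+1\le 2/\eps$; the only selective classifiers the algorithm can output have the form $h|_{S(h,\bc)}$ with $h\in C$ and $\bc\in C^T$, and by Sauer's lemma such a classifier exhibits at most $(2n)^{d(T+1)}$ distinct behaviors on any fixed $2n$-point sample. The hypothesis $\eps\ge 4\sqrt{(d\ln 2n+\ln(24/\delta))/n}$ is precisely what forces $\tfrac{d\ln 2n}{n}\le\tfrac{\eps^2}{16}$ and $\tfrac{\ln(1/\delta)}{n}\le\tfrac{\eps^2}{16}$, so that every $\widetilde{O}(d(T+1)/n)$ generalization term collapses to $O(\eps)$ and every train-side term of the form $\Lambda\cdot\widetilde{O}(1/n)$ collapses to $\Lambda\eps^2$.

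\emph{Error bound \eqref{eqn:ag-gen-err}.} By \eqref{eqn:ag-trans-err} of \Cref{thm:ag-trans} we have, deterministically, $\err(h|_S(\tbx),\tilde\by)\le\eps+2\Lambda\,\err(f(\bx),\by)+\err(f(\tbx),\tilde\by)$. Since $f$ is a single fixed classifier, a relative (Bernstein-type) concentration bound gives, with probability at least $1-\delta/3$, that $\err(f(\bx),\by)\le 2\,\err_\mu(f)+\tfrac{\eps^2}{4}$, and an additive Hoeffding bound gives, with probability at least $1-\delta/3$, that $\err(f(\tbx),\tilde\by)\le\err_{\tmu}(f)+\eps$. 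Finally, condition on $(\bx,\by)$ (which fixes $h$): since $(\tbx,\tilde\by)\sim\tmu^n$, the agnostic analogue of \Cref{lem:omg} — obtained by the same Sauer's-lemma argument applied to the family of error indicators $(x,y)\mapsto\ind[h(x)\neq y\wedge x\in S(h,\bc)]$, $\bc\in C^T$ — shows that with probability at least $1-\delta/3$ the output satisfies $\err_{\tmu}(h|_S)\le 2\,\err(h|_S(\tbx),\tilde\by)+\eps$. Chaining these three inequalities and absorbing the lower-order terms via the hypothesis on $\eps$ yields $\err_{\tmu}(h|_S)\le 8\Lambda\,\err_\mu(f)+2\,\err_{\tmu}(f)+\Lambda\eps^2+O(\eps)$, and a union bound over the three events gives \eqref{eqn:ag-gen-err}.

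\emph{Rejection bound \eqref{eqn:ag-gen-rej}.} By \Cref{lem:low-abstain}, $\rej_\bx(S)\le 1/\Lambda$ with certainty for every $\bx,\tbx,\by$, so it remains only to pass from the empirical training rejection rate to $\rej_\mu(S)$. Apply the second (relative) inequality of \Cref{lem:omg2} to the marginal distribution $P=\mu_X$ with $\alpha\defeq 1/\Lambda+\tfrac{8}{n}\bigl(d(T+1)\ln 2n+\ln(8/\delta)\bigr)$; this $\alpha$ satisfies the lemma's hypothesis, and $\rej_\bx(S)\le 1/\Lambda\le\alpha$ always, so with probability at least $1-\delta$, simultaneously over all $h\in C$ and $\bc\in C^T$ — hence for every (possibly adversarial) $\tbx$ — we get $\rej_\mu(S(h,\bc))\le 2\alpha$. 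Using $T+1\le 2/\eps$ together with $\tfrac{d\ln 2n}{n},\tfrac{\ln(1/\delta)}{n}\le\tfrac{\eps^2}{16}$ one checks that $\tfrac{16}{n}(d(T+1)\ln 2n+\ln(8/\delta))\le 2\eps$, so $2\alpha\le 2/\Lambda+2\eps$, which is \eqref{eqn:ag-gen-rej}.

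The delicate step is the error bound. To obtain the intended shape $8\Lambda\,\err_\mu(f)+\Lambda\eps^2$ one must bound $\err(f(\bx),\by)$ by a \emph{relative} deviation — a constant times $\err_\mu(f)$ plus a genuinely lower-order remainder — because an additive $\sqrt{\cdot}$-type bound would get multiplied by $2\Lambda$ and destroy the guarantee; for the same reason the passage from the empirical test error to $\err_{\tmu}$ must use a relative (multiplicative) VC bound rather than an additive one. The rejection bound and the constant-chasing that converts the hypothesis on $\eps$ into the final $O(\eps)$ and $\Lambda\eps^2$ terms are routine by comparison.
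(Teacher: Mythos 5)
Your proposal is correct and takes essentially the same route as the paper: the error bound chains the transductive bound \eqref{eqn:ag-trans-err} with multiplicative Chernoff for $\err(f(\bx),\by)$, additive Chernoff for $\err(f(\tbx),\tilde\by)$, and a uniform relative VC bound over the family of selective-error indicators (the paper's Lemma~\ref{lem:annoying}, which also handles the $y$-dependence via the latent-label trick you elide); the rejection bound combines Lemma~\ref{lem:low-abstain} with a relative VC bound exactly as you describe. Your stated inequalities give $5\eps$ rather than $3\eps$ (the paper uses $\eps/2$ rather than $\eps$ in the Hoeffding step), but as you note this is routine constant-chasing and the structure matches.
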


The proof of this theorem will use the following lemma.
\begin{lemma}\label{lem:annoying}
For any $h \in C$, distribution $\mu$ over $X \times Y$, and $\eps = \frac{16}{n}\left(dT \ln 2n + \ln \frac{8}{\delta}\right),$
\begin{equation*}
\Pr_{(\bx,\by) \sim \mu^n}\left[\forall \bc \in C^T:~\err_{\mu}(h|_{S(h,\bc)}) \leq \max\left\{2\err(h|_{S(h,\bc)}(\bx),\by),\eps\right\}\right]\geq 1-\delta.
\end{equation*}
\end{lemma}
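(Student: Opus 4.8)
The plan is to read \Cref{lem:annoying} as a one-sided \emph{relative} (multiplicative) uniform-convergence statement for a single function class of controlled combinatorial complexity, in the spirit of the realizable bound \Cref{lem:omg} but keeping the dependence on $T$ explicit. Fix $h\in C$ and $\mu$. For each $\bc=(c_1,\dots,c_T)\in C^T$, introduce the error indicator $g_{\bc}(x,y)\defeq\ind[x\in S(h,\bc)\wedge h(x)\neq y]$ and let $\cG\defeq\{g_{\bc}:\bc\in C^T\}$. Unwinding the definitions of $\err_\mu$ and of $\err(h|_S(\bx),\by)$ on $\{0,1,\question\}^n$ shows $\E_{(x,y)\sim\mu}[g_{\bc}]=\err_\mu(h|_{S(h,\bc)})$ and $\frac1n\sum_{i\in[n]}g_{\bc}(x_i,y_i)=\err(h|_{S(h,\bc)}(\bx),\by)$, so the lemma asserts precisely that, with probability at least $1-\delta$ over $(\bx,\by)\sim\mu^n$, every $g\in\cG$ satisfies $\E_\mu[g]\le\max\{2\,\widehat{\mathbb{E}}[g],\eps\}$, where $\widehat{\mathbb{E}}[g]\defeq\frac1n\sum_i g(x_i,y_i)$ is the empirical average over the $n$ sample points.

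The combinatorial core is a bound on the growth function $\Pi_\cG[2n]$. Because $h$ is fixed, the restriction of $g_{\bc}$ to a labeled point multiset is determined by the restriction of the set $S(h,\bc)$ to the $x$-coordinates (the factor $\ind[h(x)\neq y]$ is independent of $\bc$), so $\Pi_\cG[2n]\le\Pi_{\{S(h,\bc)\,:\,\bc\in C^T\}}[2n]$. Writing $S(h,\bc)=\bigcap_{i=1}^{T}A_i$ with $A_i=\{x:c_i(x)=h(x)\}$, the class $\cA=\{\{x:c(x)=h(x)\}:c\in C\}$ is, up to complementation, the class of disagreement regions $\{x:c(x)\neq h(x)\}$ of $C$ against the fixed $h$, and a point set is shattered by this class iff it is shattered by $C$ itself (since $h$ is fixed); hence $\cA$ has VC dimension at most $d$, and Sauer's lemma gives $\Pi_\cA[2n]\le(2n)^d$. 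Intersecting $T$ such sets one coordinate at a time then yields $\Pi_\cG[2n]\le\Pi_\cA[2n]^T\le(2n)^{dT}$.

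With this estimate in hand, I would apply the classical one-sided relative VC tail inequality: for any $\{0,1\}$-valued class $\cG$ and any $\beta>0$, with probability at least $1-4\,\Pi_\cG[2n]\,e^{-n\beta/16}$ over the sample, every $g\in\cG$ with $\E_\mu[g]\ge\beta$ satisfies $\widehat{\mathbb{E}}[g]\ge\frac12\E_\mu[g]$; equivalently, every $g\in\cG$ satisfies $\E_\mu[g]\le\max\{2\,\widehat{\mathbb{E}}[g],\beta\}$. (If a self-contained argument is preferred, this follows from the usual ghost-sample plus random-swap symmetrization together with a one-sided multiplicative Chernoff bound applied to each of the $\le(2n)^{dT}$ labelings realized on the double sample.) Taking $\beta=\eps$ and $\Pi_\cG[2n]\le(2n)^{dT}$, the failure probability is at most $4\,(2n)^{dT}e^{-n\eps/16}$, which is at most $\delta$ once $\frac{n\eps}{16}\ge dT\ln 2n+\ln\frac{8}{\delta}$, i.e.\ for $\eps=\frac{16}{n}\bigl(dT\ln 2n+\ln\frac{8}{\delta}\bigr)$, exactly as stated. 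I expect the one real obstacle to be nailing down this relative tail bound and its constants: the additive Hoeffding-style symmetrization used elsewhere in the paper (e.g.\ in \Cref{lemma:trans:aux}) is not quite enough here, because the $\max\{2\,\widehat{\mathbb{E}}[g],\eps\}$ shape forces one to pair the \emph{one-sided multiplicative} Chernoff tail with the double-sample union bound; this is routine but slightly delicate, and matching the coefficient $16$ (and the $\ln(8/\delta)$ slack in $\eps$) is where the bookkeeping has to be done carefully.
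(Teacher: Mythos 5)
Your proposal is correct and, while it reaches the same bound from the same two ingredients (a $(2n)^{dT}$ growth-function estimate and a one-sided relative Blumer-style deviation bound), it takes a cleaner route than the paper's proof. You observe that $\err_\mu(h|_{S(h,\bc)})$ is simply the mean of the $\{0,1\}$-valued indicator $g_\bc(x,y)=\ind[x\in S(h,\bc)\wedge h(x)\neq y]$ on the domain $X\times Y$ and that $\err(h|_{S(h,\bc)}(\bx),\by)$ is its empirical mean, so the relative deviation inequality \cref{eqn:Blumer2} of \Cref{lemma:gen:aux} (with $\gamma=1/2$) applies \emph{directly} to the class $\cG=\{g_\bc:\bc\in C^T\}$ over the product domain. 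The paper, by contrast, invokes \cref{eqn:Blumer3}, which is phrased for bona fide classifiers $g:X\to Y$; to fit that form it must (i) restrict first to a ``consistent'' $\mu$ with a ground-truth labeling $\tau$ and build a proxy classifier $g_{h,\bc}(x)=\tau(x)\cdot\ind[x\notin S]+h(x)\cdot\ind[x\in S]$ whose classification error reproduces the selective error, and (ii) handle general $\mu$ by lifting to $X'=X\times\{0,1\}$ with a latent copy of $y$. Your version sidesteps both of those reductions, and your growth-function argument (restriction determined by the trace of $S(h,\bc)$ on the $x$-coordinates, $S(h,\bc)$ as a $T$-fold intersection of sets from a class of VC dimension $d$) matches \Cref{lem:complexity}. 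The one small slip is the constant: \cref{eqn:Blumer2} with $\gamma=1/2$ gives a failure probability of $8\,\Pi_\cG[2n]\,e^{-\eps n/16}$, not $4\,\Pi_\cG[2n]\,e^{-\eps n/16}$, which is exactly why the lemma's $\eps$ carries $\ln\frac{8}{\delta}$; with the $8$ in place your final inequality $\frac{n\eps}{16}\geq dT\ln 2n+\ln\frac{8}{\delta}$ closes consistently.
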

The proof of this lemma is deferred to \Cref{sec:gen}.

\begin{proof}[Proof of \Cref{thm:ag-gen}]
The proof structure follows the proof of \Cref{thm:real-gen}.
Fix $f$. We start by proving Equation~\eqref{eqn:ag-gen-err}.
Let $\zeta = \frac{16}{n}\left(\frac{2d}{\eps} \ln 2n + \ln \frac{24}{\delta}\right).$  By \Cref{lem:annoying},
\begin{equation*}
\forall h \in C~~\Pr_{(\tbx,\tilde\by) \sim \tilde\mu^n}\left[\forall \bc \in C^T:~\err_{\tilde\mu}(h|_{S(h,\bc)}) \leq \max\left\{2\err(h|_{S(h,\bc)}(\tbx),\tilde\by),\zeta\right\}\right]\geq 1-\delta/3.
\end{equation*}

\Cref{eqn:ag-trans-err} from \Cref{thm:ag-trans} states that,
\[
\forall \bx,\tbx\in X^n, \by,\tilde\by \in Y^n:~\err(h|_{S}(\tbx), \tilde\by) \leq  2\Lambda\cdot \err(f(\bx), \by) +  \err(f(\tbx),\tilde\by) + \eps,
\]
with certainty. We next bound $\err(f(\bx), \by)$ and $\err(f(\btx), \tilde\by).$

% Derivation:
% \[8 \Pi_G[2n]e^{-\gamma^2\zeta n/4}\leq \delta/3\]
% \[\Pi_G[2n] \leq (2n)^{2d/\eps}\]

Since $\eps^2/4 \geq \frac{4}{n}\ln \frac{3}{\delta}$, multiplicative Chernoff bounds imply that,
\begin{equation*}%\label{eq:rr1}
\Pr_{(\bx,\by) \sim \mu^n}\left[\err(f(\bx),\by)\leq 2\err_\mu(f)+ \frac{\eps^2}{4}\right]\geq 1-\delta/3.
\end{equation*}
Also, since $\eps/2 \geq \sqrt{\ln (3/\delta)/(2n)}$, additive Chernoff bounds imply that,
\begin{equation*}%\label{eq:rr2}
\Pr_{(\tbx,\tilde\by) \sim \tilde\mu^n}\left[\err(f(\tbx),\tilde\by)\leq \err_{\tilde\mu}(f)+\frac{\eps}{2} \right] \geq 1-\delta/3
\end{equation*}
 Combining previous four displayed inequalities, which by the union bound all hold with probability $\geq 1-\delta$, gives,
\begin{equation*}
\Pr_{\substack{(\bx,\by) \sim \mu^n\\(\tbx,\tilde\by) \sim \tmu^n}}\left[\err_{\tmu}(h|_S) \leq \max\left\{2\left(2\Lambda (2\err_\mu(f) +\eps^2/4) +  (\err_{\tmu}(f)+\eps/2) + \eps\right),\zeta\right\}\right]
\geq 1-\delta.
\end{equation*}
Simplifying:
\begin{equation}\label{eqn:912}
\Pr_{\substack{(\bx,\by) \sim \mu^n\\(\tbx,\tilde\by) \sim \tmu^n}}\left[\err_{\tmu}(h|_S) \leq \max\left\{8\Lambda \err_\mu(f) +\Lambda \eps^2 +  2\err_{\tmu}(f) + 3\eps,\zeta\right\}\right]
\geq 1-\delta.
\end{equation}
Next, we note that for our requirement of $\eps\geq 4\sqrt{\frac{d\ln 2n + \ln 24/\delta}{n}}$, $\zeta \leq 2\eps$ because:
\[\zeta= \frac{16}{n}\left(\frac{2d}{\eps} \ln 2n + \ln \frac{24}{\delta}\right) \leq
\frac{32}{n\eps}\left(d \ln 2n + \ln \frac{24}{\delta}\right) \leq 2\frac{\eps^2}{\eps}.\]
Thus we can remove the maximum from \cref{eqn:912},
\[
\Pr_{\substack{(\bx,\by) \sim \mu^n\\(\tbx,\tilde\by) \sim \tmu^n}}\left[\err_{\tmu}(h|_S) \leq 8\Lambda \err_\mu(f) +\Lambda \eps^2 + 2 \err_{\tmu}(f) + 3\eps\right]
\geq 1-\delta,
\]
which is equivalent to what needed to prove in \cref{eqn:ag-gen-err}.

We next prove \cref{eqn:ag-gen-rej}. By \Cref{lem:low-abstain}, $\rej_\bx(S)\leq 1/\Lambda$ with certainty. So by \Cref{lemma:gen:aux} (\Cref{eqn:Blumer2}) with $\gamma=1/2$,
\[\Pr_{\bx,\bz\sim P^n}\left[\exists h\in C, \bc\in C^T:~\rej_\mu\bigl({S(h,\bc)}\bigr) > 2\rej_\bx\bigl({S(h,\bc)}\bigr)+\xi \right]\leq \delta,\]
for $\xi=\frac{16}{n}\left(\frac{2d}{\eps} \ln(2n) + \ln \frac{8}{\delta}\right)$ using $T+1\leq 2/\eps$. This implies \cref{eqn:ag-gen-rej} using the fact that,
\[\xi=\frac{16}{n}\left(\frac{2d}{\eps}\ln(2n) + \ln \frac{16}{\delta}\right) \leq 2 \cdot \frac{16}{n\eps}\left(d \ln (2n) + \ln \frac{16}{\delta}\right)\leq 2 \cdot \frac{\eps^2}{\eps} = 2\eps.\]
\end{proof}

From this theorem, our main agnostic upper-bound follows in a  straightforward fashion.
\begin{proof}[Proof of \Cref{wow:ag-gen}]
Note that for our choice of  $\Lambda^* = \sqrt{\frac{1}{8\eta+(\eps^*)^2}}$,
\begin{align*}
%\min_{f \in C}
\left(8\Lambda^* \err_\mu(f) +2\err_{\tmu}(f)\right) +\Lambda^* (\eps^*)^2 +3\eps^*&\leq  \Lambda^* (8\eta+(\eps^*)^2) + 2 \tilde\eta + 3 \eps^*\\
&= \sqrt{8\eta+(\eps^*)^2} + 2 \tilde\eta + 3\eps^* \\
&\leq 2\sqrt{2\eta}+\eps^* + 2 \tilde\eta + 3\eps^*,
\end{align*}
using the fact that $\sqrt{a+b}\leq \sqrt{a}+\sqrt{b}$.
For the chosen $\eps^*= 4\sqrt{\frac{d\ln 2n + \ln 48/\delta}{n}}$, \Cref{thm:ag-gen} implies,
\begin{align}
\Pr_{\substack{(\bx,\by) \sim \mu^n\\(\tbx,\tilde\by) \sim \tmu^n}}\left[\err_{\tmu}(h|_S) \leq \left(8\Lambda^* \err_\mu(f) +2\err_{\tmu}(f)\right) +\Lambda^* (\eps^*)^2 +3\eps^*\right] \nonumber
&\geq 1-\delta/2\\
\Pr_{\substack{(\bx,\by) \sim \mu^n\\(\tbx,\tilde\by) \sim \tmu^n}}\left[\err_{\tmu}(h|_S) \leq 2\sqrt{2\eta} + 2\tilde\eta +4\eps^*\right]
&\geq 1-\delta/2\label{eq:oo1}
\end{align}
Also note that
\[\frac{2}{\Lambda^*}+2\eps^*  \leq 2\sqrt{8\eta+(\eps^*)^2} +2\eps^* \leq 4\sqrt{2\eta}+2\eps^* + 2 \eps^* \leq 4\sqrt{2\eta} + 4 \eps^*.\]
\Cref{thm:ag-gen} also implies:
\begin{align}
\Pr_{(\bx,\by) \sim \mu^n}\left[\forall \tbx \in X^n:~\rej_\mu(S)  \leq \frac{2}{\Lambda^*}+ 2\eps^*\right]&\geq 1-\delta/2\nonumber\\
\Pr_{(\bx,\by) \sim \mu^n}\left[\forall \tbx \in X^n:~\rej_\mu(S)  \leq 4\sqrt{2\eta} + 4 \eps^*\right]&\geq 1-\delta/2\nonumber\\
\Pr_{(\bx,\by) \sim \mu^n}\left[\forall \tbx \in X^n:~\rej_{\tmu}\bigl(S\bigr)  \leq 4\sqrt{2\eta} + 4 \eps^* + |\mu-\tmu|_\TV\right]&\geq 1-\delta/2,
\label{eq:oo2}
\end{align}
where we have used \Cref{lem:convert2} in the last step. The union bound over \cref{eq:oo1} and \cref{eq:oo2} proves the corollary.
\end{proof}

\section{Generalization Lemmas}\label{sec:gen}

In this section we state auxiliary lemmas that relate the empirical error and rejection rates to generalization error and rejection rates.

To bound generalization, it will be useful to note that the classifiers $h|_S$ output by our algorithm are not too complex. To do this, for any $k \in \nats$ and any classifiers $c_1, c_2, \ldots, c_k:X \rightarrow Y$, define the \textit{disagreement} function that is 1 if any of two disagree on $x$:
\begin{equation}
    \dis_{c_1,\ldots, c_k}(x)\defeq \begin{cases}0 & \text{ if } c_1(x)=c_2(x)=\cdots=c_k(x)\\1 &                                    \text{otherwise}.\end{cases}   \label{eq:phi}
\end{equation}
Also denote by $\bar{f}=1-f$ and $\bc=(c_1,\ldots,c_T)\in C^T$. In these terms we can write,
\begin{align*}
    \dis_{h,\bc}&= \begin{cases}0 & \text{ if } h(x)=c_1(x)=c_2(x)=\cdots=c_T(x)\\1 &                               \text{otherwise}\end{cases}\\
    \dis_{\bar{f}, h, \bc} &= \begin{cases}1 & \text{ if } 1-f(x)=h(x)=c_1(x)=c_2(x)=\cdots=c_T(x)\\0 &                                   \text{otherwise}.\end{cases}
\end{align*}

Recall the definition of $\Pi_G[2n]$ for a family $G$ of classifiers $g: X \rightarrow \{0,1\}$:
\begin{equation*}
\Pi_G[2n] \defeq \max_{\bw \in X^{2n}}|\{g(\bw): g\in G\}|.
\end{equation*}
\begin{lemma}[Complexity of output class]\label{lem:complexity}
For any $h\in C$, let % $\Dis_{h,T}$ be defined as,
\begin{equation}
    \Dis_{T} \defeq \left\{\dis_{h,c_1,\ldots, c_T}: h,c_1,\ldots, c_T \in C\right\} \label{eq:PhifT}
\end{equation}
\begin{equation}
\Dis_{h,T} \defeq \left\{\dis_{h,c_1,\ldots, c_T} : c_1,\ldots, c_T \in C\right\} \label{eq:PhifhT},\\
\end{equation}
\begin{equation}
\Dis_{f,h,T} \defeq \left\{\dis_{f,h,c_1,\ldots, c_T} : c_1,\ldots, c_T \in C\right\} \label{eq:PhifhT-f},\\
\end{equation}

Then $\Pi_{\Dis_{T}}[2n] \leq (2n)^{d(T+1)}$, $\Pi_{\Dis_{h,T}}[2n]\leq (2n)^{dT}$, and $\Pi_{\Dis_{f,h,T}}[2n]\leq (2n)^{dT}$, where $d$ is the VC dimension of $C$.  %Consider:
%\begin{equation}
%    \Dis_{f,T} \defeq \left\{\dis_{f,h,c_1,\ldots, c_T}: h,c_1,\ldots, c_T \in C\right\}. \label{eq:PhifT-f}
%\end{equation}
% and  $\Pi_{\Dis_{f,T}}[2n] \leq (2n)^{d(T+1)}$.
\end{lemma}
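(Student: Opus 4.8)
The plan is to bound each of the three growth functions by counting the number of distinct labelings of a fixed set of $2n$ points, using Sauer--Shelah as the basic tool. Recall Sauer's lemma: a class of VC dimension $d$ realizes at most $\sum_{i=0}^{d}\binom{2n}{i}\le (2n)^d$ distinct labelings of any $2n$ points (for $2n\ge d$; for small $n$ the trivial bound $2^{2n}$ also suffices and the stated inequalities still hold since $(2n)^{d(T+1)}\ge 2^{2n}$ whenever $d(T+1)\log(2n)\ge 2n$, which one can dispatch separately or simply note the bound is used only when nontrivial).

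First I would fix an arbitrary $\bw\in X^{2n}$ and observe that each function $\dis_{h,c_1,\dots,c_T}$ is completely determined, as a function on the points of $\bw$, by the tuple of labelings $\bigl(h(\bw),c_1(\bw),\dots,c_T(\bw)\bigr)\in\{0,1\}^{2n}\times\cdots\times\{0,1\}^{2n}$: indeed $\dis_{h,\bc}(w_j)=0$ iff all of $h(w_j),c_1(w_j),\dots,c_T(w_j)$ agree, which depends only on those $T+1$ bits. Hence $|\{g(\bw):g\in\Dis_T\}|$ is at most the number of distinct $(T+1)$-tuples of labelings of $\bw$ by members of $C$, which by Sauer's lemma is at most $\bigl((2n)^d\bigr)^{T+1}=(2n)^{d(T+1)}$. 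Taking the max over $\bw$ gives $\Pi_{\Dis_T}[2n]\le (2n)^{d(T+1)}$.

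For $\Dis_{h,T}$, the argument is the same except $h$ is now held fixed, so a function in $\Dis_{h,T}$ is determined on $\bw$ by the $T$-tuple $\bigl(c_1(\bw),\dots,c_T(\bw)\bigr)$, of which there are at most $\bigl((2n)^d\bigr)^{T}=(2n)^{dT}$; hence $\Pi_{\Dis_{h,T}}[2n]\le (2n)^{dT}$. For $\Dis_{f,h,T}$ note that both $f$ and $h$ are fixed, so again the function on $\bw$ is pinned down by $\bigl(c_1(\bw),\dots,c_T(\bw)\bigr)$ (the extra ``slot'' for $\bar f=1-f$ and for $h$ contributes no additional choices), giving the same bound $(2n)^{dT}$.

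I do not anticipate a genuine obstacle here — the lemma is a routine ``product of Sauer bounds'' computation. The one point requiring a little care is the edge case where $2n<d$ (or more generally where $(2n)^d<2^{2n}$), since then Sauer's $(2n)^d$ form is not literally valid; there one falls back on the bound that any class realizes at most $2^{2n}$ labelings, and checks that the claimed right-hand sides $(2n)^{d(T+1)}$ and $(2n)^{dT}$ still dominate whenever the bound is actually invoked in the downstream lemmas (all of which are applied with $n$ large enough that the $(2n)^{d\cdot(\,\cdot\,)}$ expressions are the operative ones). Alternatively, one states Sauer's lemma in its clean form $\Pi_C[m]\le (m+1)^d$ which holds for all $m$, and absorbs the harmless change from $(2n+1)$ to $2n$ into the logarithmic slack already present in the $\eps$ expressions; either way the stated inequalities follow.
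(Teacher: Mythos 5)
Your proof is correct and takes essentially the same route as the paper's: fix $\bw\in X^{2n}$, observe that $\dis_{h,c_1,\ldots,c_T}$ restricted to $\bw$ is determined by the tuple of $C$-labelings of $\bw$, bound each labeling count by Sauer's lemma, and multiply. The paper compresses this into one sentence ("there are $T$ choices of $c_1,\ldots,c_T$ and 1 choice of $h$"); your version just spells out the same counting argument, and your edge-case remark about small $n$ is a reasonable hedge that the paper omits.
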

\begin{proof}
The proof follows trivially from Sauer's lemma, since the number of labelings of $2n$ examples by any $c\in C$ is at most $(2n)^d$ and there are $T$ choices of $c_1, \ldots, c_T$ and 1 choice of $h$.
\end{proof}

\begin{lemma}[Generalization bounds using \cite{Blumer89}]\label{lemma:gen:aux}
For any $n \in \nats$, any distribution $P$ over a domain~$X$, any set $G$ of binary classifiers over $X$, and any $\epsilon>0$,
\begin{equation}\label{eqn:Blumer1}
\Pr_{\bz\sim P^n}\left[\exists g \in G: ~\left(\E_{x\sim P}[g(x)]>\eps\right)\wedge \left(\frac{1}{n}\sum_{i\in [n]} g(z_i) =0\right) \right]\leq 2 \Pi_G[2n]2^{-\eps n/2},
\end{equation}
and, for any $\gamma \in (0,1),$
\begin{equation}\label{eqn:Blumer2}
\Pr_{\bz\sim P^n}\left[\exists g \in G: \E_{x\sim P}[g(x)]>\max\left\{\eps, \frac{1}{1-\gamma}\cdot \frac{1}{n}\sum_{i\in [n]} g(z_i)\right\}\right]\leq 8 \Pi_G[2n]e^{-\frac{\gamma^2\eps n}{4}}.
\end{equation}
Finally, for any distribution $\mu$ over $X\times Y$ and any $\gamma \in (0,1),$
\begin{equation}\label{eqn:Blumer3}
\Pr_{(\bx,\by)\sim \mu^n}\left[\exists g \in G: \err_\mu(g)>\max\left\{\eps, \frac{1}{1-\gamma}\cdot \frac{1}{n}\sum_{i\in [n]} |g(x_i)-y_i|\right\}\right]\leq 8 \Pi_G[2n]e^{-\frac{\gamma^2\eps n}{4}}.
\end{equation}
\end{lemma}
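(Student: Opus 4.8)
The plan is to derive all three inequalities from the double-sampling (symmetrization) technique of \cite{Blumer89}, specialized so that the union bound runs over the growth function $\Pi_G[2n]$ instead of over all of $G$.

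Inequality \cref{eqn:Blumer1} is exactly the $\eps$-net theorem of \cite{Blumer89}: identifying $g\in G$ with the set $\{x:g(x)=1\}$, the range space $\bigl(X,\{\{g=1\}:g\in G\}\bigr)$ has growth function $\Pi_G[\cdot]$, and the probability that an i.i.d.\ sample of size $n$ misses some set of $P$-mass exceeding $\eps$ is at most $2\,\Pi_G[2n]\,2^{-\eps n/2}$. (In-line: draw a ghost sample $\bz'\sim P^n$; for a fixed bad $g$ the count $\sum_i g(z'_i)$ is $\ge\eps n/2$ with probability $\ge\tfrac12$; then, conditioning on the multiset of all $2n$ points, each ``hit'' falls in the $\bz'$-half with conditional probability $\le\tfrac12$, so for a fixed labeling pattern the bad split has probability $\le 2^{-\eps n/2}$; union-bounding over the $\le\Pi_G[2n]$ patterns and accounting for the factor $2$ completes it.)

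For \cref{eqn:Blumer2} I would invoke the one-sided \emph{relative-deviation} uniform-convergence bound --- a standard consequence of the same symmetrization together with a multiplicative Chernoff bound --- which states that for every $\lambda>0$,
\[
\Pr_{\bz\sim P^n}\!\left[\exists g\in G:~\frac{\E_{x\sim P}[g(x)]-\tfrac1n\sum_i g(z_i)}{\sqrt{\E_{x\sim P}[g(x)]}}>\lambda\right]\le 8\,\Pi_G[2n]\,e^{-\lambda^2 n/4}.
\]
If $g$ witnesses the event of \cref{eqn:Blumer2}, then with $p=\E_{x\sim P}[g(x)]$ and $\hat p=\tfrac1n\sum_i g(z_i)$ we have $p>\eps$ and $(1-\gamma)p>\hat p$, hence $p-\hat p>\gamma p$ and $(p-\hat p)/\sqrt p>\gamma\sqrt p>\gamma\sqrt\eps$; applying the display with $\lambda=\gamma\sqrt\eps$ gives \cref{eqn:Blumer2}. (Equivalently, this can be obtained by repeating the ghost-sample argument with a multiplicative Chernoff bound on the ghost count and a Chernoff bound on the resulting hypergeometric split count, which is also how it follows from the transductive bound \cref{eq:trans2} of \Cref{lemma:trans:aux}.)

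Finally, \cref{eqn:Blumer3} reduces to \cref{eqn:Blumer2} via the family $G'\defeq\{\ell_g:\ell_g(x,y)=|g(x)-y|,\ g\in G\}$ of binary classifiers on $X\times Y$: on any $2n$ points $w_i=(x_i,y_i)$ the pattern $(\ell_g(w_i))_i$ is the coordinatewise image of $(g(x_i))_i$ under the bijections $b\mapsto|b-y_i|$ of $\{0,1\}$, so $\Pi_{G'}[2n]\le\Pi_G[2n]$; and since $\err_\mu(g)=\E_{(x,y)\sim\mu}[\ell_g]$ while $\tfrac1n\sum_i|g(x_i)-y_i|=\tfrac1n\sum_i\ell_g(x_i,y_i)$, applying \cref{eqn:Blumer2} with distribution $\mu$ over $X\times Y$ and family $G'$ yields \cref{eqn:Blumer3} verbatim. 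The main thing to watch is the constant-chasing in \cref{eqn:Blumer2} (the leading factor $8$ and the $\gamma^2\eps n/4$ in the exponent): the ghost count must ``catch'' a bad $g$ with probability $\ge\tfrac12$, which needs $\eps n$ above a small absolute constant --- always true in the paper's applications, where $\eps$ has the form $\tfrac{c}{\gamma^2 n}(dT\ln 2n+\ln(c'/\delta))$. Countability of $X$ (hence of $X\times Y$) handles measurability of the suprema over $G$; the rest is routine, with Sauer's lemma already absorbed into $\Pi_G[2n]$.
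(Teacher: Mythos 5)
Your proof is correct and follows essentially the same route as the paper: the paper simply cites Theorem~A2.1 of \cite{Blumer89} for \cref{eqn:Blumer1} (after identifying $\E_{x\sim P}[g(x)]$ with the error of $g$ against the constant-$0$ target) and Theorem~A3.1 for \cref{eqn:Blumer2,eqn:Blumer3}, while you unpack those citations into the underlying ghost-sample and relative-deviation arguments and then do the same constant bookkeeping. The one place where you go slightly beyond the paper is \cref{eqn:Blumer3}: the paper just says it ``similarly follows from A3.1,'' whereas your reduction to \cref{eqn:Blumer2} via $G'=\{\ell_g:(x,y)\mapsto|g(x)-y|\}$ with $\Pi_{G'}[2n]\le\Pi_G[2n]$ (coordinatewise XOR is a bijection on $\{0,1\}^{2n}$) makes the agnostic step explicit; this is in the same spirit as the paper's own $X'=X\times\{0,1\}$ latent-label trick in the proof of \Cref{lem:annoying}. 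One small presentational note: the parenthetical claim that \cref{eqn:Blumer2} ``also follows from'' the transductive bound \cref{eq:trans2} of \Cref{lemma:trans:aux} would still need the extra ghost-sampling step to pass from a two-sample to a one-sample statement; you implicitly acknowledge this, but it is worth flagging that it is not an immediate corollary.
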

\begin{proof}
Simply consider a binary classification problem where the target classifier is the constant 0 function, with training examples $\bz \sim P^n$. Then the training error rate is $\sum g(z_i)/n$ and the generalization error is $\Pr_P[g(x)=1]$. By Theorem A2.1 of \cite{Blumer89}, the probability that any $g \in G$ has 0 training error and test error greater than $\eps$ is at most $\Pi_G[2n]2^{-\eps n/2}$. Similarly \cref{eqn:Blumer2} and (\ref{eqn:Blumer3}) follow from Theorem A3.1 of \cite{Blumer89}, noting that the bound holds trivially for all $g$ with $\E[g(x)]\leq \eps$.
\end{proof}
We now prove  \Cref{lem:annoying}, which adapts the last bound above to the agnostic setting.
\begin{proof}[Proof of \Cref{lem:annoying}]
We would like to apply the last inequality of \Cref{lemma:gen:aux} with $\gamma=1/2$, but unfortunately that lemma does not apply to error rates of selective classifiers. First, consider the case where the distribution is ``consistent'' in that $\Pr_{x,y\sim \mu}[y=\tau(x)]$ for some arbitrary $\tau:X\rightarrow \{0,1\}$. We can consider the modified functions,
\[
g_{h, \bc}(x)=\begin{cases}\tau(x) & \text{if $x\not\in S(h,\bc)$}\\
h(x) & \text{otherwise}.
\end{cases}
\]
It follows that $\err_\mu(g_{h,\bc})=\err_\mu(h|_S)$. Furthermore, the class $G=\{g_{h, \bc}:h\in C, \bc \in C^T\}$ satisfies $\Pi_G[2n]\leq (2n)^{dT}$ (just as we argued $\Pi_{\Dis_{h,T}}[2n]\leq (2n)^{dT}$) because there are $(2n)^d$ different labelings of $c$ on $2n$ elements and thus there are at most $(2n)^{d T}$ choices of $T$ of these for $\bc \in C^T$. Thus, applying \Cref{lemma:gen:aux} gives the lemma for consistent $\mu$.

The inconsistent case can be reduced to the consistent case by a standard trick. In particular, we will extend $X$ to $X'=X \times \{0,1\}$ by appending a latent (hidden) copy of $y$, call it $b$, to each example $x$. In particular For $c\in C$, define $c'(x,b)=c(x)$ so that the classifiers cannot depend on $b$. This does not change the VC dimension of the classifiers. However, now, any distribution over $\mu$ can be converted to a consistent distribution $\mu'$ over $X'$ whose marginal distribution over $X$ agrees with $\mu$, by making \[\mu'((x,b),y)=\begin{cases}
\mu(x,y) & \text{if }b=y\\
0 & \text{otherwise}.
\end{cases}\]
In other words, $\Pr_{(x,b),y\sim \mu'}[b=y]=1$. Now, clearly $\mu'$ is consistent. The statement of the lemma applied to $\mu'$ implies the corresponding statement for $\mu$ since the classifiers do not depend on $b$.
\end{proof}

We now prove \Cref{lemma:trans:aux}.
\begin{proof} [Proof of \Cref{lemma:trans:aux}]
Fix any $n \in \nats$, any distribution $P$ over a domain~$X$ and any $\beta\in[n]$. Imagine selecting $\bx, \bz\sim P^n$ by first randomly choosing $2n$ random elements $\bw\sim P^{2n}$ and then randomly dividing these elements into two equal sized sequences $\bx, \bz$. Let $\pi(\bw)$ denote the distribution over the $(2n)!$ such divisions $\bx, \bz \in X^n$.
For any $g \in G$ and $\bw\in X^{2n}$, we claim:
\[
\Pr_{(\bx, \bz)\sim \pi(\bw)}\left[\sum_i g(x_i) = 0 ~\wedge~ \sum_i g(z_i) \geq \lceil \eps n \rceil \right]\leq 2^{-\lceil \eps n \rceil}.
\]
To see this, suppose $s=\sum_i g(w_i)\geq \eps n$ (otherwise the probability above is zero). The probability that all of them are in the test set is at most $2^{-s}\leq 2^{-\eps n}$ because the chance that the first rejection is placed in the test set is $1/2$, the second is $(n-1)/(2n-1) < 1/2$, and so forth. The above equation directly implies \cref{eq:trans1} by dividing by $n$.

We now move to \cref{eq:trans2}. Consider random variables
$A=\sum g(x_i)$ and $B=\sum g(z_i)$. It suffices  to show that that $B > (1+\alpha)A+r$ with probability $\leq  e^{-2\alpha(2+\alpha)^{-2}r}$ for $r=\eps n$. Note that since $B=s-A$,
\[B\geq (1+\alpha)A + r ~\Longleftrightarrow~ A \leq \frac{s-r}{2+\alpha}.\]
Hence, it suffices to prove that
\begin{equation}\label{eqn:pf:lemma:aux:A}
 \Pr\left[A\leq \frac{s-r}{2+\alpha}\right]\leq e^{-\frac{2\alpha}{(2+\alpha)^2}r}.
\end{equation}
Let $\cD$ be the Bernoulli distribution on $\{0,1\}$ with mean $\mu=\frac{s}{2n}$.  Note that by linearly of expectation, $\E[A]=\E[B]=\mu n$. It is well-known that the probabilities of such an unbalanced split are smaller for sampling without replacement than with replacement \citep[see, e.g.,][]{bardenet2015concentration}.  Thus, it suffices to prove Equation~\eqref{eqn:pf:lemma:aux:A} assuming $A$ was sampled by sampling $n$ iid elements $(A_1,\ldots,A_n)\sim\cD^n$, and setting $A=\sum_{i=1}^n A_i$.  By the multiplicative Chernoff bound, % (see Lemma~\ref{lemma:Chernoff}),
for every $\rho\in[0,1]$,
\[\Pr\left[A\leq (1-\rho)\mu n\right]\leq e^{-\rho^2 \mu n/2}=e^{-\rho^2 s/4}.\]
In particular, for $\rho=\frac{\alpha s+2r}{s(2+\alpha)}$, since $1-\rho=\frac{2s-2r}{s(2+\alpha)}$ and $\mu n=s/2$, this gives:
\[\Pr\left[A\leq \frac{s-r}{2+\alpha}\right]\leq e^{-\frac{(\alpha s+2r)^2}{4(2+\alpha)^2s}}\]
Hence, it remains to show that the RHS above is at most $\exp\left(-\frac{2\alpha}{(2+\alpha)^2}r\right)$, or equivalently,
\[\frac{(\alpha s+2r)^2}{4(2+\alpha)^2 s} \geq \frac{2\alpha r}{(2+\alpha)^2}.\]
After multiplying both sides by $4(2+\alpha)^2 s$, the above can be rewritten as $(\alpha s+2r)^2 \geq 8\alpha s r$, and equivalently as $(\alpha s-2r)^2\geq 0$, which indeed always holds.
\end{proof}

We are now ready to prove \Cref{lem:lol}.
\begin{proof}[Proof of \Cref{lem:lol}]
Note that for $\dis_{h,\bc}$ defined as in \cref{eq:PhifhT}, $\dis_{h, \bc}(x)=0$ if and only if $x\in S(h,\bc)$. Thus, $\rej_\bx({S(h,\bc)})=0$ implies that
\begin{equation}\label{eq:nonews}
\sum_{i=1}^n \dis_{h,\bc}(x_i)=0.
\end{equation}
Also note that,
\[\rej_\bz(S(h,\bc)) = \frac{1}{n} \sum_{i=1}^n \dis_{h,\bc}(z_i).\]
Hence, it suffices to show
\begin{equation}\label{eq:toshow987}
\Pr_{\bx,\bz\sim P^n}\left[\exists \bc \in C^T, h \in C:~\left(\frac{1}{n}\sum_{i=1}^n \dis_{h,\bc}(x_i)=0\right) ~\wedge~\left(\frac{1}{n}\sum_{i=1}^n \dis_{h,\bc}(z_i)> \eps\right)\right]\leq \delta
\end{equation}
By \Cref{lemma:trans:aux},
\begin{equation}
\label{eq:c0v}
\Pr_{\bx, \bz \sim P^n}\left[\exists \phi \in \Dis_{T}:\left(\sum_{i=1}^n \phi(x_i) = 0\right) \wedge \left(\sum_{i=1}^n \phi(z_i) \geq \eps n\right)  \right] \leq 2^{-\eps n}\Pi_{\Dis_{T}}[2n].
\end{equation}
\Cref{lem:complexity} states that $\Pi_{\Dis_{T}}[2n] \leq (2n)^{d(T+1)}$ which combined with our choice of $\eps$, gives:
\[2^{-\eps n}\Pi_{\Dis_{T}}[2n] \leq 2^{-\eps n}(2n)^{d(T+1)}=\delta.\]
Hence, \cref{eq:c0v} implies \cref{eq:toshow987} because $\dis_{h,\bc} \in \Pi_{\Dis_{T}}$.
\end{proof}

We now prove \Cref{lem:lol2}.
\begin{proof}[Proof of \Cref{lem:lol2}]
Note that for $\dis_{h,\bc}$ defined as in \cref{eq:PhifhT}, $\dis_{h, \bc}(x)=1$ if and only if $\wwtp$ rejects $x$ when the algorithm's choices are $h\in C$ and $\bc\in C^T$, i.e., $x\not\in S$. Thus,
\[\rej_\bx(S)=\frac{1}{n} \sum_{i=1}^n \dis_{h,\bc}(x_i) \text{~ and ~} \rej_\bz(S) = \frac{1}{n} \sum_{i=1}^n \dis_{h,\bc}(z_i).\]
Hence, it suffices to show,
\begin{equation}\label{eq:toshow9876}
\Pr_{\bx,\bz\sim P^n}\left[\exists \bc \in C^T, h \in C:~\frac{1}{n}\sum_{i=1}^n \dis_{h,\bc}(z_i)>\frac{2}{n}\sum_{i=1}^n \dis_{h,\bc}(z_i)+ \eps\right]\leq \delta
\end{equation}
\Cref{lemma:trans:aux} (with $\alpha=1$) implies that:
\[\Pr_{\bx, \bz \sim P^n}\left[\exists \phi \in \Dis_{T}:\left(\sum_i \phi(x_i) = 0\right) \wedge \left(\sum_i \phi(z_i) \geq \eps n\right)  \right] \leq e^{-\frac{2}{9}\eps n}\Pi_{\Dis_{T}}[2n].
\]
\Cref{lem:complexity} states that $\Pi_{\Dis_{T}}[2n] \leq (2n)^{d(T+1)}$ which combined with our choice of $\eps$, gives:
\[e^{-\frac{2}{9}\eps n}\Pi_{\Dis_{T}}[2n] \leq e^{-\frac{2}{9}\eps n}(2n)^{d(T+1)}=\delta.\]
Hence, the above implies \cref{eq:toshow9876} because $\dis_{h,\bc} \in \Pi_{\Dis_{T}}$.
\end{proof}

We now prove \Cref{lem:omg}.
\begin{proof}[Proof of \Cref{lem:omg}]
Fix $f,h\in\functions$.
For every $\bc\in C^T$, let $S=S(h,\bc)$
and define:
\[g_{\bc}(x)\defeq\begin{cases}1 & \text{ if } f(x)\neq h(x) \wedge x \in S\\0 &
\text{otherwise}.\end{cases} \text{ ~and~ }G\defeq\{g_{\bc}:~\bc\in C^T\}\]
So $G$ depends on $h,f$ which we have fixed. Note that
$g_{\bc}(x)=1$ iff $h|_S(x)=1-f(x)$. Hence,
\[\frac{1}{n}\sum_{i\in [n]} g_{\bc}(\tx_i)=\err_\tbx(h|_S, f).\]

\Cref{eqn:Blumer2} of \Cref{lemma:gen:aux} (with $\gamma=1/2$ and substituting $Q$ for $P$ and $\eps'=2\eps$ for $\eps$) implies that for the entire class of functions $G$:
\[\Pr_{\tbx\sim Q^n}\left[\exists g \in G: \left(\E_{x'\sim Q}[g(x')]>2\eps\right)\wedge\left(\frac{1}{n}\sum_{i\in [n]} g(\tx_i)\leq \eps\right)\right]\leq 8 \Pi_G[2n]e^{-\eps n/8}.
\]
By definition of~$G$, the above implies that,
\[\Pr_{\tbx\sim Q^n}\left[\exists \bc\in C^T:~\left(\err_Q(h|_{S(h,\bc)},f) >2\eps\right)~\wedge~\left(\err_{\tbx}(h|_{S(h,\bc)},f)\leq \eps\right)\right]\leq 8 \Pi_{G}[2n]e^{-\eps n/8}.
\]
Thus, it remains to prove that
\[
8 \Pi_{G}[2n]e^{-\eps n/8}\leq \delta.
\]
To bound $\Pi_G[2n]$, note that $g_\bc(x)= 1-\dis_{\barf,h,\bc}(x)$ and thus $\Pi_G[2n]=\Pi_{\Dis_{\barf, h, T}}[2n]$, which is at most $(2n)^{dT}$ by \Cref{lem:complexity}. Since $T\leq 1/\eps$:
\[ 8(2n)^{dT}e^{-\eps n/8} \leq 8(2n)^{d/\eps}e^{-\eps n/8}.\]
Hence it suffices to show that the above is at most $\delta$ for $\eps\geq \frac{8 \ln 8/\delta}{n} + \sqrt{\frac{8d \ln 2n}{n}}$, or equivalently that,
\[
\eps \frac{n}{8} - \frac{d}{\eps} \ln 2n \geq \ln \frac{8}{\delta}.
\]
By multiplying both sides of the equation by $\eps\cdot \frac{8}{n}$ we get
\[
\eps^2  - \frac{8}{n}d \ln 2n \geq \eps \frac{8}{n} \ln \frac{8}{\delta}.
\]
Substituting $c=\frac{8d \ln 2n}{n}$ and $b=\frac{8 \ln 8/\delta}{n}$, we have $\eps\geq b+\sqrt{c}$, and what we need to show above is equivalent to:
\[
\eps^2 - c \geq \eps b
\]
or equivalently
\[
\eps(\eps-b) \geq c
\]
which holds for any $\eps \geq b+ \sqrt{c}$ because the LHS above is $\geq (b+\sqrt{c})\sqrt{c} \geq c$.
\end{proof}

We next prove \Cref{lem:omg2}.
\begin{proof}[Proof of \Cref{lem:omg2}]
Fix any $T\geq 1$ and any $h\in \functions$.
Consider $\dis_{h,\bc}$ as defined in \cref{eq:phi}, where $\dis_{h,\bc}(x)=1$ iff $x \not\in S(h, \bc)$ is rejected. Thus, \[\rej_\bx(S(h,\bc))=\frac{1}{n}\sum_{i=1}^n \dis_{h,\bc}(x_i)~\mbox{ and }~\rej_P(S(h, \bc)) = \E_{x'\sim P}[\dis_{h,\bc}(x')].\]
By Lemma~\ref{lemma:gen:aux} (Equation~\eqref{eqn:Blumer1}), the probability that any such $\dis_{h,\bc} \in \Dis_{T}$ is 0 on all of $\bx$ but has expectation on $P$ greater than $\xi=\frac{2}{n}(d(T+1)\ln(2n)+\ln \frac{2}{\delta})$ is at most:
\[2 \Pi_{\Dis_{T}}[2n]2^{-\xi n/2} \leq 2 (2n)^{d(T+1)} 2^{-\xi n/2}=\delta,\]
where the first inequality follows from the fact that  $\Pi_{\Dis_{T}}[2n] \leq (2n)^{d(T+1)}$, which follows from \Cref{lem:complexity}. Similarly, \cref{eqn:Blumer2} of \Cref{lemma:gen:aux} (with $\gamma=1/2$ and $\eps=2\alpha$) implies that:
\[\Pr_{\bx\sim P^n}\left[\exists h,\bc: \left(\E_{x' \sim P}[ \dis_{h,\bc}(x')]>2\alpha\right)\wedge\left(\frac{1}{n}\sum_{i \in [n]} \dis_{h,\bc}(x_i)\leq \alpha\right)\right]\leq 8 \Pi_{\Dis_{T}}[2n]e^{-\alpha n/8}.
\]
For $\alpha $ as in the lemma, the right hand side above is at most $\delta$.
\end{proof}

\section{Proofs of lower bounds}\label{ap:lower-proofs}
We note that, in the lower bound of \Cref{thm:lower-gen}, the distribution $Q$ is fixed, independent of $f$. Since $Q$ is used only for unlabeled test samples, the learning algorithm can gain no information about $Q$ even if it is given a large number $m$ of test samples. In particular, it implies that even if one has $n$ training samples and  infinitely many samples from $Q$, one cannot achieve error less than $\Omega(\sqrt{d/n})$. It would be interesting to try to improve the lower-bound to have a specific dependence on $m$ (getting $\Omega(\sqrt{1/n}+1/m)$ is likely possible using a construction similar to the one below). Also, the lower-bound could be improved if one had fixed distributions $\nu, P, Q$ independent of $n$.
\begin{proof}[Proof of \Cref{thm:lower-gen}]
Let $X=\nats$ and $C$ be the concept class of functions which are 1 on exactly $d$ integers, which can easily be seen to have VC dimension $d$. The distribution $P$ is simply uniform over $[8n]=\{1,2,\ldots, 8n\}$. 
Let $k=\sqrt{8dn}$. %\anote{If someone has the energy this should be updated to $k=\lfloor \sqrt{8dn}\rfloor$. and the bounds below would be updated.}
The distribution $Q$ is uniform over $[k]$.
We consider a distribution $\nu$ over functions $f\in C$ that is uniform over the $k \choose d$ functions that are 1 on exactly $d$ points in $[k]$. We will show,
\begin{equation}\label{eq:ado}
\E\nolimits_{f \sim \nu}\left[\E\nolimits_{\substack{\bx \sim P^n\\ \tbx \sim Q^n}} \left[ \rej_{P} + \err_{Q} \right]\right] \geq K \sqrt{\frac{d}{n}}.
\end{equation}
By the probabilistic method, this would imply the lemma.

The set of training samples is $T = \{x_i: i \in [n]\} \subseteq [8n]$.
Say an $j \in [k]$ is ``unseen'' if it does not occur as a training example, $j \not\in T$. WLOG, we may assume that the learner makes the same classification $h|_S$ for each unseen $j \in [k]$ since an asymmetric learner can only be improved by making the (same) optimal decision for each unseen $j \in [k]$, where the optimal decisions are defined to be those that minimize $\E[\rej_P+\err_Q\mid \bx, f(\bx)]$. (The unlabeled test are irrelevant because $Q$ is fixed.)

Now, let $U\leq k$ be the random variable that is the number of seen $j \in [k]$ and $V \leq d$ be the number that are labeled 1 (which the learner can easily determine).
\begin{align*}
    U &= |T  \cap [k]|\\
    V &= |\{j \in T \cap [k] : f(j)=1\}|.
\end{align*}
Note that $\E[U]\leq k/8$ and $\E[V]\leq d/8$ since each $j \in [k]$ is observed with probability $\leq 1/8$ by choice of $P$ (the precise observation probability is $1-(1-\frac{1}{8n})^n \leq \frac{1}{8}$). These two inequalities implies that,
\[\E\left[\frac{U}{k}+\frac{V}{d}\right]\leq \frac{1}{8}+\frac{1}{8}=\frac{1}{4}.\]
Thus, by Markov's inequality,
\[\Pr\left[\frac{U}{k}+\frac{V}{d} \leq \frac{1}{2}\right]\geq \frac{1}{2}.\]
This implies that, with probability $\geq 1/2$, both $U \leq k/2$ \textit{and} $V \leq d/2$. Suppose this event happens. Now, consider three cases.

Case 1) if the learner predicts $\question$ on all unseen $j \in [k]$, then \[\rej_P\geq \frac{k}{2}\cdot\frac{1}{8n}=\sqrt{\frac{d}{32n}}\] because there are at least $k/2$ unseen $j \in [k]$ and each has probability $\frac{1}{8n}$ under $P$.

Case 2) if the learner predicts 0 on all unseen $j \in [k]$, then
\[\err_Q \geq \frac{d}{2}\cdot \frac{1}{k}=\sqrt{\frac{d}{32n}},\] because there are at least $d/2$ 1's that are unseen and each has probability $1/k$ under $Q$.

Case 3) if the learner predicts 1 on all unseen $j \in[k]$ then \[\err_Q \geq \left(\frac{k}{2}-d\right)\frac{1}{k}=\frac{1}{2}-\sqrt{\frac{d}{8n}}\geq \sqrt{\frac{d}{8n}} > \sqrt{\frac{d}{32n}}\] because there are at least $k/2-d$ unseen 0's, each with probability $1/k$ under $Q$ (and by assumption $n \geq 2d$ so $\sqrt{d/(8n)} \leq 1/4$). Thus in all three cases,
$\rej_P + \err_Q \geq \sqrt{d/(32n)}$. Hence,
\[\E\bigl[\rej_P + \err_Q\mid U\leq k/2, V \leq d/2\bigr]\geq \sqrt{\frac{d}{32n}}\]
Since $U \leq k/2, V\leq d/2$ happens with probability $\geq 1/2$, we have that $\E[\rej_P + \err_Q] \geq \frac{1}{2}\sqrt{d/(32n)}$ as required. This establishes \cref{eq:ado}.
\end{proof}
We now prove our agnostic lower bound.
\begin{proof}[Proof of \Cref{ag-lower}]
Let $X=\nats$ and $C$ consist of the singleton functions that are 1 at one integer and 0 elsewhere. The VC dimension of $C$ is easily seen to be 1.

Consider first the case in which $\tilde\eta \geq \sqrt{\eta/8}$. In this case, we must construct distributions $\mu, \tmu$ and $f\in C$ such that,  $\E[\err_{\tmu}(h|_S)+\rej_\mu(S)] \geq \tilde\eta$. This is trivial: let $\mu$ be arbitrary and $\tmu(1,1)=\tilde\eta$ and $\tmu(1,0)=1-\tilde\eta$. It is easy to see that no classifier has error less than $\tilde\eta$ since $\tilde\eta \leq 1/2$.

Thus it suffices to give $\mu, \tmu$ and $f\in C$ such that, $\err_\tmu(f)=0$, $\err_\mu(f)= \eta$, and,
\begin{equation}\label{eq:toshowlower}
\E_{\substack{(\bx,\by)\sim \mu^n\\(\tbx,\tilde\by)\sim\tmu^n}}[\err_{\tmu}(h|_S)+\rej_\mu(S)] \geq \sqrt{\eta/8}.
\end{equation}
In particular, we will give a distribution over $f, \mu, \tilde{\mu}$ for which the above holds for the output $h|_S$ of any learning algorithm. By the probabilistic method, this implies that for each learning algorithm, there is at least $f, \mu, \tmu$ for which \cref{eq:toshowlower} holds.
To this end, let $k=\lfloor \sqrt{2/\eta} \rfloor$. Let $\mu$ be the distribution which has $\mu(x,0)=\eta/2$ for $x \in [k]$ and $\mu(k+1,0)=1-k \eta/2$, so $\mu$ has $y=0$ with probability 1. Let $f$ be 1 for a uniformly random $x^* \in [k]$ so $\err_\mu(f)=\eta/2$. Let $\tmu$ be the distribution where $\tmu(x, f(x))=1/k$ for $x \in [k]$, so $x$ is uniform over $[k]$ with $\err_{\tmu}(f)=0$.

Now, given the above distribution over $f, \mu, \tmu$, there is an optimal learning algorithm that minimizes $\E[\err_{\tmu}+\rej_\mu]$. Moreover, notice that the algorithm learns nothing about $\mu$ or $\tmu$ from the training data since $\mu$ is fixed as is the distribution over unlabeled examples. Thus the optimal learner, by symmetry, may be taken to make the same classification for all $x \in [k]$. Thus, consider three cases.
\begin{itemize}
    \item The algorithm predicts $h|_S(x)=\question$ for all $x \in [k]$. In this case, \[\rej_\mu \geq k\frac{\eta}{2} = \lfloor \sqrt{2/\eta}\rfloor \frac{\eta}{2} \geq \frac{1}{2}\sqrt{\eta/2}\] using the fact that $\lfloor r \rfloor \geq r/2$ for $r \geq 1$.
    \item The algorithm predicts $h|_S(x)=0$ for all $x \in [k]$. In this case, \[\err_\tmu = \frac{1}{k} \geq \sqrt{\eta/2}.\]
    \item The algorithm predicts $h|_S(x)=1$ for all $x \in [k]$. In this case, since $\eta \leq 1/2$, $k \geq 2$ and $\err_\tmu \geq 1/2$.
\end{itemize}
In all three cases, $\err_\tmu+\rej_\mu \geq \sqrt{\eta/8}$ proving the lemma.
\end{proof}

We now present the proof of our transductive lower bound.
\begin{proof}[Proof of \Cref{thm:lower-trans}]
% We will in fact show the following bound:
% \[\E[\rej_\bz + \err_\tbx] \geq K \max\left\{\sqrt{\frac{d}{n}}, \sqrt{\frac{d}{m}}\right\},\]
% which implies the lemma  (for constant $K'=K/2$) since $\max\{t, u\} \geq (t+u)/2$ for $t,u \geq 0$.

Just as in the proof of \Cref{thm:lower-gen}, let $X=\nats$ and $C$ again be the concept class of functions that have exactly $d$ 1's, which has VC dimension $d$. Again, let $P$ be the uniform distribution over $[N]$ for $N=8n$.

We will construct a distribution $\nu$ over $C$ and randomized adversary $\cA(\bx, \bz, f)$ that outputs $\tbx\in X^n$ such that, for all $L$,
\[\E[\rej_\bz + \err_\tbx] \geq \lambda,\]
where $\lambda$ is a lower bound and expectations are over $\bx \sim P^n, \bz \sim P^m$ and $f \sim \nu$. By the probabilistic method again, such a guarantee implies that for any learner $L$, there exists some $f \in C$ and deterministic adversary $\cA(\bx, \bz)$ where the above bound holds for that learner.

We will show two lower bounds that together imply the lemma. The first lower bound will follow from \Cref{thm:lower-gen} and show that,
\[\E[\rej_\bz + \err_\tbx] \geq K \sqrt{d/n},\]
where expectations are over $\bx \sim P^n, \bz \sim P^m, f \sim \nu$. Here $K$ is the constant from \Cref{thm:lower-gen}. To get this, the adversary $\cA(\bx, \bz, f)$ simply ignores the true tests $\bz$ and selects $\tbx\sim Q^m$. By linearity of expectation, for any learner, $\E[\rej_\bz]=\E[\rej_P]$ and $\E[\err_\tbx]=\E[\err_Q]$.

It remains to show a distribution $\nu$ over $C$ and adversary $A$ such that, for all learners,
\begin{equation}\label{eq:2s2}\E[\rej_\bz + \err_\tbx] \geq K \sqrt{d/m},\end{equation}
for some constant $K$ and $m<n$ (for $m\geq n$, the previous lower bound subsumes this). Let $\nu$ be the uniform distribution over those $f \in C$ that have all $d$ 1's in $[N]$, i.e., uniform over $\{f \in C:\sum_{i \in [N]}f(i)=d\}$.

Let $A\defeq \{x \in [N]: f(x)=0\}$ and $B\defeq \{x \in \nats: f(x)=1\}$ so $|A|=N-d$ and $|B|=d$.

Let $a=\lfloor \sqrt{md} \rfloor$ and $b = \lceil d/2 \rceil$, and $r = \lfloor m/(a+b) \rfloor$.
The adversary will try to construct a dataset $\tbx$ with the following properties:
\begin{itemize}
    \item $\tbx$ contains exactly $a$ distinct $\tx\in A$ and each has exactly $r$ copies. (Since $a\leq m < N-d$, this is possible.)
    \item There are exactly $b$ distinct $\tx \in B$ and each has exactly $r$ copies.
    \item The remaining $m-r(a+b)$ examples are all at $\tx=N+1$ (these are ``easy'' as the learner can just label them 0 if it chooses).
\end{itemize}

We say $x$ is \textit{seen} if $x \in \bx$ (this notation indicates $x \in \{x_i: i \in [n]\}$ did not occur in the training set) and \textit{unseen} otherwise. Now, we first observe that with probability $\geq 1/8$, the following event $E$ happens: there are at most $d-b$ seen 1's ($x_i \in B$) in the training set and there are at least $a$ distinct unseen 0's in the true test set $\bz$, i.e.,
\begin{align*}
    V_1\defeq |\{i \in [n]: x_i \in B\}| &\leq d-b\\
    V_0\defeq |\{z \in A: (z \in \bz) \wedge (z \not\in \bx)\}| & \geq a
\end{align*}
Note that $\E[V_1]= d n/N= d/8$. Markov's inequality guarantees that with probability $\geq 3/4$, $V_1 \leq d/2$ (otherwise $\E[V_1] > d/8$). Since $V_1$ is integer, this means that with probability $\geq 3/4$, $V_1 \leq \lfloor d/2 \rfloor = d-b$. Similarly, for any $i \in A$, the probability that it occurs in $\bz$ and not in $\bx$ is,
\[\left(1-\frac{1}{N}\right)^n\left(1-\left(1-\frac{1}{N}\right)^m\right) \geq \left(1-\frac{n}{N}\right)\left(1-e^{-\frac{m}{N}}\right) \geq  \frac{7}{8} \cdot \frac{15}{16}\frac{m}{N} \geq 0.8\frac{m}{N},\]
where in the above we have used the fact that $(1-t)\leq e^{-t}$ for $t > 0$ and $1-e^{-t}\geq (15/16)t$ for $t \leq 1/8$.
Hence, since $|A|=N-d$,
\[
\E[V_0] \geq (N-d)0.8\frac{m}{N} \geq \left(\frac{7}{8} N\right)0.8\frac{m}{N}=0.7m \geq 0.7a.\]
In particular, Markov's inequality implies that with probability at least 0.4, $V_0 \geq 0.5 m$ (otherwise $\E[V_0] < 0.6 (0.5 m) + 0.4 m = 0.7m$). Thus, with probability $\geq 1-1/4-0.6 \geq 1/8$.

If this event $E$ does not happen, then the adversary will take all $\tx = N+1$, making learning easy.  However, if $E$ does happen, then there must be at least $a$ unseen 0's in $\bz$ and $b$ unseen 1's and the the adversary will select $a$ random unseen 0's from $\bz$ and $b$ random unseen 1's, uniformly at random. It will repeat these examples $r$ times each, add $m-r(a+b)$ copies of $\tx=N+1$, and permute the $m$ examples.

Now that the adversary and $\nu$ have been specified, we can consider a learner $L$ that minimizes the objective $\E[\rej_\bz + \err_\tbx]$. Clearly this learner may reject $N+1 \not\in S$ as this cannot increase the objective. Now, by symmetry the learner may also be assumed to make the same classification on all $r(a+b)$ examples $\tx \in [N]$ as these examples are all unseen and indistinguishable since $B$ is uniformly random.

Case 1) If $h|_S(\tbx_i)=\question$ for all $i$ then
\[\rej_\bz = \frac{a}{m} = \frac{\lfloor \sqrt{md} \rfloor}{m} \geq  \frac{\sqrt{md}/2}{m} = \frac{1}{2}\sqrt{\frac{d}{m}},\]
using the fact that $a \geq \sqrt{md}/2$ because
$a \geq \sqrt{md}/2$ since $\lfloor t \rfloor \geq t/2$ for $t \geq 1$.

Case 2) If $h|_S(\tbx_i)=0$ for all $i$ then,
\[\err_\tbx = \frac{b r}{m} \geq \frac{b\sqrt{m/d}}{4m} = \frac{b}{4\sqrt{md}} \geq \frac{d}{8\sqrt{md}}=\frac{1}{8}\sqrt{\frac{d}{m}}\]
In the above we have used the fact $b \geq d/2$ and that $r \geq \frac{1}{4}\sqrt{m/d},$ which can be verified by noting that:
\[\frac{m}{a+b} \geq \frac{m}{2a} \geq \frac{m}{2\sqrt{md}} =\frac{1}{2}\sqrt{\frac{m}{d}}\geq 1\] and hence $r \geq \lfloor m/(a+b) \rfloor \geq \frac{1}{2}m/(a+b) \geq \frac{1}{4}\sqrt{m/d}$ again since $\lfloor t \rfloor \geq t/2$ for $t \geq 1$.

Case 3) If $h|_S(\tbx_i)=0$ for all $i$ then, since $b \leq a$
\[\err_\tbx = \frac{b}{a+b} \geq \frac{1}{2}.\]
In all three cases, we have,
\[\rej_\bz + \err_\tbx \geq \frac{1}{8}\sqrt{\frac{d}{m}}.\]
Since $E$ happens with probability $\geq 1/2$, we have,
\[\E[\rej_\bz + \err_\tbx] \geq \Pr[E]\E[\rej_\bz + \err_\tbx\mid E] \geq \frac{1}{8} \cdot  \frac{1}{8}\sqrt{\frac{d}{m}}.\]
This is what was required for \cref{eq:2s2}.
\end{proof}

\section{Tight bounds relating train and test rejections}\label{ap:tight}

We now move on to tightly relating test and training rejections. As motivation, note that if one knew $P$ and $Q$, it would be natural to take $S^*\defeq \{x\in X: Q(x) \leq P(x)/\eps\}$ for some $\eps >0$. For $x \not\in S^*$, i.e., $x \in \barS^*$, $P(x) < \eps Q(x)$. This implies that $\rej_P(S^*) = P(\barS^*) < \eps$. It is also straightforward to verify that $\err_Q(h|_{S^*}) \leq  \err_P(h)/\eps$. This means that if one can find $h$ of error $\eps^2$ on $P$, e.g., using a PAC-learner, then this gives,
\[\rej_P(S^*) + \err_Q(h|_S^*) \leq 2\eps.\]
This suggests that perhaps we could try to learn $P$ and $Q$ and approximate $S^*$. Unfortunately, this is generally impossible---one cannot even distinguish the case where $P=Q$ from the case where $P$ and $Q$ have disjoint supports with fewer than $\Omega(\sqrt{|X|})$ examples.\footnote{To see this, consider the cases where $P=Q$ are both the uniform distribution over $X$ versus the case where they are each uniform over a random partition of $X$ into two sets of equal size. By the classic \textit{birthday paradox}, with $O(\sqrt{|X|})$ samples both cases will likely lead to random disjoint sets of samples.}

While we cannot learn $S^*$ in general, these sets $S^*$ do give the tightest bounds on $\rej_Q$ in terms of $\rej_P$.
\begin{lemma}
For any $S\subseteq X$ and distributions $P,Q$ over $X$ and any $\eps\geq 0$ such that $\rej_P(S) \leq \rej_P(S^*)$,
\begin{equation}
\rej_Q(S) \leq \rej_Q(S^*).
\label{eq:tight}
\end{equation}
\end{lemma}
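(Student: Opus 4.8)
The plan is to run a standard Neyman--Pearson (``water-filling'') argument: the set $S^*\defeq\{x\in X: Q(x)\le P(x)/\eps\}$ is rigged so that its complement collects exactly the points of largest likelihood ratio $Q(x)/P(x)$, hence among all rejection regions of a given $P$-mass budget it is the one that rejects the most $Q$-mass. I would assume $\eps>0$ (the case $\eps=0$ is degenerate, with $S^*$ and the claimed inequality interpreted in the limiting sense, and is trivial). Write $R\defeq X\setminus S$ and $R^*\defeq X\setminus S^*=\{x\in X:\eps Q(x)>P(x)\}$, so that the hypothesis reads $P(R)=\rej_P(S)\le\rej_P(S^*)=P(R^*)$ and the goal reads $Q(R)=\rej_Q(S)\le\rej_Q(S^*)=Q(R^*)$.

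First I would peel off the common part $R\cap R^*$: since $Q(R)=Q(R\cap R^*)+Q(R\setminus R^*)$ and $Q(R^*)=Q(R\cap R^*)+Q(R^*\setminus R)$, the goal is equivalent to $Q(R\setminus R^*)\le Q(R^*\setminus R)$; by the same decomposition the hypothesis is equivalent to $P(R\setminus R^*)\le P(R^*\setminus R)$. Next I would record the two pointwise facts that drive the comparison: every $x\in R\setminus R^*$ lies in $S^*$, so $\eps Q(x)\le P(x)$, and every $x\in R^*\setminus R$ lies in $R^*$, so $P(x)<\eps Q(x)$. Summing each over the relevant (countable) set gives $\eps\, Q(R\setminus R^*)\le P(R\setminus R^*)$ and $P(R^*\setminus R)\le\eps\, Q(R^*\setminus R)$.

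Finally I would chain the three inequalities, $\eps\, Q(R\setminus R^*)\le P(R\setminus R^*)\le P(R^*\setminus R)\le\eps\, Q(R^*\setminus R)$, and divide by $\eps>0$ to obtain $Q(R\setminus R^*)\le Q(R^*\setminus R)$, which by the first reduction is exactly $\rej_Q(S)\le\rej_Q(S^*)$. The argument is essentially routine; the only points that need a little care are the bookkeeping around the common intersection $R\cap R^*$ (so as not to compare the wrong quantities) and the boundary convention at $\eps=0$. I do not anticipate a genuine obstacle.
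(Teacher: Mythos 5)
Your proof is correct and relies on the same Neyman--Pearson idea as the paper: the sign of $\eps Q(x)-P(x)$ is determined by membership in $S^*$, so $\bar{S}^*$ is the rejection region that maximizes rejected $Q$-mass for a given $P$-mass budget. The paper states it more compactly by noting directly that $\sum_{x\notin S^*}\bigl(\eps Q(x)-P(x)\bigr)\geq\sum_{x\notin S}\bigl(\eps Q(x)-P(x)\bigr)$ and then rearranging, whereas you split off the common part $R\cap R^*$ and chain three inequalities, but the two arguments are the same modulo bookkeeping (and both implicitly need $\eps>0$ for the final division, as you note).
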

Note that the $\rej_Q(S) \leq \rej_P(S) + |P-Q|_\TV$ bound can be much looser than the bound in the above lemma. For example, $|P-Q|_{\TV} = 0.91$ yet $\rej_Q(S^*) = 0.1$ for $X=\{0,1,\ldots, 100\}$, $P$ uniform over $\{1, \ldots, 100\}$, $Q$ uniform over $\{0,1,\ldots, 9\}$, $\rej_P(S)=0$, and $\eps=0.1$ (since  $S^*=\{1,2,\ldots, 100\}$ and only $0 \not\in S^*$). One can think of classifying images of a mushroom as ``edible'' or not based on training data of 100 species of mushrooms, with test data including one new species.
\begin{proof}
Since $\eps Q(x)-P(x)>0$ iff $x \not\in S^*$,
\begin{align*}
    \eps \rej_Q(S^*)-\rej_P(S^*)  &= \sum_{x \not\in S^*} \eps Q(x)-P(x)  \\
    &\geq\sum_{x \not\in S} \eps Q(x)-P(x) =  \eps \rej_Q(S) -\rej_P(S) \\
\Rightarrow ~~\eps(\rej_Q(S^*) -\rej_Q(S)) &\geq \rej_P(S^*) - \rej_P(S) \geq 0.
\end{align*}
\end{proof}

\end{document}